\documentclass{article}

\usepackage{microtype}
\usepackage{graphicx}
\usepackage{subcaption}
\usepackage{booktabs} % for professional tables

\usepackage{hyperref}

\usepackage[preprint]{icml2026}

\usepackage{amsmath}
\usepackage{amssymb}
\usepackage{mathtools}
\usepackage{amsthm}
\usepackage[capitalize,noabbrev]{cleveref}

\usepackage{longtable}
\usepackage{etoc}
\usepackage{float}
\usepackage{tcolorbox}
\usepackage{multirow}
\usepackage[table]{xcolor}

\usepackage{listings}
\usepackage[T1]{fontenc}
\definecolor{keywordcolor}{RGB}{0,0,255}
\definecolor{stringcolor}{RGB}{163,21,21}
\definecolor{commentcolor}{RGB}{0,128,0}
\newtheorem{applemma}{Lemma}

\newtcolorbox{textcase}{
    sharp corners,
    colback=white,
    colframe=black,
    boxrule=0.8pt,
    before upper={\small\ttfamily\setlength{\parskip}{1em}\setlength{\parindent}{0pt}}
}

\definecolor{tmask}{HTML}{696969} % DimGray
\definecolor{tremask}{HTML}{B31B1B} % Cornell
\definecolor{tabsorb}{HTML}{228B22} % ForestGreen
\definecolor{tuniform}{HTML}{4169E1} % RoyalBlue

\newcommand{\mask}{\textcolor{tmask}{[MASK] }}
\newcommand{\tabsorb}[1]{\textcolor{tabsorb}{\textbf{#1}}}
\newcommand{\tremask}[1]{\textcolor{tremask}{\textbf{#1}}}
\newcommand{\tuniform}[1]{\textcolor{tuniform}{\textbf{#1}}}

\newif\iftwocolumn
\twocolumntrue 

\makeatletter
\newcommand\wrapfill{\par
  \ifx\parshape\WF@fudgeparshape
  \nobreak
  \vskip-\baselineskip
  \vskip\c@WF@wrappedlines\baselineskip
  \allowbreak
  \WFclear
  \fi
}
\makeatother

\theoremstyle{plain}
\newtheorem{theorem}{Theorem}[section]

\newtheorem{lemma}[theorem]{Lemma}

\theoremstyle{definition}
\newtheorem{definition}[theorem]{Definition}

\theoremstyle{remark}

\usepackage[textsize=tiny]{todonotes}

\icmltitlerunning{Balancing Understanding and Generation in Discrete Diffusion Models}

\begin{document}

\twocolumn[
  \icmltitle{Balancing Understanding and Generation in Discrete Diffusion Models}

  % It is OKAY to include author information, even for blind submissions: the
  % style file will automatically remove it for you unless you've provided
  % the [accepted] option to the icml2026 package.

  % List of affiliations: The first argument should be a (short) identifier you
  % will use later to specify author affiliations Academic affiliations
  % should list Department, University, City, Region, Country Industry
  % affiliations should list Company, City, Region, Country

  % You can specify symbols, otherwise they are numbered in order. Ideally, you
  % should not use this facility. Affiliations will be numbered in order of
  % appearance and this is the preferred way.
  \icmlsetsymbol{equal}{*}

  \begin{icmlauthorlist}
    \icmlauthor{Yue Liu}{UCAS}
    \icmlauthor{Yuzhong Zhao}{UCAS}
    \icmlauthor{Zheyong Xie}{Xiaohongshu Inc.}
    \icmlauthor{Qixiang Ye}{UCAS} % to add later or add to thanks
    % \icmlauthor{Firstname6 Lastname6}{sch,yyy,comp}
    \icmlauthor{Jianbin Jiao}{UCAS}
    \icmlauthor{Yao Hu}{Xiaohongshu Inc.}
    %\icmlauthor{}{sch}
    \icmlauthor{Shaosheng Cao}{Xiaohongshu Inc.}
     \icmlauthor{Yunfan Liu}{UCAS}
    %\icmlauthor{}{sch}
    %\icmlauthor{}{sch}
  \end{icmlauthorlist}

  % \icmlaffiliation{yyy}{Department of XXX, University of YYY, Location, Country}
  \icmlaffiliation{Xiaohongshu Inc.}{Xiaohongshu Inc.}
  \icmlaffiliation{UCAS}{UCAS}

  \icmlcorrespondingauthor{Yunfan Liu}{liuyunfan@ucas.ac.cn}
  \icmlcorrespondingauthor{Shaosheng Cao}{caoshaosheng@xiaohongshu.com}

  % You may provide any keywords that you find helpful for describing your
  % paper; these are used to populate the "keywords" metadata in the PDF but
  % will not be shown in the document
  \icmlkeywords{Machine Learning, ICML}

  \vskip 0.3in

% {

% \begin{center}
%     \centering
%     \captionsetup{type=figure}
%     \includegraphics[width=1\linewidth]{figures/fig-intro.pdf}
%     % \vspace{-26px}
%     \captionof{figure}{
%     \texttt{Left}: XDLM combines the transition matrices of UDLM (\tuniform{$u$}) and MDLM (\tabsorb{$m$}) to achieve a favorable trade-off between the two methods. \texttt{[NORMAL]} denotes normal tokens, while \texttt{[MASK]} represents the mask token.
%     \texttt{Right}: The trade-off between understanding capability (zero-shot perplexity; lower is better) and generation capability (generation perplexity in 32 sampling steps; lower is better). The proposed XDLM with a mixing ratio of $k=0.1$ achieves the optimal balance, labeled as the `Sweet Spot'.
%     }
%     \label{fig:teaser}
% \end{center}%
% % \vskip 0.1in

% }

]

% this must go after the closing bracket ] following \twocolumn[ ...

% This command actually creates the footnote in the first column listing the
% affiliations and the copyright notice. The command takes one argument, which
% is text to display at the start of the footnote. The \icmlEqualContribution
% command is standard text for equal contribution. Remove it (just {}) if you
% do not need this facility.

% Use ONE of the following lines. DO NOT remove the command.
% If you have no special notice, KEEP empty braces:
\printAffiliationsAndNotice{}  % no special notice (required even if empty)
% Or, if applicable, use the standard equal contribution text:
% \printAffiliationsAndNotice{\icmlEqualContribution}

\etocdepthtag{main}

\begin{abstract}
In discrete generative modeling, two dominant paradigms
demonstrate divergent capabilities:
Masked Diffusion Language Models (MDLM) excel at semantic understanding and zero-shot generalization, whereas Uniform-noise Diffusion Language Models (UDLM) achieve strong few-step generation quality, yet neither attains balanced performance across both dimensions.
To address this, we propose XDLM, which bridges the two paradigms via a stationary noise kernel.
XDLM offers two key contributions: (1) it provides a principled theoretical unification of MDLM and UDLM, recovering each paradigm as a special case; and (2) an alleviated memory bottleneck enabled by an algebraic simplification of the posterior probabilities.
Experiments demonstrate that XDLM advances the Pareto frontier between understanding capability and generation quality.
Quantitatively, XDLM surpasses UDLM by 5.4 points on zero-shot text benchmarks and outperforms MDLM in few-step image generation (FID 54.1 vs. 80.8). 
When scaled to tune an 8B-parameter large language model, XDLM achieves 15.0 MBPP in just 32 steps, effectively doubling the baseline performance.
Finally, analysis of training dynamics reveals XDLM’s superior potential for long-term scaling.
Code is available at  
\href{https://github.com/MzeroMiko/XDLM}{this http URL}.
\end{abstract}

\section{Introduction}
\begin{figure*}[ht]
    \centering
    \includegraphics[width=1\linewidth]{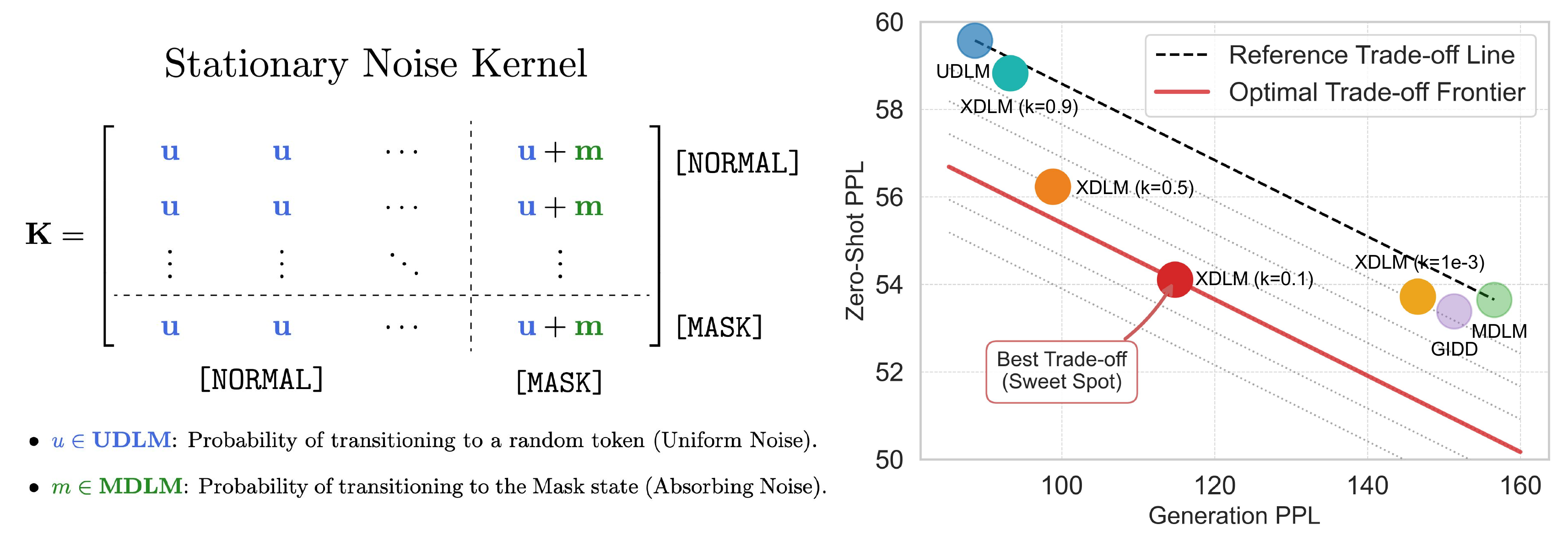}
    \caption{
    \texttt{Left}: XDLM combines the noise kernel of UDLM (\tuniform{$\mathbf{u}$}) and MDLM (\tabsorb{$\mathbf{m}$}) to achieve a favorable trade-off between the two methods. \texttt{[NORMAL]} denotes normal tokens, while \texttt{[MASK]} represents the mask token.
    \texttt{Right}: The trade-off between understanding capability (zero-shot perplexity; lower is better) and generation capability (generation perplexity in 32 sampling steps; lower is better). The proposed XDLM with a mixing ratio of $k=0.1$ achieves the optimal balance, labeled as the `Sweet Spot'.
    }
    \label{fig:intro}
\end{figure*}

Diffusion models have achieved remarkable success in continuous domains, particularly in image and audio generation~\cite{ho2020denoising, dhariwal2021diffusion, 
rombach2022high, kong2020diffwave}. 
Inspired by this potential, Discrete Denoising Diffusion Probabilistic Model (D3PM) has emerged as a promising paradigm shift to the discrete state space~\cite{Austin2021d3pm}.
Notably, these Discrete Diffusion Models (DDMs) are now demonstrating strong performance in language modeling, a field long dominated by auto-regressive (AR) architectures.

Within this landscape, DDM studies have diverged into two distinct branches: Masked Diffusion Language Models (MDLMs)~\cite{Sahoo2024mdlm} and Uniform-noise Diffusion Language Models (UDLMs)~\cite{Schiff2024udlm}.
% ~\textcolor{blue}{[citation]}.
%
While MDLMs achieve superior performance in likelihood modeling and zero-shot generalization, they often struggle to generate coherent, contextually consistent outputs with limited inference steps.
Conversely, UDLMs excel at low-step generation but frequently lag behind MDLMs when the number of inference steps increases. 
Despite their individual merits, neither achieves balanced performance across these dimensions.

This imbalance carries significant practical implications. 
In domains like image synthesis, the required inference steps are typically far fewer than the total number of patches. In these few-step regimes, UDLM (and its hidden Gaussian counterparts~\cite{Schiff2024udlm, Sahoo2025duo}) consistently outperforms the masked paradigm.
Concretely, our experiments demonstrate that UDLM outperforms MDLM by 17.6\% when generating ImageNet-1K images in an 8-step regime.
Consequently, balancing the robust few-step generation of UDLMs with the superior semantic understanding of MDLMs remains a critical open challenge.

To this end, we propose miXed Diffusion Language Modeling (XDLM), a balanced theoretical formulation that establishes a rigorous bridge between uniform and masked noise distributions.
Unlike GIDD~\cite{von2025gidd}, which relies on a computationally expensive time-inhomogeneous process where noise characteristics shift at every timestep, XDLM enforces a stationary noise kernel where incremental noise structurally matches the marginal noise, {as shown in Fig.~\ref{fig:intro} (left).} 
This consistency allows our training objective to be efficiently factorized into static constants and dynamic schedules, avoiding the complex re-computation of noise distributions required by GIDD.
{In addition}, we prove that MDLM and UDLM are limiting cases of our formulation. {Furthermore, XDLM enables a memory-efficient implementation by algebraically simplifying posterior calculations, allowing XDLM to scale to large vocabulary sizes without prohibitive computational costs.}

Experimental results demonstrate that XDLM consistently achieves superior performance across diverse modalities, validating the efficacy of our balanced discrete diffusion approach. 
As illustrated in Fig.~\ref{fig:intro} (right), XDLM advances the Pareto frontier between understanding capability and generative quality. Notably, as the interpolation parameters shift toward their extremes, XDLM matches the performance of UDLM and MDLM respectively, confirming the model's theoretical foundation.
Concretely, in zero-shot language benchmarks, XDLM surpasses UDLM by 5.4 points in averaged metrics and trails MDLM by only 0.45. 
In conditional image generation, XDLM excels in both efficiency and quality: it significantly outperforms MDLM in few-step generation (reducing FID from 80.8 to 54.1 at 4 steps) and ranks first at 16 steps, surpassing UDLM by 0.4 points.
Furthermore, large-scale continual pretraining on 8B-parameter LLMs~\cite{nie2025llada} yields an MBPP score of 15.0 in only 32 sampling steps, improving upon the vanilla LLaDA baseline by over 120\% (15.0 \textit{vs.} 6.8).
Finally, analysis of training dynamics reveals a distinct performance crossover: while masked baselines converge quickly but plateau early, XDLM sustains steady improvement throughout training, demonstrating superior long-term scaling.

\section{Preliminary}
Discrete Diffusion Models (DDMs) are latent generative models defined by two Markov processes: a \textit{forward} process and a \textit{reverse} process.
%
% Without loss of generality, 
Considering a single categorical variable $\mathbf{x}_0$, represented as a one-hot vector in $\{0, 1\}^N$, where $N = |\mathcal{V}|$ denotes the vocabulary size.
Let $\mathbf{z}_t$ represent the latent state at time $t \in [0, 1]$, with $\mathbf{z}_0 \coloneq \mathbf{x}_0$. 
The forward process $q$ progressively corrupts the data via transition kernels $q(\mathbf{z}_t \mid \mathbf{z}_s)$ for $s < t$, and the backward process $p_\theta(\mathbf{z}_s \mid \mathbf{z}_t)$ parameterized by $\theta$ iteratively denoises the latent states to recover the clean data $\mathbf{x}_0$.

\paragraph{Forward Process}
The forward transition probabilities are governed by row-stochastic matrix $\mathbf{Q}_{t|s} \in \mathbb{R}^{N \times N}$:
{
\begin{align}
    q(\mathbf{z}_t \mid \mathbf{z}_s) = \text{Cat}(\mathbf{z}_t; \mathbf{Q}_{t|s}^\top \mathbf{z}_s),
\end{align}
}
where $\mathbf{z}_t$ and $\mathbf{z}_s$ are one-hot vectors due to the discreteness of the latent states,
{
and $\text{Cat}(\mathbf{x}; \mathbf{p})$ is a categorical distribution.
}
Common structural choices for $\mathbf{Q}_{t|s}$ include uniform transitions (UDLM~\cite{Schiff2024udlm}) and absorbing-state transitions (MDLM~\cite{Sahoo2024mdlm}).

\paragraph{Reverse Process}
Using Bayes' rule and the Markov property, the posterior probability of $\mathbf{z}_s$ given $\mathbf{z}_t$ and $\mathbf{x}_0$ can be expressed as~\cite{Austin2021d3pm}:
\begin{align}
\label{eq:DDM_rev_trans}
    q(\mathbf{z}_s \mid \mathbf{z}_t, \mathbf{x}_0)
    =
    \frac{(\mathbf{Q}_{t|s} \mathbf{z}_t) \odot (\mathbf{Q}_{s|0}^\top \mathbf{x}_0)}{\mathbf{z}_t^\top \mathbf{Q}_{t|0}^\top \mathbf{x}_0}.
\end{align}
In literature, this reverse transition is approximated using the $x_0$-parameterization~\cite{Austin2021d3pm, Sahoo2024mdlm, Schiff2024udlm}, where a neural network $f_\theta$ is adopted to predict the distribution of clean data $\tilde{\mathbf{x}}_0 = f_\theta(\mathbf{z}_t)$.
Thus, the parameterized reverse transition can be written as $p_\theta(\mathbf{z}_s \mid \mathbf{z}_t) = q(\mathbf{z}_s \mid \mathbf{z}_t, \tilde{\mathbf{x}}_0)$.

\paragraph{Training Objective}
The variational lower bound (VLB, ELBO) is optimized during training, which can be decomposed into three components: prior, diffusion, and reconstruction terms. 
Mathematically, define $s = (i-1)/T$ and $t = i/T$, the objective $\mathcal{L}_{\text{vb}}$ can be formulated as:
\begin{align}
\mathcal{L}_{\text{vb}} 
&= 
\mathbb{E}_{q(\mathbf{x}_0)} \Bigg[ \underbrace{D_{\text{KL}}(q(\mathbf{z}_{1} \mid \mathbf{x}_0) \parallel p(\mathbf{z}_{1}))}_{\mathcal{L}_\text{prior}} \nonumber 
\\&\quad + \sum_{i=2}^{T} \underbrace{
\mathbb{E}_{q(\mathbf{z}_{t}|\mathbf{x}_0)} 
[D_{\text{KL}}(q(\mathbf{z}_{s} \mid \mathbf{z}_{t}, \mathbf{x}_0) \parallel p_\theta(\mathbf{z}_{s}\mid \mathbf{z}_{t}))]}_{\mathcal{L}_\text{diffusion}} \nonumber \\
&\quad - \underbrace{\mathbb{E}_{q(\mathbf{z}_{1/T} \mid \mathbf{x}_0)} [\log p_\theta(\mathbf{x}_0 \mid \mathbf{z}_{1/T})]}_{\mathcal{L}_\text{recons}} \Bigg].
\end{align}
In the continuous time limit ($T \to \infty$), the prior and reconstruction terms reduce to 0, simplifying the objective to the diffusion term~\cite{Schiff2024udlm}:
\begin{align}\label{eq:train_obj}
%\lim_{T \to \infty}\!
\mathcal{L}_\text{vb} 
% \approx \mathcal{L}_\text{diffusion} 
= \mathbb{E}_{t \sim \mathcal{U}(0,1)} 
\left[ T D_{\text{KL}}(q(\mathbf{z}_{s}\!\mid\! \mathbf{z}_{t}, \mathbf{x}_0)\parallel p_\theta(\mathbf{z}_{s}\!\mid\!\mathbf{z}_t)) \right].
\end{align}

\section{miXed Diffusion Language Modeling via Stationary Noise Kernels}
This section details the proposed miXed Diffusion Language Modeling (XDLM) framework. It begins by introducing the forward process with a stationary noise kernel in Sec.~\ref{sec:forward_pass}, and then derives a scalar formulation for efficient sampling and training in Sec.~\ref{sec:efficient_sampling}. Finally, Sec.~\ref{sec:relation} establishes that UDLM and MDLM can be viewed as limiting cases of XDLM under particular settings of the hybrid parameter.

\subsection{The Forward Process with Stationary Kernels}
\label{sec:forward_pass}

The forward diffusion process of DDMs from time $s$ to $t$ is governed by a transition matrix $\mathbf{Q}_{t|s}$, defined as a convex combination of the signal-preserving identity matrix $\mathbf{I}$ and a stationary noise kernel $\mathbf{K}$:
\begin{equation}\label{eq:form_initial_Q_ts}
    \mathbf{Q}_{t|s} = \alpha_{t|s} \mathbf{I} + \beta_{t|s} \mathbf{K},
\end{equation}
where $\alpha_{t|s}, \beta_{t|s} \in [0, 1]$ represent a scalar schedule satisfying $\alpha_{t|s} + \beta_{t|s} = 1$.

A critical design choice in our method is the stationarity of $\mathbf{K}$.
Unlike GIDD~\cite{von2025gidd}, which utilizes time-dependent mixing distributions that inherently couple the noise structure with the diffusion timeline, we enforce $\mathbf{K}$ to remain invariant across all time steps.
This stationarity decouples the noise characteristics from the scheduling dynamics, ensuring that the diffusion process follows a geometrically consistent trajectory toward a fixed target distribution, which aligns with the optimal transport path proposed in the Flow Matching~\cite{lipman2022flow}.

Under this constraint, the noise kernel $\mathbf{K}$ must map every input state toward a target noise distribution $\boldsymbol{\pi}$, regardless of the starting state. This requirement is formalized next.

\begin{lemma}[Construction of Mixing Kernel]
\label{lemma:K}
Let $\boldsymbol{\pi} \in \mathcal{R}^N$ be a stationary target distribution defined as a mixture of the uniform distribution $\mathbf{u}$ and point-masses on special tokens $i \in \mathcal{S}$. 
If the noise kernel $\mathbf{K}$ satisfies the property of instantaneous mixing (i.e., convergence to $\boldsymbol{\pi}$ in a single step), it admits the decomposition:
\begin{equation}
    \label{eq:form_K}
    \mathbf{K} = \frac{k}{N}\mathbf{J} + \sum_{i \in \mathcal{S}} \mu_i \mathbf{M}_i,
\end{equation}
where $\mathbf{J}$ is the all-ones matrix, $\mathbf{M}_i$ is the absorbing matrix for state $i$, and the weights satisfy $k + \sum \mu_i = 1$.
\end{lemma}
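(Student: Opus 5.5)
We read ``instantaneous mixing'' as the requirement that one application of $\mathbf{K}$ sends every state to the target. Since the forward process is $q(\mathbf{z}_t\mid\mathbf{z}_s)=\text{Cat}(\mathbf{z}_t;\mathbf{Q}_{t|s}^\top\mathbf{z}_s)$, the row of $\mathbf{K}$ indexed by state $j$ is the distribution obtained after one noise step from $j$. Instantaneous mixing therefore says that $\mathbf{K}^\top \mathbf{e}_j = \boldsymbol{\pi}$ for every one-hot $\mathbf{e}_j$, i.e.\ every row of $\mathbf{K}$ equals $\boldsymbol{\pi}^\top$. Equivalently, $\mathbf{K}=\mathbf{1}\boldsymbol{\pi}^\top$, which is a rank-one matrix whose rows all coincide with the target. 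The plan is to establish this characterization first and then expand $\boldsymbol{\pi}$ in its assumed mixture form.

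\textbf{Step 1: $\mathbf{K}=\mathbf{1}\boldsymbol{\pi}^\top$.} For any probability vector $\mathbf{p}$ we have $\mathbf{K}^\top\mathbf{p}=\sum_j p_j \mathbf{K}^\top\mathbf{e}_j=\boldsymbol{\pi}$, so $\mathbf{K}$ reaches $\boldsymbol{\pi}$ in one step from any input, not only from one-hot states. Applying this to each basis vector $\mathbf{e}_j$ pins down every row, giving $\mathbf{K}=\mathbf{1}\boldsymbol{\pi}^\top$. Two consistency checks follow at once: $\mathbf{K}$ is row-stochastic because $\boldsymbol{\pi}$ sums to one, and $\boldsymbol{\pi}^\top\mathbf{K}=\boldsymbol{\pi}^\top$, so $\boldsymbol{\pi}$ is indeed stationary.

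\textbf{Step 2: substitute the mixture.} By hypothesis $\boldsymbol{\pi}=k\,\mathbf{u}+\sum_{i\in\mathcal{S}}\mu_i\mathbf{e}_i$, where $\mathbf{u}=\frac1N\mathbf{1}$, the weights are nonnegative, and $k+\sum_i\mu_i=1$ because $\boldsymbol{\pi}$ is a probability vector. By linearity of the outer product,
\[
\mathbf{K}=\mathbf{1}\boldsymbol{\pi}^\top=\frac{k}{N}\mathbf{1}\mathbf{1}^\top+\sum_{i\in\mathcal{S}}\mu_i\,\mathbf{1}\mathbf{e}_i^\top .
\]
Here $\mathbf{1}\mathbf{1}^\top=\mathbf{J}$, the all-ones matrix. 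The matrix $\mathbf{M}_i:=\mathbf{1}\mathbf{e}_i^\top$ has ones in column $i$ and zeros elsewhere, so it sends every state to $i$. This is exactly the absorbing matrix for state $i$, as in the MDLM kernel with $i$ the mask token. The weight constraint carries over directly, which yields \eqref{eq:form_K}.

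\textbf{Expected obstacle.} The algebra is routine; the real difficulty is formalizing the informal hypothesis. Instantaneous mixing could instead be read as requiring $\mathbf{K}^\top\mathbf{p}=\boldsymbol{\pi}$ only for data-supported $\mathbf{p}$, or as requiring only that $\boldsymbol{\pi}$ be stationary. The stationarity-only reading is too weak, since any $\mathbf{K}$ with $\boldsymbol{\pi}^\top\mathbf{K}=\boldsymbol{\pi}^\top$ would qualify. I would therefore state explicitly that the assumption is convergence from every starting state, which is the phrase ``regardless of the starting state'' in the preceding text. Under that reading Step 1 is forced.

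I would also remark on how the $\mathbf{I}$ and $\mathbf{K}$ parts combine. Because $\mathbf{K}^2=\mathbf{1}\boldsymbol{\pi}^\top\mathbf{1}\boldsymbol{\pi}^\top=\mathbf{K}$, the kernel is idempotent. Hence products $\mathbf{Q}_{t|s}\mathbf{Q}_{s|r}$ remain of the form $\alpha\mathbf{I}+\beta\mathbf{K}$, which justifies calling $\mathbf{K}$ stationary across time.
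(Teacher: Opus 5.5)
Your proof is correct and matches the paper's argument: instantaneous mixing forces every row of $\mathbf{K}$ to equal $\boldsymbol{\pi}^\top$, so $\mathbf{K}=\mathbf{1}\boldsymbol{\pi}^\top$, and expanding the mixture gives the identifications $\mathbf{1}\mathbf{u}^\top=N^{-1}\mathbf{J}$ and $\mathbf{1}\mathbf{e}_i^\top=\mathbf{M}_i$. Two of your points go beyond the paper and are correct: you state explicitly which reading of ``instantaneous mixing'' you assume (convergence from every starting state), and you note that $\mathbf{K}^2=\mathbf{K}$, which keeps the $\alpha\mathbf{I}+\beta\mathbf{K}$ form closed under composition.
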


We specialize Lemma~\ref{lemma:K} (derivation in Appendix~\ref{app:forward_process}) to a standard MDLM setting with a single mask token, represented by the one-hot vector $\mathbf{m}$. 
By letting $k$ and $\mu$ denote the weights of the uniform and masking components, respectively, 
the resulting noise kernel matrix naturally interpolates between the uniform noise and the mask state:

\begin{equation}
    \label{eq:form_Q_ts}
    \mathbf{K} = \frac{k}{N}\mathbf{J} + \mu\mathbf{M}.
\end{equation}

{
Fig.~\ref{fig:intro} illustrates the transition dynamics within the noise kernel $K$, driven by uniform noise and absorbing noise (mask state), labeled $\tuniform{u}$ and $\tabsorb{m}$, respectively.
}

\subsection{Efficient Sampling and Training via Scalar Formulation}
\label{sec:efficient_sampling}

By substituting the definitions of the noise kernel $\mathbf{K}$ (Eq.~\ref{eq:form_K}) and transition matrix $\mathbf{Q}_{t|s}$ (Eq.~\ref{eq:form_initial_Q_ts}) into Eq.~\ref{eq:DDM_rev_trans} and Eq.~\ref{eq:train_obj}, the detailed formulations for the posterior $q(\mathbf{z}_s \mid \mathbf{z}_t, \mathbf{x})$ and KL divergence can be readily obtained. 
However, direct evaluation of these expressions via matrix operations is prohibitively expensive in terms of both memory and computation, particularly for large vocabularies.
In this section, we derive that the posterior and KL divergence can be reformulated into an equivalent scalar form with Lemma~\ref{lemma:posterior} and Lemma~\ref{lemma:training}. By further incorporating the asymptotic limit behavior defined in Lemma~\ref{lemma:training_limit}, we establish a tractable and numerically stable training objective that bypasses the expensive explicit computation of large vocabulary size.

Concretely, let $\mathbf{e}$ denote an arbitrary one-hot basis vector, the posterior probability of transitioning to state $\mathbf{e}$ can thus be written as
\begin{align}
\label{eq:DDM_rev_trans_2e}
    & q(\mathbf{z_s} = \mathbf{e}|\mathbf{z_t}, \mathbf{x}) 
    \nonumber \\ &= 
    \mathbf{e}^\top
    \frac{
    (\alpha_{t|s} \mathbf{z_t} + \beta_{t|s} K \mathbf{z_t})
        \odot (\alpha_{s} \mathbf{x} + \beta_{s} K^\top \mathbf{x})
    }{\mathbf{z_t}^\top (\alpha_t \mathbf{x} + \beta_t K^\top \mathbf{x})},
\end{align}
We observe that Eq.~\ref{eq:DDM_rev_trans_2e} can be reformulated into an equivalent scalar form by defining a set of helper functions, which can significantly reduce the computational complexity for both the posterior and the KL divergence.

\begin{definition}[Scalar Primitives]
    Let $p_{\mathbf{x}, \mathbf{e}} \coloneq \mathbf{e}^\top \mathbf{x}$ represent the probability mass of distribution $\mathbf{x}$ at token $\mathbf{e}$. We define the noise rate function $r(\mathbf{e})$ and the forward diffusion map $f_t(\mathbf{x}, \mathbf{e})$ as:
    \begin{align}
        r(\mathbf{e}) &= \frac{k}{N} + \mu \delta_{\mathbf{e}, \mathbf{m}}
        \\
        f_t(\mathbf{x},\mathbf{e}) 
        &= \alpha_t p_{\mathbf{x}, \mathbf{e}} + \beta_t r(\mathbf{e})
    \end{align}
\end{definition}
These helper functions yield the following efficient expressions for the posterior and KL divergence, with full derivations deferred to Appendix~\ref{app:scalar_fomulation}.

\begin{lemma}[Scalar Posterior]\label{lemma:posterior}
The posterior probability of transitioning to state $\mathbf{e}$ can be formulated as:
\begin{align}
    q(\mathbf{z}_s = \mathbf{e} \mid \mathbf{z}_t, \mathbf{x})
    =
    \frac{
        f_s(\mathbf{x}, \mathbf{e})
        f_{t|s}(\mathbf{e}, \mathbf{z}_t)
    }
    {f_t(\mathbf{x},\mathbf{z}_t)}.
\end{align}
The model's reverse transition $p_\theta$ follows the same form by substituting $\mathbf{x}$ with the predicted distribution $\tilde{\mathbf{x}}_0$.
\end{lemma}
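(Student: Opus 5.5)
We start from the matrix form of the posterior in Eq.~\ref{eq:DDM_rev_trans_2e} and reduce each factor to the scalar primitives. There are three factors: the numerator's second factor (the marginal), the numerator's first factor (the incremental kernel applied to $\mathbf{z}_t$), and the denominator. We handle them in that order, using only two facts about $\mathbf{K}=\frac{k}{N}\mathbf{J}+\mu\mathbf{M}$ from Eq.~\ref{eq:form_Q_ts}, with $\mathbf{M}=\mathbf{1}\mathbf{m}^\top$ the absorbing matrix. First, for any probability vector $\mathbf{x}$ we have $\mathbf{1}^\top\mathbf{x}=1$, so
\begin{equation*}
\mathbf{K}^\top\mathbf{x}=\tfrac{k}{N}\mathbf{1}+\mu\mathbf{m}=:\mathbf{r}.
\end{equation*}
This holds irrespective of $\mathbf{x}$, which is the instantaneous-mixing property of Lemma~\ref{lemma:K}, and its $\mathbf{e}$-th entry is $r(\mathbf{e})$. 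Second, for one-hot $\mathbf{e},\mathbf{z}$, the bilinear form $\mathbf{e}^\top\mathbf{K}\mathbf{z}=\mathbf{z}^\top\mathbf{K}^\top\mathbf{e}$ equals $r(\mathbf{z})$.

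Next we justify writing the marginal as $\mathbf{Q}_{s|0}^\top\mathbf{x}=\alpha_s\mathbf{x}+\beta_s\mathbf{K}^\top\mathbf{x}$, a form the paper already uses. Because $\mathbf{K}$ is idempotent ($\mathbf{K}^2=\mathbf{K}$, since $\mathbf{J}\mathbf{K}=\mathbf{J}$ and $\mathbf{M}\mathbf{K}=\mathbf{M}$ given that $\mathbf{K}$ is row-stochastic), products $\prod(\alpha_i\mathbf{I}+\beta_i\mathbf{K})$ stay in the family $\alpha\mathbf{I}+\beta\mathbf{K}$ with $\alpha=\prod\alpha_i$. So the marginal kernel $\mathbf{Q}_{t|0}$ has this form with $\alpha_t=\alpha_{t|s}\alpha_s$. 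This closure is the one structural point worth stating carefully, since it is what makes the stationary kernel compatible with the time-$0$ marginal. With it, the second numerator factor contributes $\mathbf{e}^\top(\alpha_s\mathbf{x}+\beta_s\mathbf{r})=\alpha_s p_{\mathbf{x},\mathbf{e}}+\beta_s r(\mathbf{e})=f_s(\mathbf{x},\mathbf{e})$. The denominator contributes $\mathbf{z}_t^\top(\alpha_t\mathbf{x}+\beta_t\mathbf{r})=f_t(\mathbf{x},\mathbf{z}_t)$.

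The first numerator factor is $\mathbf{e}^\top(\alpha_{t|s}\mathbf{z}_t+\beta_{t|s}\mathbf{K}\mathbf{z}_t)$. Here $\mathbf{K}$ acts on the right, not transposed, so we use the bilinear identity to rewrite it as $\alpha_{t|s}\,\mathbf{z}_t^\top\mathbf{e}+\beta_{t|s}\,\mathbf{z}_t^\top\mathbf{K}^\top\mathbf{e}$. Reading $\mathbf{e}$ as the ``source'' distribution, this is exactly $f_{t|s}(\mathbf{e},\mathbf{z}_t)=\alpha_{t|s}p_{\mathbf{e},\mathbf{z}_t}+\beta_{t|s}r(\mathbf{z}_t)$. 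The step needing care is this index swap: we must evaluate the noise rate at $\mathbf{z}_t$, not at $\mathbf{e}$, and use the fact that $\mathbf{e}^\top\mathbf{K}\mathbf{z}_t=\mathbf{K}_{\mathbf{e},\mathbf{z}_t}$ depends only on the column $\mathbf{z}_t$. We confirm this explicitly from $\mathbf{K}_{ij}=\frac{k}{N}+\mu\delta_{j,\mathbf{m}}$.

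Because the Hadamard product followed by $\mathbf{e}^\top$ is just a product of the two $\mathbf{e}$-entries, combining the three factors gives the stated formula. For $p_\theta$, we note that every step used only $\mathbf{1}^\top\mathbf{x}=1$ and linearity in $\mathbf{x}$. Both hold for the predicted distribution $\tilde{\mathbf{x}}_0$, so the same expression holds with $\mathbf{x}$ replaced by $\tilde{\mathbf{x}}_0$.
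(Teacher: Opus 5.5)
Your proof is correct and is essentially the paper's computation. The paper expands the Hadamard product into four terms and regroups them as $\alpha_{t|s} f_s(\mathbf{x},\mathbf{z}_t)\mathbf{z}_t+\beta_{t|s} r(\mathbf{z}_t)\bigl(\alpha_s\mathbf{x}+\beta_s(\tfrac{k}{N}\mathbf{1}+\mu\mathbf{m})\bigr)$ before taking the $\mathbf{e}$-entry. You instead read off the $\mathbf{e}$-entry of each factor directly, using the same two facts $\mathbf{K}^\top\mathbf{x}=\mathbf{r}$ and $\mathbf{K}\mathbf{z}_t=r(\mathbf{z}_t)\mathbf{1}$, which is slightly cleaner.

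One slip in your optional idempotence remark: $\mathbf{J}\mathbf{K}=\mathbf{J}$ and $\mathbf{M}\mathbf{K}=\mathbf{M}$ are false in general; in fact $\mathbf{J}\mathbf{K}=N\mathbf{K}$ and $\mathbf{M}\mathbf{K}=\mathbf{K}$. Row-stochasticity actually gives $\mathbf{K}\mathbf{J}=\mathbf{J}$ and $\mathbf{K}\mathbf{M}=\mathbf{M}$. From these, $\mathbf{K}^2=\mathbf{K}(\tfrac{k}{N}\mathbf{J}+\mu\mathbf{M})=\mathbf{K}$, so your closure claim $\alpha_t=\alpha_{t|s}\alpha_s$ still stands.
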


\begin{lemma}[Scalar KL Divergence]\label{lemma:training}
    The term $D_{\text{KL}}(q(\mathbf{z}_s \mid \mathbf{z}_t, \mathbf{x}) \parallel p_\theta(\mathbf{z}_s \mid \mathbf{z}_t))$ can be written as:
    \begin{align}
        D_{\text{KL}}
        = 
        \frac{\beta_{t|s} \alpha_{s} r(\mathbf{z}_t)}
        {f_t(\mathbf{x},\mathbf{z}_t)}
        h_t(\mathbf{x}, \mathbf{z}_t, \tilde{\mathbf{x}}_0),
    \end{align}
    where $h_t$ is an auxiliary function collecting the logarithmic difference terms:
    \begin{align}\label{eq:lemma_train_helper}
        h_t(\mathbf{x}, \mathbf{z}_t, \tilde{\mathbf{x}}_0) 
        &=
            \frac{f_s( \mathbf{x}, \mathbf{z}_t)}
            {r(\mathbf{z}_t)}
            \frac{\alpha_{t|s}}{\beta_{t|s} \alpha_{s}}
            \log
            \frac{f_s(\mathbf{x},\mathbf{z}_t) f_t(\tilde{\mathbf{x}}_0,\mathbf{z}_t)}
            {f_t(\mathbf{x},\mathbf{z}_t)f_s(\tilde{\mathbf{x}}_0,\mathbf{z}_t)}
            \nonumber \\ 
            &\quad - 
            \frac{1}{\alpha_s}
            \log
            \frac{f_t(\mathbf{x},\mathbf{z}_t)}
            {f_t(\tilde{\mathbf{x}}_0,\mathbf{z}_t)}
            +
            \log
            \frac{f_s( \mathbf{x},\mathbf{x})}
            {f_s( \tilde{\mathbf{x}}_0,\mathbf{x})}
            \nonumber \\
            &\quad +
            \frac{k \beta_{s}}{N \alpha_{s}}
            \sum_{\mathbf{e} \in \mathcal{V}}
            \log
            \frac{f_s( \mathbf{x},\mathbf{e})}
            {f_s( \tilde{\mathbf{x}}_0,\mathbf{e})}.
    \end{align}
\end{lemma}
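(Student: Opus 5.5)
The plan is to compute the KL divergence directly from the scalar posterior of Lemma~\ref{lemma:posterior}. Throughout, $\mathbf{x}$ is one-hot, and $\tilde{\mathbf{x}}_0$ is the model prediction. I will use two facts. First, $f_{t|s}(\mathbf{e},\mathbf{z}_t)=\alpha_{t|s}\delta_{\mathbf{e},\mathbf{z}_t}+\beta_{t|s}r(\mathbf{z}_t)$. Second, the composition identities $\alpha_t=\alpha_{t|s}\alpha_s$ and $\beta_t=1-\alpha_t$, which follow from $\mathbf{Q}_{t|0}=\mathbf{Q}_{s|0}\mathbf{Q}_{t|s}$ because $\mathbf{K}$ is idempotent under instantaneous mixing (Lemma~\ref{lemma:K}).

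\textbf{Step 1 (cancellation in the log-ratio).} Write $q(\mathbf{e})$ and $p_\theta(\mathbf{e})$ using Lemma~\ref{lemma:posterior}. The factor $f_{t|s}(\mathbf{e},\mathbf{z}_t)$ does not depend on the clean distribution, so it cancels in the ratio:
\begin{equation*}
\log\frac{q(\mathbf{e})}{p_\theta(\mathbf{e})}=L(\mathbf{e})-\log\frac{f_t(\mathbf{x},\mathbf{z}_t)}{f_t(\tilde{\mathbf{x}}_0,\mathbf{z}_t)},
\qquad
L(\mathbf{e})\coloneq\log\frac{f_s(\mathbf{x},\mathbf{e})}{f_s(\tilde{\mathbf{x}}_0,\mathbf{e})}.
\end{equation*}
Since $\sum_{\mathbf{e}} q(\mathbf{e})=1$, this gives $D_{\text{KL}}=\sum_{\mathbf{e}}q(\mathbf{e})L(\mathbf{e})-\log\frac{f_t(\mathbf{x},\mathbf{z}_t)}{f_t(\tilde{\mathbf{x}}_0,\mathbf{z}_t)}$.

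\textbf{Step 2 (expand the weights).} Split $q(\mathbf{e})$ using the two-term form of $f_{t|s}$.
\begin{itemize}
\item The $\delta_{\mathbf{e},\mathbf{z}_t}$ part contributes $\frac{\alpha_{t|s}f_s(\mathbf{x},\mathbf{z}_t)}{f_t(\mathbf{x},\mathbf{z}_t)}L(\mathbf{z}_t)$.
\item The constant part contributes $\frac{\beta_{t|s}r(\mathbf{z}_t)}{f_t(\mathbf{x},\mathbf{z}_t)}\sum_{\mathbf{e}}f_s(\mathbf{x},\mathbf{e})L(\mathbf{e})$.
\end{itemize}
Next expand $f_s(\mathbf{x},\mathbf{e})=\alpha_s\delta_{\mathbf{e},\mathbf{x}}+\beta_s\bigl(\tfrac{k}{N}+\mu\delta_{\mathbf{e},\mathbf{m}}\bigr)$. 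The sum then becomes
\begin{equation*}
\sum_{\mathbf{e}}f_s(\mathbf{x},\mathbf{e})L(\mathbf{e})=\alpha_sL(\mathbf{x})+\beta_s\tfrac{k}{N}\sum_{\mathbf{e}}L(\mathbf{e})+\beta_s\mu L(\mathbf{m}).
\end{equation*}
The stated formula has no $\mu L(\mathbf{m})$ term, so it must vanish. I will justify this with the standard conventions that the clean token is never the mask ($\mathbf{x}\neq\mathbf{m}$) and that the network assigns zero mass to the mask ($\tilde{\mathbf{x}}_0^\top\mathbf{m}=0$). Then $f_s(\mathbf{x},\mathbf{m})=f_s(\tilde{\mathbf{x}}_0,\mathbf{m})=\beta_s r(\mathbf{m})$, hence $L(\mathbf{m})=0$. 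I expect this to be the main subtle point, and I would state it explicitly as an assumption of the lemma.

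\textbf{Step 3 (redistribute the $f_t$ log term).} Use the scalar Chapman--Kolmogorov identity
\begin{equation*}
f_t(\mathbf{x},\mathbf{z}_t)=\alpha_{t|s}f_s(\mathbf{x},\mathbf{z}_t)+\beta_{t|s}r(\mathbf{z}_t),
\end{equation*}
which follows from the composition identities above. Dividing by $f_t(\mathbf{x},\mathbf{z}_t)$ writes $1$ as a weighted sum of the two coefficients from Step 2, and I apply this to the term $-\log\frac{f_t(\mathbf{x},\mathbf{z}_t)}{f_t(\tilde{\mathbf{x}}_0,\mathbf{z}_t)}$.
\begin{itemize}
\item The $\alpha_{t|s}f_s(\mathbf{x},\mathbf{z}_t)$ share combines with $L(\mathbf{z}_t)$ into the single log of the four-factor ratio in the first term of $h_t$.
\item The $\beta_{t|s}r(\mathbf{z}_t)$ share becomes $-\frac{\beta_{t|s}r(\mathbf{z}_t)}{f_t(\mathbf{x},\mathbf{z}_t)}\log\frac{f_t(\mathbf{x},\mathbf{z}_t)}{f_t(\tilde{\mathbf{x}}_0,\mathbf{z}_t)}$.
\end{itemize}
Finally, factor out the common prefactor $\frac{\beta_{t|s}\alpha_s r(\mathbf{z}_t)}{f_t(\mathbf{x},\mathbf{z}_t)}$. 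The remaining coefficients are then, in order:
\begin{itemize}
\item $\frac{f_s(\mathbf{x},\mathbf{z}_t)}{r(\mathbf{z}_t)}\frac{\alpha_{t|s}}{\beta_{t|s}\alpha_s}$ on the four-factor log;
\item $-\frac{1}{\alpha_s}$ on the $f_t$ log;
\item $1$ on $L(\mathbf{x})$;
\item $\frac{k\beta_s}{N\alpha_s}$ on $\sum_{\mathbf{e}}L(\mathbf{e})$.
\end{itemize}
These are exactly the terms of $h_t$ in Eq.~\ref{eq:lemma_train_helper}.

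Apart from the mask-term cancellation in Step 2, everything is bookkeeping. The one thing to double-check is that $\alpha_t=\alpha_{t|s}\alpha_s$ actually holds under the paper's convention for $\mathbf{Q}_{t|s}$, since Step 3 relies on it.
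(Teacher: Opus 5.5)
Your proposal is correct and follows the paper's proof. Both arguments substitute the scalar posterior, cancel $f_{t|s}$ in the log-ratio, split along $f_{t|s}(\mathbf{e},\mathbf{z}_t)=\alpha_{t|s}\delta_{\mathbf{e},\mathbf{z}_t}+\beta_{t|s}r(\mathbf{z}_t)$, expand $f_s(\mathbf{x},\mathbf{e})$, discard the $\mu$-term and factor out $\beta_{t|s}\alpha_s r(\mathbf{z}_t)/f_t(\mathbf{x},\mathbf{z}_t)$; the paper only asserts that the $\mu$-term vanishes, and your explicit condition $p_{\mathbf{x},\mathbf{m}}=p_{\tilde{\mathbf{x}}_0,\mathbf{m}}=0$ is what that assertion needs.

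The only difference is how you handle $\log\frac{f_t(\mathbf{x},\mathbf{z}_t)}{f_t(\tilde{\mathbf{x}}_0,\mathbf{z}_t)}$. The paper keeps it inside the sum and removes it from the $\beta_{t|s}$ branch using $\sum_{\mathbf{e}}f_s(\mathbf{x},\mathbf{e})=1$, so it never pulls the term out and re-inserts it via the scalar Chapman--Kolmogorov identity as you do. That identity does hold, since $\mathbf{K}=\mathbf{1}\boldsymbol{\pi}^\top$ is idempotent and the paper itself uses $\alpha_t=\alpha_{t|s}\alpha_s$.
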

\vspace{-5pt}

As for the limiting case where $s \to t$, approximating continuous time, the auxiliary function $h_t$ simplifies, thereby circumventing the numerical instability associated with the first logarithmic term in Eq.~\ref{eq:lemma_train_helper}.

\begin{lemma}[Limiting Case]\label{lemma:training_limit}
    As $s \to t$, the function $h_t$ converges to:
    \begin{align}
        h_t(\mathbf{x}, \mathbf{z}_t, \tilde{\mathbf{x}}_0) &\approx
            \frac{
            p_{\mathbf{x}, \mathbf{z}_t} - p_{\tilde{\mathbf{x}}_0,\mathbf{z}_t}  
            }
            {
                f_t(\tilde{\mathbf{x}}_0,\mathbf{z}_t)
            }
            - 
            \frac{1}
            {\alpha_t}
            \log
            \frac{f_t(\mathbf{x},\mathbf{z}_t)}
            {f_t(\tilde{\mathbf{x}}_0,\mathbf{z}_t)}
            \nonumber+ \\
            \log
            \frac{f_t( \mathbf{x},\mathbf{x})}
            {f_t( \tilde{\mathbf{x}}_0,\mathbf{x})}
            &+
            \frac{ k  \beta_{t} }
            {N \alpha_{t}}
            \sum_{\mathbf{e} \in \mathcal{V}}
            \log
            \frac{f_t( \mathbf{x},\mathbf{e})}
            {f_t( \tilde{\mathbf{x}}_0,\mathbf{e})}.
    \end{align}
\end{lemma}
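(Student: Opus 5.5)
The plan is to treat the four terms of $h_t$ in Lemma~\ref{lemma:training} separately. The last three terms,
\[
-\tfrac{1}{\alpha_s}\log\tfrac{f_t(\mathbf{x},\mathbf{z}_t)}{f_t(\tilde{\mathbf{x}}_0,\mathbf{z}_t)},\qquad
\log\tfrac{f_s(\mathbf{x},\mathbf{x})}{f_s(\tilde{\mathbf{x}}_0,\mathbf{x})},\qquad
\tfrac{k\beta_s}{N\alpha_s}\textstyle\sum_{\mathbf{e}}\log\tfrac{f_s(\mathbf{x},\mathbf{e})}{f_s(\tilde{\mathbf{x}}_0,\mathbf{e})},
\]
are continuous in $s$ as long as $\alpha_s>0$ and the arguments of the logarithms stay positive. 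For these I will simply substitute $s=t$, which gives the last three terms of the claimed expression. All the work is in the first term, which is an indeterminate $0\cdot\infty$ form: as $s\to t$ we have $\beta_{t|s}\to 0$ in the prefactor, while the logarithm tends to $\log 1=0$.

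For the first term I will use the schedule relations implied by composing Eq.~\ref{eq:form_initial_Q_ts}. Because $\mathbf{K}$ is idempotent under instantaneous mixing (Lemma~\ref{lemma:K}), the identity parts compose multiplicatively, giving $\alpha_t=\alpha_{t|s}\alpha_s$ with $\beta=1-\alpha$ throughout. Set
\[
\delta \coloneq \alpha_s-\alpha_t=\alpha_s\beta_{t|s}.
\]
From $f_t(\mathbf{y},\mathbf{e})=\alpha_t p_{\mathbf{y},\mathbf{e}}+(1-\alpha_t)r(\mathbf{e})$ this gives the exact identity
\[
f_s(\mathbf{y},\mathbf{e})=f_t(\mathbf{y},\mathbf{e})+\delta\bigl(p_{\mathbf{y},\mathbf{e}}-r(\mathbf{e})\bigr).
\]
Now write the logarithm as $\log\frac{f_s(\mathbf{x})}{f_t(\mathbf{x})}-\log\frac{f_s(\tilde{\mathbf{x}}_0)}{f_t(\tilde{\mathbf{x}}_0)}$, with all evaluations at $\mathbf{z}_t$, and expand to first order in $\delta$. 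This gives
\[
\delta\Bigl[\tfrac{p_{\mathbf{x}}-r}{f_t(\mathbf{x})}-\tfrac{p_{\tilde{\mathbf{x}}_0}-r}{f_t(\tilde{\mathbf{x}}_0)}\Bigr]+O(\delta^2).
\]
The prefactor is $\frac{f_s(\mathbf{x})}{r}\frac{\alpha_{t|s}}{\delta}$. The $\delta$ cancels, and since $f_s\to f_t$ and $\alpha_{t|s}\to1$, the first term converges to
\[
\frac{f_t(\mathbf{x})}{r}\Bigl[\tfrac{p_{\mathbf{x}}-r}{f_t(\mathbf{x})}-\tfrac{p_{\tilde{\mathbf{x}}_0}-r}{f_t(\tilde{\mathbf{x}}_0)}\Bigr].
\]

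The last step is algebraic simplification of this limit, which is where I expect the main care to be needed. Put everything over the common denominator $r\,f_t(\tilde{\mathbf{x}}_0)$. The numerator is
\[
\bigl(p_{\mathbf{x}}f_t(\tilde{\mathbf{x}}_0)-f_t(\mathbf{x})p_{\tilde{\mathbf{x}}_0}\bigr)+r\bigl(f_t(\mathbf{x})-f_t(\tilde{\mathbf{x}}_0)\bigr).
\]
Expanding the definition of $f_t$, the cross terms $\alpha_t p_{\mathbf{x}}p_{\tilde{\mathbf{x}}_0}$ cancel. The first bracket becomes $\beta_t r(p_{\mathbf{x}}-p_{\tilde{\mathbf{x}}_0})$ and the second becomes $\alpha_t r(p_{\mathbf{x}}-p_{\tilde{\mathbf{x}}_0})$. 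Using $\alpha_t+\beta_t=1$, the numerator collapses to $r(p_{\mathbf{x}}-p_{\tilde{\mathbf{x}}_0})$. Dividing gives exactly $(p_{\mathbf{x},\mathbf{z}_t}-p_{\tilde{\mathbf{x}}_0,\mathbf{z}_t})/f_t(\tilde{\mathbf{x}}_0,\mathbf{z}_t)$, the first term of the claim.

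The main obstacle is justifying the first-order expansion rigorously. It requires $f_t(\cdot,\mathbf{z}_t)>0$, which holds whenever $\beta_t>0$, or whenever $k>0$ since then $r>0$. It also requires $\alpha_t>0$, and it relies on the multiplicative schedule composition. I will state these conditions explicitly before taking the limit.
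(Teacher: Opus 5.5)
Your argument is correct and is essentially the paper's. The paper resolves the same indeterminate form in the first term of $h_t$ by L'H\^opital's rule in $\alpha_{t|s}$ (holding $\alpha_s$ fixed), which is equivalent to your first-order expansion in $\delta=\alpha_s\beta_{t|s}$; it then does the same common-denominator simplification to $r(\mathbf{z}_t)\bigl(p_{\mathbf{x},\mathbf{z}_t}-p_{\tilde{\mathbf{x}}_0,\mathbf{z}_t}\bigr)$, and the other three terms pass to the limit by continuity. Your derivation of $\alpha_t=\alpha_{t|s}\alpha_s$ from $\mathbf{K}^2=\mathbf{K}$ makes explicit what the paper uses silently; just fix the side condition to read ``$\beta_t>0$ \emph{and} $k>0$'' rather than ``or'', since $f_t>0$ needs $\beta_t r(\cdot)>0$ and the $1/r(\mathbf{z}_t)$ factor in $h_t$ needs $r(\mathbf{z}_t)>0$.
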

Consequently, by substituting the asymptotic behavior derived in Lemma~\ref{lemma:training} and~\ref{lemma:training_limit} into the training objective defined in Eq.~\ref{eq:train_obj}, we arrive at a tractable scalar formulation for the training loss:
\begin{equation}
    \lim_{T \to \infty }\mathcal{L}_{\text{vb}} = \mathbb{E}_{t, \mathbf{x}_0, \mathbf{z}_t} \left[
    \frac{{-\alpha_t^\prime} r(\mathbf{z}_t)}{f_t(\mathbf{x}, \mathbf{z}_t)} \cdot h_t(\mathbf{x}, \mathbf{z}_t, \tilde{\mathbf{x}}_0) 
    \right].
    \label{eq:final_scalar_loss}
\end{equation}
This substitution bypasses the expensive explicit computation of posterior distributions and their KL divergence, facilitating an efficient and stable implementation for XDLM training and sampling.

\subsection{Relationship to MDLM and UDLM}
\label{sec:relation}

The proposed XDLM generalizes existing DDM frameworks. 
Through the modulation of the uniform noise weight $k$ 
{
(Eq.~\ref{eq:form_Q_ts})
}
, and the consequent adjustment of the absorbing strength, XDLM reduces to UDLM and MDLM at the parameter's limits.
The relationship between XDLM and these models is outlined in this section, with a detailed analysis deferred to Appendix~\ref{app:relationship}.

\paragraph{Connection with MDLM.}
Setting $k = 0$ (which implies $\mu = 1$) reduces the noise kernel to a pure masking operation. Thus, the posterior probability simplifies to:
\begin{align}
    & p_\theta(\mathbf{z}_s = \mathbf{e} \mid \mathbf{z}_t) \nonumber
    \\ &= 
    \delta_{\mathbf{z}_t, \mathbf{m}}
    \delta_{\mathbf{e}, \mathbf{m}}
    \frac{\beta_s}{\beta_t}
    +
    \delta_{\mathbf{z}_t, \mathbf{m}}
    \delta_{\mathbf{e} \ne \mathbf{m}}
    \frac{\beta_{t|s}\alpha_s p_{\theta, \mathbf{e}}}
    {\beta_t}
    +
    \delta_{\mathbf{z}_t \ne \mathbf{m}}
    \delta_{\mathbf{e},\mathbf{z}_t} \nonumber
    \\ &= 
    \begin{cases}
        \frac{\beta_s}{\beta_t} \delta_{\mathbf{m},\mathbf{z}_t}
        ,
        & \mathbf{e} = \mathbf{m}
        \\
        \frac{(\alpha_s - \alpha_t)}
        {1 - \alpha_t}
        p_{\theta, \mathbf{e}}
        ,
        & \mathbf{z}_t = \mathbf{m}, \mathbf{e} \ne \mathbf{m}
        \\
        \delta_{\mathbf{e},\mathbf{z}_t}
        ,
        & \mathbf{z}_t \ne \mathbf{m}, \mathbf{e} \ne \mathbf{m}
        \\
    \end{cases}
\end{align}
Consequently, the KL divergence collapses to the standard cross-entropy loss on masked tokens, matching the MDLM objective~\cite{Sahoo2024mdlm}:
\begin{align}
    \mathcal{L}_{\text{KL}}
    =
    -\frac{\beta_{t|s} \alpha_{s}}{\beta_{t}}
    \log p_{\theta}(\mathbf{x} \mid \mathbf{z}_t).
\end{align}

\paragraph{Connection with UDLM.}
Setting $k = 1$ (implying $\mu = 0$) yields the uniform noise kernel. In this limit, the posterior probability becomes:
\begin{align}
    & p_\theta(\mathbf{z}_s = \mathbf{e} \mid \mathbf{z}_t) \nonumber
    \\ &=
    \frac{
        \delta_{\mathbf{e},\mathbf{z}_t}(
        N \alpha_t p_{\theta, \mathbf{e}} + \beta_s \alpha_{t|s})
        +
        (\alpha_{s} - \alpha_{t}) p_{\theta, \mathbf{e}}
        +
        \beta_s \beta_{t|s} / N
    }
    {N \alpha_t p_{\theta, \mathbf{z}_t} + \beta_t}
\end{align}
Define $\mathbf{\bar{x}_j = N f_t(\mathbf{x},\mathbf{e}_j)}$, $\mathbf{(\bar{x}_\theta)_j = N f_t(\tilde{\mathbf{x}}_0,\mathbf{e_j})}$ and considering the limit $s \to t$, our derived KL divergence aligns exactly with the UDLM loss derived in~\cite{Schiff2024udlm}:

{
\small
\begin{align}
    % \mathcal{L}_{\text{KL}} 
    \text{KL}
    &=
    \frac{-\beta_{t|s} \alpha_{s}}{N \alpha_t}
    \left(
        \frac{N}{(\bar{x}_\theta)_i}
        -
        \frac{N}{\bar{x}_i}
        -
        \sum_{j s.t. (\mathbf{z}_t)_j = 0}
        \frac{\bar{x}_j}{\bar{x}_i}
        \log
        \frac{ (\bar{x}_\theta)_i \bar{x}_j}
        {(\bar{x}_\theta)_j \bar{x}_i}
    \right)
\end{align}
}
where $i$ is the index of the active state in $\mathbf{z}_t$.

\section{Experiment}
The experimental setup is first introduced in Sec.~\ref{sec:exp_setup}, with additional details provided in Appendix~\ref{app:exp_conf}. Sec.~\ref{sec:performance} then reports the performance of XDLM on zero-shot likelihood, language generation quality, and image generation tasks, demonstrating its superiority over existing methods, with more results presented in Appendix~\ref{app:zeroshot_capability}, ~\ref{app:language_generation} and~\ref{app:image_generation}. In addition, to evaluate scalability to larger model sizes, XDLM is further extended to LLaDA-XDLM via continual pretraining on LLaDA, which achieves strong performance on standard benchmarks (More details in Appendix~\ref{app:llada_continue_pretrain}). Finally, Sec.~\ref{sec:analysis} presents analysis and visualizations across different tasks, showing that XDLM achieves a better balance between understanding and generation than other approaches (More results in Appendix~\ref{app:speed_test}, ~\ref{app:training_dynamics}, ~\ref{app:lm1b_sample_case} and ~\ref{app:imagenet_sample_case}).

\subsection{Experimental Setup}
\label{sec:exp_setup}

All experiments were conducted on a cluster of 8$\times$H800 GPUs. 
We standardized key hyperparameters where applicable, utilizing the AdamW optimizer ($\beta_1=0.9, \beta_2=0.999$, weight decay $0$), a mixing ratio of $k=0.1$, a global batch size of 512, and an EMA rate of 0.9999.

\subsection{Performance}
\label{sec:performance}
\begin{table}[t]
  \centering
  \caption{Validation PPL on the OWT dataset. XDLM achieves performance comparable to MDLM and outperforms UDLM under the same training steps.}
  \label{tab:owt_val_ppl}
    \begin{tabular}{lcccc}
\toprule
\textbf{Model} & MDLM & GIDD & XDLM & UDLM \\
\midrule
\textbf{PPL} & 23.321 & 23.136 & 24.097 & 25.937 \\
\bottomrule
\end{tabular}

    \vspace{-10pt}
\end{table}

\begin{table*}[t]
\centering
\caption{
Zero-shot performance comparison across various benchmarks. XDLM maintains performance parity with MDLM and GIDD, significantly outperforming UDLM. Models were reimplemented under a unified setting using 8$\times$H800 GPUs.
}
\label{tab:owt_zeroshot_ppl}
\begin{tabular}{l|cccccccc}
\toprule
\textbf{Dataset} & \textbf{AG News} & \textbf{LAMBADA} & \textbf{LM1B-GPT2} & \textbf{PTB} & \textbf{ArXiv} & \textbf{PubMed} & \textbf{WikiText} & \textbf{Average} \\
\midrule
MDLM & 61.374  & 47.967  & 65.629  & 89.049  & 37.457  & 41.981  & 32.093  & 53.650  \\
GIDD & 60.607  & 47.811  & 65.898  & 86.911  & 39.019  & 42.634  & 30.809  & 53.384  \\
\rowcolor{blue!10}
XDLM & 62.768  & 45.608  & 68.229  & 90.796  & 37.232  & 41.391  & 32.748  & 54.110  \\
UDLM & 69.402  & 51.272  & 75.572  & 95.986  & 42.671  & 47.181  & 34.933  & 59.574  \\
\bottomrule
\end{tabular}
\vspace{-5pt}
\end{table*}

\noindent \textbf{Evaluation of Zero-Shot Likelihood}.
To assess the generalization capabilities of XDLM, we evaluate zero-shot perplexity (PPL) on seven external datasets after training on OpenWebText (OWT). 
Aligning with prior work~\cite{Sahoo2024mdlm, Sahoo2025duo}, our evaluation suite includes the validation splits of AG News~\cite{zhang2015character}, LAMBADA~\cite{paperno2016lambada}, LM1B~\cite{chelba2013lm1b}, Penn Treebank~\cite{marcinkiewicz1994ptb}, Scientific Papers from ArXiv and PubMed~\cite{cohan2018discourse}, and WikiText~\cite{merity2016pointer}.
The validation PPL is estimated via the negative Evidence Lower Bound (ELBO) with Monte Carlo sampling.
To ensure a fair comparison, we adopt the identical sampling configuration in~\cite{Sahoo2024mdlm, Sahoo2025duo}.

As shown in Tab.~\ref{tab:owt_val_ppl}, XDLM achieves a PPL of 24.097 on the OWT validation set, comparable to MDLM and GIDD and significantly outperforming UDLM (25.937). 
This retention of in-domain capability extends to zero-shot benchmarks.
As listed in Tab.~\ref{tab:owt_zeroshot_ppl}, XDLM obtains an average PPL of 54.110 across seven datasets, performing similarly to MDLM (53.650) and GIDD (53.384) while UDLM clearly lags behind (59.574).

\noindent\textbf{Language Generation Quality.}
\label{sec:language_generate}
We evaluate performance using perplexity (quality) and token entropy (diversity) under \textit{ancestral} sampling~\citep{Sahoo2024mdlm, Schiff2024udlm, Sahoo2025duo}. 
Fig.~\ref{fig:gen_owt_lm1b} shows that XDLM strikes a robust balance between masked and uniform diffusion paradigms. In few-step regimes (8–32 steps on OWT), XDLM benefits from uniform noise dynamics, achieving generation quality comparable to UDLMs and clearly surpassing purely masked models. In contrast, under multi-step regimes (512–1024 steps), masked noise becomes dominant, allowing XDLM to outperform UDLMs and reach MDLM-level performance.
A similar trend is observed on LM1B, confirming the efficacy of XDLM in securing the efficiency of uniform noise while preserving the rigorous structure of masking.
Comprehensive numerical results analyzing the impact of the mixing ratio $k$ are provided in Appendix~\ref{app:language_generation}.

\begin{figure}[t]
    \centering
    \includegraphics[width=0.9\linewidth]{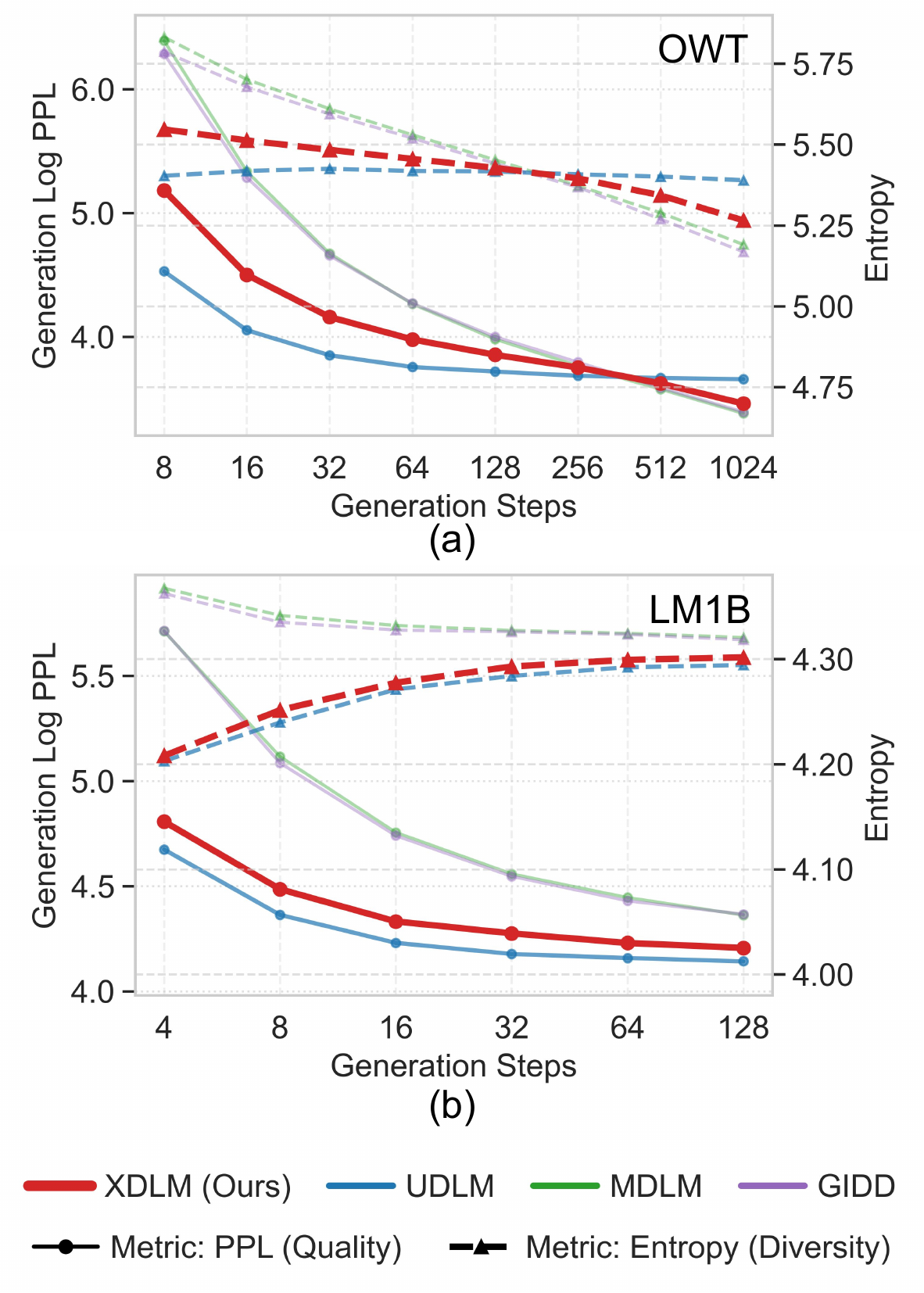}  
    \caption{
        Language Generation Quality. Results on OWT (top) and LM1B (bottom) demonstrate that XDLM achieves a better balance across both few-step and multi-step regimes. For clarity, PPL and Generation Steps are reported in logarithmic scale.
    } 
    \label{fig:gen_owt_lm1b} 
    \vspace{-7.5pt}
\end{figure}

\noindent \textbf{Image Generation Performance}. \label{sec:imnet_performance}
We evaluate domain generality via class-conditional generation on ImageNet-1K.
Tab.~\ref{tab:fid_imnet} reports FID and IS scores without Classifier-Free Guidance (CFG). 
In few-step regimes, XDLM and UDLM consistently outperform mask-based baselines (\textit{i.e.}, MDLM and GIDD). 
XDLM proves highly competitive with UDLM, achieving the lowest FID (25.77) at 16 steps.
demonstrating its capacity for efficient, high-quality generation.
When evaluating with CFG (scale=2.0), as shown in Tab.~\ref{tab:fid_cfg_imnet}, the pure mask-based MDLM achieves the lowest overall FID (6.73) at 16 steps, suggesting that masking excels given sufficient steps.
However, by partially inheriting these absorbing noise properties, XDLM significantly outperforms UDLM in the few-step regime, achieving the best FID scores at 4 steps (13.55) and 8 steps (8.96).
{
Extended analysis is provided in Appendix~\ref{app:image_generation}.
}

\begin{table}[tbp] 
\small
  \centering
    \setlength{\tabcolsep}{3pt} 
  \caption{Performance comparison of image generation on ImageNet-1K with standard conditioning.
  }
  \label{tab:fid_imnet}
  \begin{tabular}{l|ccc|ccc}
\toprule
\multirow{2}{*}{\textbf{Model}}& \multicolumn{3}{c|}{FID $\downarrow$} & \multicolumn{3}{c}{IS $\uparrow$} \\
& \textbf{step 4} & \textbf{step 8} & \textbf{step 16} & \textbf{step 4} & \textbf{step 8} & \textbf{step 16} \\
\midrule
MDLM & 80.752  & 47.732  & 28.785  & 16.287 & 29.178 & 44.656   \\
GIDD & 86.842  & 54.933  & 35.403  & 14.559  & 24.297  & 35.698  \\
\rowcolor{blue!10}
XDLM & 54.085  & 34.109  & 25.774  & 24.829  & 36.964  & 43.903   \\
UDLM & 49.861  & 30.144  & 26.242  & 27.049 & 38.832  & 41.801   \\
\bottomrule
\end{tabular}

  \vspace{-7.5pt}
\end{table}

\begin{table}[tbp]
  \small
  \centering
    \setlength{\tabcolsep}{3pt} 
  \caption{
    Performance comparison of image generation on ImageNet-1K with CFG $scale=2.0$.
  }
  \label{tab:fid_cfg_imnet}
  \begin{tabular}{l|ccc|ccc}
\toprule
\multirow{2}{*}{\textbf{Model}}& \multicolumn{3}{c|}{FID $\downarrow$} & \multicolumn{3}{c}{IS $\uparrow$} \\
% \hline
& \textbf{step 4} & \textbf{step 8} & \textbf{step 16} & \textbf{step 4} & \textbf{step 8} & \textbf{step 16} \\
\midrule
MDLM & 33.468  & 11.144  & 6.725   & 54.740  & 119.150  & 172.664  \\
GIDD & 41.000  & 15.151  & 7.076   & 44.084  & 95.789 & 148.148  \\
\rowcolor{blue!10}
XDLM & 13.550  & 8.956   & 8.625   & 107.403  & 148.723  & 165.916  \\
UDLM & 14.055  & 9.718   & 8.980   & 97.859 & 123.582 & 132.099 \\
\bottomrule
\end{tabular}

  \vspace{-10pt}
\end{table}

\noindent\textbf{Continual Pretraining of Large Language Models (LLMs)}. 
To validate XDLM as a scalable upgrade for state-of-the-art LLMs, we adapt LLaDA~\cite{nie2025llada}, a pretrained 8B-MDLM, into the XDLM formulation with a compute-efficient pretraining of 600 steps (termed LLaDA-XDLM).
To rigorously isolate the source of improvement, we compare against two controls: LLaDA-MDLM (standard continual pretraining for the same duration) and LLaDA-XDLM-infer (LLaDA coupled with our sampling strategy).

Evaluations on OpenCompass (32 steps) show that LLaDA-XDLM consistently outperforms the original LLaDA and both controls on GSM8k, MATH, and BBH (Fig.~\ref{fig:llada_tune} (a)).
The impact is most visible in MBPP code generation, where LLaDA-XDLM doubles the performance of LLaDA (15.0 \textit{vs.} 6.8) by minimizing non-compilable errors, thereby confirming enhanced structural coherence.

\begin{figure}[ht]
    \centering
    \includegraphics[width=0.9\linewidth]{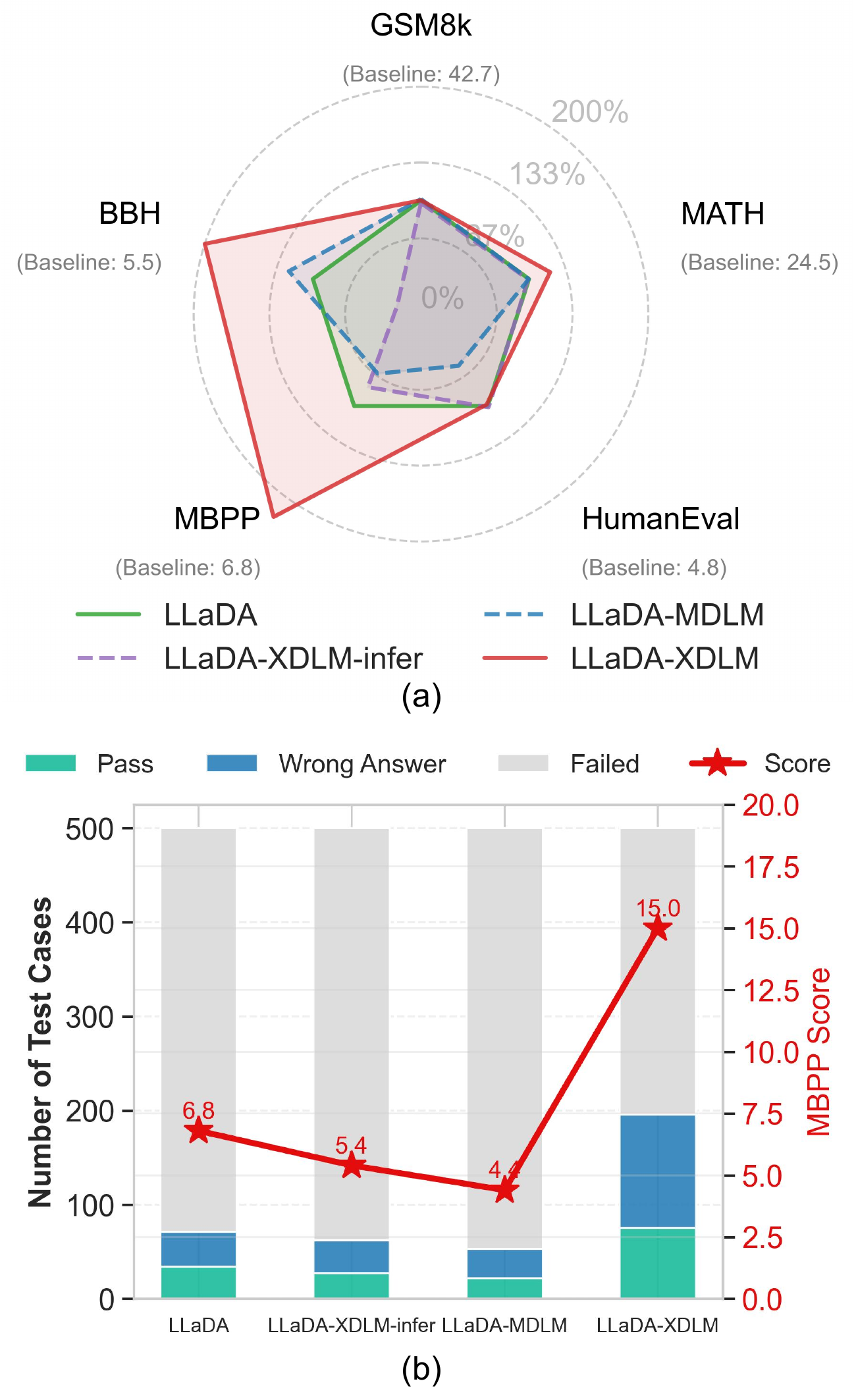}
    \caption{
    Evaluation of adapting LLaDA-8B to our XDLM formulation (LLaDA-XDLM):
    (a) LLaDA-XDLM consistently outperforms baselines across diverse benchmarks;
    (b) Improvements are particularly pronounced in code generation (MBPP), where the model substantially reduces generation failures.
    }
    \label{fig:llada_tune}
    \vspace{-5pt}
\end{figure}

As for the control models, while LLaDA-MDLM suffers a performance drop compared to the original LLaDA, likely due to a distribution shift between the tuning and pretraining corpora, LLaDA-XDLM achieves significant improvements under identical conditions. 
This confirms that the gains stem from the efficacy of the XDLM formulation rather than the additional 600 training steps alone.
Furthermore, LLaDA-XDLM remarkably outperforms LLaDA-XDLM-infer, demonstrating that the improvement is intrinsic to the learned model rather than an artifact of sampling strategies.

\subsection{Analysis}
\label{sec:analysis}
\noindent \textbf{Performance Crossover in Training Dynamics}.
\label{sec:performance_crossover}
Analysis of the training dynamics reveals a `performance crossover' phenomenon, where the relative efficacy of competing models shifts significantly as training progresses. 
This trend is particularly evident in the LM1B text generation task, as shown by the Generation PPL curves in Figure~\ref{fig:gen_cross_lm1b} (a). 
While GIDD and MDLM demonstrate advantages in the initial phase ($<$200k steps, pink region), they suffer from rapid saturation and are eventually outperformed by UDLM (yellow region). 
Leveraging the uniform nature inherited from UDLM, XDLM demonstrates substantial late-stage improvement, eventually matching or surpassing MDLM after 700k steps (green region). 
This trajectory is in sharp contrast to GIDD, which exhibits early saturation and only marginal PPL gains throughout the training process.

A different, yet similar in trends, dynamic emerges in image generation on ImageNet-1K, as shown in Fig.~\ref{fig:gen_cross_lm1b} (b).
Here, XDLM demonstrates superior performance from the start, consistently achieving the lowest FID scores.
While UDLM eventually converges with XDLM in the final training phase ($>$ 500k steps), both models significantly outperform the MDLM and GIDD baselines throughout the process.

\begin{figure}[t]
    \centering
    \includegraphics[width=0.90\linewidth]{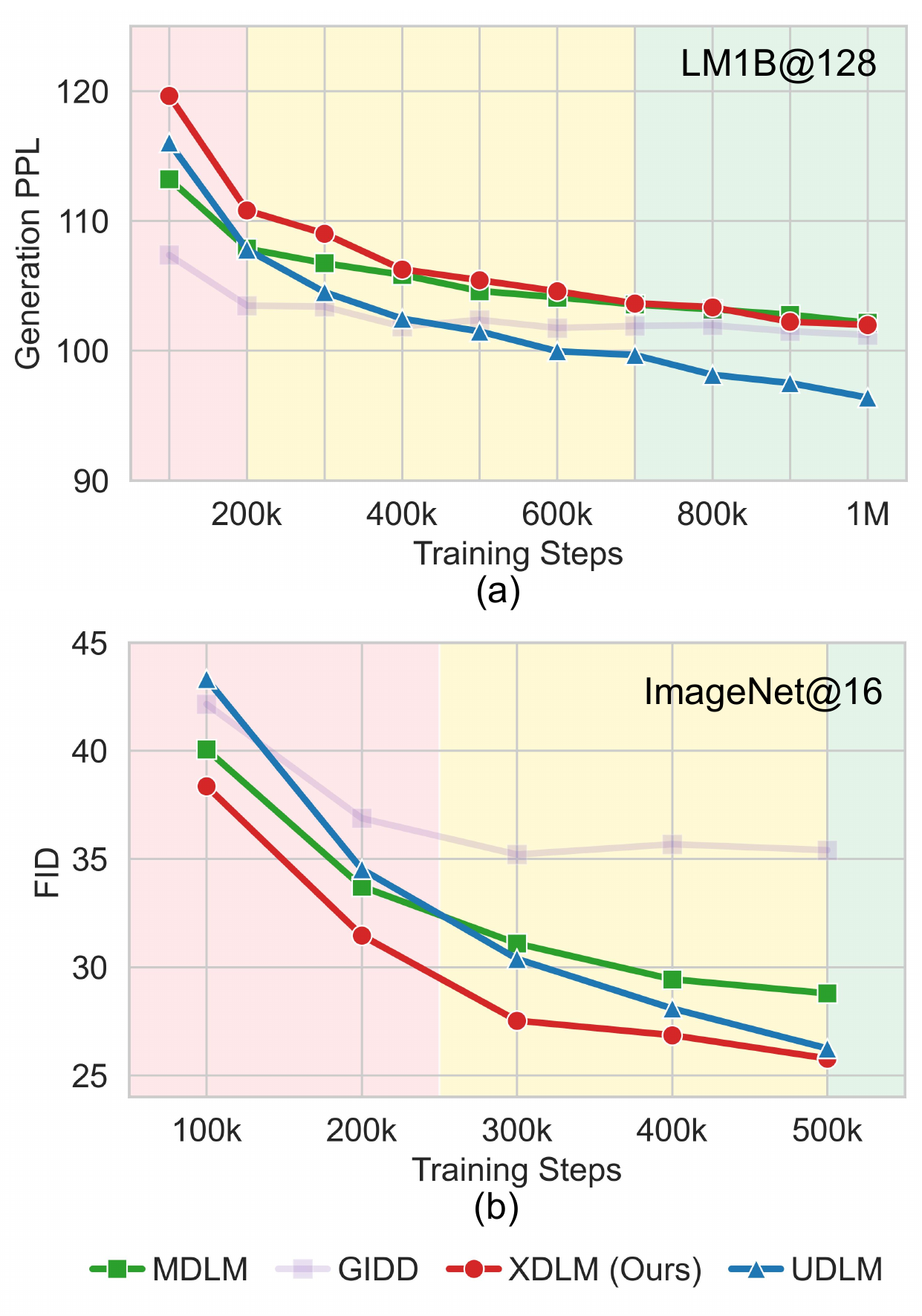}
    \caption{
        Training dynamics for (a) text and (b) image generation tasks. 
        Colored regions indicate the dominant model during each phase, while transitions between colors mark points of performance crossover.
    }
    \label{fig:gen_cross_lm1b}
    \vspace{-15pt}
\end{figure}

\noindent \textbf{Impact of Mixing Ratio}.\label{sec:mixing_ratio}
The mixing ratio $k$ acts as a pivotal control parameter, interpolating between two distinct paradigms: UDLM ($k\to1$) and MDLM ($k\to0$), characterized by their divergent capabilities in generation and understanding.
As shown in Fig.~\ref{fig:intro} (right), these boundary cases establish a linear \textit{Reference Trade-off Line} (dashed black line), representing a zero-sum scenario where gains in generation quality (lower Generation PPL) come at the cost of understanding capability (higher Zero-Shot PPL).

However, the empirical behavior of XDLM reveals that the model does not merely traverse this linear path.
Crucially, varying $k$ shifts the model's operating point along a continuous, smooth trajectory.
This coherent evolution, as opposed to stochastic fluctuations or random walks, validates the architectural design, confirming that XDLM successfully synthesizes the mechanical properties of both UDLM and MDLM into a unified framework.
Moreover, this trajectory bows inward to form a superior \textit{Optimal Trade-off Frontier} (solid red line), demonstrating a synergistic effect where the mixed objective yields performance gains that exceed the sum of the parts.
By navigating this frontier, we identify an empirical `sweet spot' at $k=0.1$.
As shown by the red marker, this configuration effectively breaks the Pareto frontier of the baselines, achieving robust understanding capabilities without the severe degradation in generation performance typical of pure masked models.

\noindent \textbf{Computational Efficiency of XDLM}.
\label{sec:efficiency}
We analyze the computational efficiency across inference, training, and sampling. 
While MDLM remains the fastest baseline due to its simple absorbing kernel, XDLM emerges as the most efficient model among those involving uniform noise. 
Benefiting from our scalar reformulated strategy, XDLM achieves a forward throughput of 396,398 tokens/s, nearly double GIDD's 199,516 tokens/s, and a training throughput of 137,372 tokens/s.
In generation tasks, XDLM maintains a sampling speed of 7,108 tokens/s, significantly surpassing UDLM (2,882 tokens/s). 
This efficiency extends to memory consumption, where XDLM requires only 31.4 GB compared to UDLM's 59.7 GB and GIDD's 40.9 GB. 
These results confirm that XDLM effectively mitigates large-vocabulary bottlenecks without compromising the expressivity of complex noise kernels.
For more detailed results, please refer to Appendix~\ref{app:speed_test}.

\noindent\textbf{Inference Dynamics.}
To investigate the internal generation mechanism of XDLM, we closely inspect a single generation trajectory over a budget of $T=32$ steps. 
The result shows that XDLM effectively leverages a hybrid noise process:
it constructs semantic structure via absorbing noise (\texttt{[MASK]} $\to$ Token) while refining content through uniform noise (Token $\to$ Token).
Crucially, XDLM distinguishes itself with a re-masking strategy (transforming tokens back to \texttt{[MASK]}). 
Unlike standard iterative models that exhibit high sensitivity to initialization noise, this mechanism allows XDLM to actively reject low-probability tokens derived from uniform noise, effectively escaping local optima.
For a detailed analysis, please refer to Appendices~\ref{app:lm1b_sample_case} and \ref{app:imagenet_sample_case}.

\section{Related work}

\noindent\textbf{Masked Diffusion Language Modeling}.
Building on the general and flexible discrete diffusion framework introduced by D3PM~\cite{Austin2021d3pm}, absorbing-state–based corruptions (often using a special \texttt{[MASK]} token) have emerged as a popular paradigm for discrete diffusion in language, frequently referred to as Masked Diffusion Language Models (MDLMs). MDLM~\cite{Sahoo2024mdlm} and MD4~\cite{shi2024md4} extend this framework and derive a continuous-time variational objective for masked diffusion as a simple weighted integral of cross-entropy losses, yielding a streamlined and general training recipe for discrete language models. 
From a continuous-time perspective, CTMC~\cite{campbell2022ctmc} formalizes the forward process via continuous-time Markov chains, while SEDD~\cite{lou2023sedd} learns density ratios directly, providing a discrete analogue of score matching. 
These advances have facilitated scaling masked diffusion LMs to larger model sizes and datasets, leading to large diffusion language models such as LLaDA~\cite{nie2025llada} and Dream~\cite{ye2025dream}, as well as extensions to multimodal settings~\cite{you2025lladav, yang2025mmada, yu2025dimple}.

\noindent\textbf{Uniform-noise Diffusion Language Modeling}.
The transition kernel of uniform-noise diffusion is a mixture of the identity and a uniform distribution over the vocabulary. 
This idea was introduced for discrete spaces in Argmax/Multinomial diffusion~\cite{hoogeboom2021argmax} and generalized by D3PM~\cite{Austin2021d3pm}. Plaid~\cite{gulrajani2023plaid} takes a notable step toward narrowing the likelihood gap between autoregressive models and diffusion-based language models. UDLM~\cite{Schiff2024udlm} shows that, with appropriate guidance mechanisms, uniform-noise diffusion can yield stronger controllability. 
Duo~\cite{Sahoo2025duo} highlights a duality between Gaussian diffusion language models (e.g., Plaid) and uniform-noise diffusion, and proposes distilling a uniform-noise model from an auxiliary latent Gaussian model.

\noindent\textbf{Mixed Language Modeling}.
D3PM~\cite{Austin2021d3pm} observes that BERT can be viewed as a one-step instance of mixed corruption with transition matrix. 
However, BERT is trained with a cross-entropy objective on the masked positions only, whereas diffusion-based approaches optimize objectives integrated over a noise schedule (or time), leading to different training dynamics and inductive biases.

Recent research has further bridged and generalized these paradigms. ~\cite{fathi2025unifying} unify autoregressive and diffusion models by assigning distinct noise schedules to individual token positions. Similarly, ReMDM~\cite{wang2025remdm} relaxes the strict absorbing constraint of the forward process, allowing \texttt{[MASK]} tokens to revert to their original states, thereby enabling plug-and-play editing capabilities.

Our most related work is GIDD~\cite{von2025gidd}, which also explores interpolating between masked and uniform-noise diffusion. However, GIDD generally relies on constructing transition matrices that blend these mechanisms dynamically. In contrast, our XDLM framework enforces a stationary noise kernel. This design choice explicitly decouples the noise characteristics from the temporal dynamics, allowing us to derive a highly efficient scalar formulation that simplifies both sampling and training.

\section{Conclusion}
In this paper, we presented the XDLM, a unified approach that theoretically bridges the gap between Masked and Uniform-noise diffusion. 
By redefining the forward process as a weighted row-stochastic transition, we proved that XDLM recovers existing paradigms (MDLM and UDLM) as special cases. 
To ensure practicality, we derived a memory-efficient implementation that reduces computational complexity, enabling training on large vocabularies.

Empirically, XDLM breaks the 
% existing 
Pareto frontier between understanding and generation. 
We identified a mixing ratio of $k=0.1$ as the optimal ``sweet spot'',
where the model combines the robust zero-shot likelihoods of masking models with the superior sample diversity and few-step generation quality of uniform noise models. 
These advantages extend across domains, achieving state-of-the-art 
% results 
on ImageNet-1K and demonstrating significant scalability in the continual pretraining of 8B-parameter LLMs, where XDLM 
% notably 
doubled performance on code generation benchmarks.

\paragraph{Limitations}

Despite these contributions, several avenues for future research remain. 
First, we have not yet trained XDLM from scratch at a large scale; such pre-training would likely allow for a more comprehensive exploration of the model's emergent properties. 
Second, we did not fully investigate the ``performance crossover'' phenomenon, wherein UDLM and XDLM appear to outperform MDLM in generation tasks involving large sampling steps (approaching autoregressive decoding). 
Third, domain-specific sampling strategies for XDLM in language modeling and image generation have not yet been optimized. 
Furthermore, while we confirmed that XDLM balances understanding and generation, the interaction and balance between textual and visual modalities within a single unified model remain uninvestigated. 
Finally, the development of post-training schemas and inference acceleration techniques for XDLM remains a subject for future work.

\newpage
\section*{Impact Statement}
The primary objective of this research is to advance the technical state-of-the-art in machine learning. Consequently, it entails societal considerations parallel to those found in the general field of language and foundation models.

% In the unusual situation where you want a paper to appear in the
% references without citing it in the main text, use \nocite
% \nocite{langley00}

\bibliography{reference}
\bibliographystyle{icml2026}

%%%%%%%%%%%%%%%%%%%%%%%%%%%%%%%%%%%%%%%%%%%%%%%%%%%%%%%%%%%%%%%%%%%%%%%%%%%%%%%
%%%%%%%%%%%%%%%%%%%%%%%%%%%%%%%%%%%%%%%%%%%%%%%%%%%%%%%%%%%%%%%%%%%%%%%%%%%%%%%
% APPENDIX
%%%%%%%%%%%%%%%%%%%%%%%%%%%%%%%%%%%%%%%%%%%%%%%%%%%%%%%%%%%%%%%%%%%%%%%%%%%%%%%
%%%%%%%%%%%%%%%%%%%%%%%%%%%%%%%%%%%%%%%%%%%%%%%%%%%%%%%%%%%%%%%%%%%%%%%%%%%%%%%
\newpage
\appendix
\onecolumn

\section{The Forward Process with Stationary Kernels} 
\label{app:forward_process}
\begin{applemma}[Construction of Mixing Kernel]
Let $\boldsymbol{\pi} \in \mathcal{R}^N$ be a stationary target distribution defined as a mixture of the uniform distribution $\mathbf{u}$ and point-masses on special tokens $i \in \mathcal{S}$. If the noise kernel $\mathbf{K}$ satisfies the property of instantaneous mixing (i.e., convergence to $\boldsymbol{\pi}$ in a single step), it admits the decomposition:
\begin{equation}
    \mathbf{K} = \frac{k}{N}\mathbf{J} + \sum_{i \in \mathcal{S}} \mu_i \mathbf{M}_i,
\end{equation}
where $\mathbf{J}$ is the all-ones matrix, $\mathbf{M}_i$ is the absorbing matrix for state $i$, and the weights satisfy $k + \sum \mu_i = 1$.

\begin{proof}
The condition of instantaneous mixing implies that the transition probability to a destination state $j$ is solely determined by the target marginal $\pi_j$, independent of the antecedent state $s$. In terms of the stochastic matrix, this imposes row-equivalence: $\mathbf{K}_{sj} = \pi_j$ for all $s$. This is algebraically equivalent to the rank-1 factorization $\mathbf{K} = \mathbf{1}\boldsymbol{\pi}^\top$.

Substituting the mixture definition $\boldsymbol{\pi} = k \mathbf{u} + \sum_{i \in \mathcal{S}} \mu_i \mathbf{e}_i$ into this factorization yields:
\begin{align}
    \mathbf{K} &= \mathbf{1}\left( k \mathbf{u} + \sum_{i \in \mathcal{S}} \mu_i \mathbf{e}_i \right)^\top \\
    &= k (\mathbf{1}\mathbf{u}^\top) + \sum_{i \in \mathcal{S}} \mu_i (\mathbf{1}\mathbf{e}_i^\top).
\end{align}
Identifying the resulting outer products with the canonical transition matrices $\mathbf{1}\mathbf{u}^\top = N^{-1}\mathbf{J}$ (uniform transition) and $\mathbf{1}\mathbf{e}_i^\top = \mathbf{M}_i$ (absorbing transition) concludes the proof.
\end{proof}

\end{applemma}

\section{Efficient Sampling and Training via Scalar
Formulation} 
\label{app:scalar_fomulation}
Before proofing \texttt{lemma}, we introduce this definition:
\begin{definition}
    \begin{align}
    & \mathbf{g}(s,t,\mathbf{x},\mathbf{z_t})
    \\ &= 
        \alpha_t \mathbf{z_t} \odot \mathbf{x} 
        + \alpha_{t|s} \beta_s \mathbf{z_t} \odot K^\top \mathbf{x}
        + \beta_{t|s} \alpha_s K \mathbf{z_t} \odot \mathbf{x}
        + \beta_{t|s} \beta_{s} K \mathbf{z_t} \odot K^\top \mathbf{x} 
    \\ &=
        \alpha_{t|s} f_s(\mathbf{x}, \mathbf{z_t}) \mathbf{z_t}
    + 
        \beta_{t|s} r(\mathbf{z_t}) \left(
        \alpha_{s} \mathbf{x}
        + 
        \beta_{s}
        \left(\frac{k}{N} \mathbf{\mathbf{1}}  + \mu \mathbf{m}\right) 
        \right)
    \end{align}
\end{definition}

Thus, we have
\begin{align}
    & \mathbf{e}^\top \mathbf{g}(s,t,\mathbf{x},\mathbf{z_t})
    \\ &= 
        \alpha_{t|s} f_s(\mathbf{x}, \mathbf{z_t}) 
        \mathbf{e}^\top \mathbf{z_t}
        + 
        \beta_{t|s} r(\mathbf{z_t}) \left(
        \alpha_{s} \mathbf{e}^\top \mathbf{x}
        + 
        \beta_{s}
        \left(\frac{k}{N} \mathbf{e}^\top \mathbf{\mathbf{1}}  + \mu \mathbf{e}^\top \mathbf{m}\right) 
        \right)
    \\ & =
        f_s(\mathbf{x}, \mathbf{z_t}) 
        \alpha_{t|s}\mathbf{e}^\top \mathbf{z_t}
        + 
        \beta_{t|s} r(\mathbf{z_t}) \left(
        \alpha_{s} \mathbf{e}^\top \mathbf{x} + \beta_{s} r(\mathbf{e}) \right)
    \\ & =
        f_s(\mathbf{x}, \mathbf{e}) 
        \alpha_{t|s} \mathbf{e}^\top \mathbf{z_t}
        + 
        \beta_{t|s} r(\mathbf{z_t}) f_s(\mathbf{x}, \mathbf{e}) 
    \\ & =
        f_s(\mathbf{x}, \mathbf{e}) f_{t|s}(\mathbf{e}, \mathbf{z_t})
\end{align}

\begin{applemma}[Scalar Posterior]
The posterior probability of transitioning to state $\mathbf{e}$ can be formulated as:
\begin{align}
    q(\mathbf{z}_s = \mathbf{e} \mid \mathbf{z}_t, \mathbf{x})
    =
    \frac{
        f_s(\mathbf{x}, \mathbf{e})
        f_{t|s}(\mathbf{e}, \mathbf{z}_t)
    }
    {f_t(\mathbf{x},\mathbf{z}_t)}.
\end{align}
The model's reverse transition $p_\theta$ follows the same form by substituting $\mathbf{x}$ with the predicted distribution $\tilde{\mathbf{x}}_0$.

\begin{proof}
\begin{align}
    & q(\mathbf{z_s} = \mathbf{e}|\mathbf{z_t}, \mathbf{x}) 
    \\ &= 
    \mathbf{e}^\top 
    \frac{(Q_{t|s} \mathbf{z_t}) \odot (Q_s^\top \mathbf{x})}{\mathbf{z_t}^\top Q_t^\top \mathbf{x}} 
    \\ &= 
    \mathbf{e}^\top
    \frac{
    (\alpha_{t|s} \mathbf{z_t} + \beta_{t|s} K \mathbf{z_t})
        \odot (\alpha_{s} \mathbf{x} + \beta_{s} K^\top \mathbf{x})
    }{\mathbf{z_t}^\top (\alpha_t \mathbf{x} + \beta_t K^\top \mathbf{x})}
    \\ &= 
    \mathbf{e}^\top
    \frac{
        \alpha_t \mathbf{z_t} \odot \mathbf{x} 
        + \alpha_{t|s} \beta_s \mathbf{z_t} \odot K^\top \mathbf{x}
        + \beta_{t|s} \alpha_s K \mathbf{z_t} \odot \mathbf{x}
        + \beta_{t|s} \beta_{s} K \mathbf{z_t} \odot K^\top \mathbf{x}
    }{\alpha_t \mathbf{z_t}^\top \mathbf{x} + \beta_t \mathbf{z_t}^\top K^\top \mathbf{x}}
    \\ &= 
    \mathbf{e}^\top
    \frac{\mathbf{g}(s,t,\mathbf{x},\mathbf{z_t})}
    {f_t(\mathbf{x},\mathbf{z_t})}
    \\ &=
    \frac{f_s(\mathbf{x}, \mathbf{e}) f_{t|s}(\mathbf{e}, \mathbf{z_t})}
    {f_t(\mathbf{x},\mathbf{z_t})}
\end{align}

\end{proof}

\end{applemma}

\begin{applemma}[Scalar KL Divergence]
    The term $D_{\text{KL}}(q(\mathbf{z}_s \mid \mathbf{z}_t, \mathbf{x}) \parallel p_\theta(\mathbf{z}_s \mid \mathbf{z}_t))$ can be written as:
    \begin{align}
        D_{\text{KL}}
        = 
        \frac{\beta_{t|s} \alpha_{s} r(\mathbf{z}_t)}
        {f_t(\mathbf{x},\mathbf{z}_t)}
        h_t(\mathbf{x}, \mathbf{z}_t, \tilde{\mathbf{x}}_0),
    \end{align}
    where $h_t$ is an auxiliary function collecting the logarithmic difference terms:
    \begin{align}
        h_t(\mathbf{x}, \mathbf{z}_t, \tilde{\mathbf{x}}_0) 
        &=
            \frac{f_s( \mathbf{x}, \mathbf{z}_t)}
            {r(\mathbf{z}_t)}
            \frac{\alpha_{t|s}}{\beta_{t|s} \alpha_{s}}
            \log
            \frac{f_s(\mathbf{x},\mathbf{z}_t) f_t(\tilde{\mathbf{x}}_0,\mathbf{z}_t)}
            {f_t(\mathbf{x},\mathbf{z}_t)f_s(\tilde{\mathbf{x}}_0,\mathbf{z}_t)}
            \nonumber \\ 
            &\quad - 
            \frac{1}{\alpha_s}
            \log
            \frac{f_t(\mathbf{x},\mathbf{z}_t)}
            {f_t(\tilde{\mathbf{x}}_0,\mathbf{z}_t)}
            +
            \log
            \frac{f_s( \mathbf{x},\mathbf{x})}
            {f_s( \tilde{\mathbf{x}}_0,\mathbf{x})}
            \nonumber \\
            &\quad +
            \frac{k \beta_{s}}{N \alpha_{s}}
            \sum_{\mathbf{e} \in \mathcal{V}}
            \log
            \frac{f_s( \mathbf{x},\mathbf{e})}
            {f_s( \tilde{\mathbf{x}}_0,\mathbf{e})}.
    \end{align}

\begin{proof}
% kl
\begin{align}
    D_{KL} 
    &= \sum_{\mathbf{e}}
        q(\mathbf{z_s} = \mathbf{e}|\mathbf{z_t}, \mathbf{x}) 
        \log
        \frac{q(\mathbf{z_s} = \mathbf{e}|\mathbf{z_t}, \mathbf{x})}
        {p_\theta(\mathbf{z_s} = \mathbf{e}|\mathbf{z_t})}
    \\ &= \sum_{\mathbf{e}}
        q(\mathbf{z_s} = \mathbf{e}|\mathbf{z_t}, \mathbf{x}) 
        \log
        \frac{q(\mathbf{z_s} = \mathbf{e}|\mathbf{z_t}, \mathbf{x})}
        {q(\mathbf{z_s} = \mathbf{e}|\mathbf{z_t}, \tilde{\mathbf{x}}_0)}
    \\ &= 
        \sum_{\mathbf{e}}
        \frac{f_s(\mathbf{x}, \mathbf{e}) f_{t|s}(\mathbf{e}, \mathbf{z_t})}
        {f_t(\mathbf{x},\mathbf{z_t})}
        \left(
        \log
        \frac{f_s(\mathbf{x}, \mathbf{e}) f_{t|s}(\mathbf{e}, \mathbf{z_t})}
        {f_t(\mathbf{x},\mathbf{z_t})}
        -
        \log
        \frac{f_s(\tilde{\mathbf{x}}_0, \mathbf{e}) f_{t|s}(\mathbf{e}, \mathbf{z_t})}
        {f_t(\tilde{\mathbf{x}}_0,\mathbf{z_t})}
        \right)
    \\ &= 
        \frac{1}
        {f_t(\mathbf{x},\mathbf{z_t})}
        \sum_{\mathbf{e}}
        f_s(\mathbf{x}, \mathbf{e}) f_{t|s}(\mathbf{e}, \mathbf{z_t})
        \log
        \frac{f_s(\mathbf{x}, \mathbf{e}) f_t(\tilde{\mathbf{x}}_0,\mathbf{z_t})}
        {f_t(\mathbf{x},\mathbf{z_t}) f_s(\tilde{\mathbf{x}}_0, \mathbf{e})}
    \\ &= 
        \frac{1}
        {f_t(\mathbf{x},\mathbf{z_t})}
        \sum_{\mathbf{e}}
        f_s(\mathbf{x}, \mathbf{e}) 
        \left(
            \alpha_{t|s} \delta_{\mathbf{e}, \mathbf{z_t}}
            +
            \beta_{t|s} r(\mathbf{z_t})
        \right)
        \log
        \frac{f_s(\mathbf{x}, \mathbf{e}) f_t(\tilde{\mathbf{x}}_0,\mathbf{z_t})}
        {f_t(\mathbf{x},\mathbf{z_t}) f_s(\tilde{\mathbf{x}}_0, \mathbf{e})}
    \\ &= 
        \frac{\alpha_{t|s} f_s(\mathbf{x}, \mathbf{z_t})}
        {f_t(\mathbf{x},\mathbf{z_t})}
        \log
        \frac{f_s(\mathbf{x}, \mathbf{z_t}) f_t(\tilde{\mathbf{x}}_0,\mathbf{z_t})}
        {f_t(\mathbf{x},\mathbf{z_t}) f_s(\tilde{\mathbf{x}}_0, \mathbf{z_t})}
        \\ &+
        \frac{\beta_{t|s} r(\mathbf{z_t})}
        {f_t(\mathbf{x},\mathbf{z_t})}
        \log
        \frac{ f_t(\tilde{\mathbf{x}}_0,\mathbf{z_t})}
        {f_t(\mathbf{x},\mathbf{z_t})}
        \sum_{\mathbf{e}}
        f_s(\mathbf{x}, \mathbf{e}) 
        \\ &+
        \frac{\beta_{t|s} r(\mathbf{z_t})}
        {f_t(\mathbf{x},\mathbf{z_t})}
        \sum_{\mathbf{e}}
        f_s(\mathbf{x}, \mathbf{e}) 
        \log
        \frac{f_s(\mathbf{x}, \mathbf{e})}
        {f_s(\tilde{\mathbf{x}}_0, \mathbf{e})}
    \\ &= 
        \frac{\alpha_{t|s} f_s(\mathbf{x}, \mathbf{z_t})}
        {f_t(\mathbf{x},\mathbf{z_t})}
        \log
        \frac{f_s(\mathbf{x}, \mathbf{z_t}) f_t(\tilde{\mathbf{x}}_0,\mathbf{z_t})}
        {f_t(\mathbf{x},\mathbf{z_t}) f_s(\tilde{\mathbf{x}}_0, \mathbf{z_t})}
        -
        \frac{\beta_{t|s} r(\mathbf{z_t})}
        {f_t(\mathbf{x},\mathbf{z_t})}
        \log
        \frac{ f_t(\mathbf{x},\mathbf{z_t})}
        {f_t(\tilde{\mathbf{x}}_0,\mathbf{z_t})}
        \\ &+
        \frac{\beta_{t|s} r(\mathbf{z_t})}
        {f_t(\mathbf{x},\mathbf{z_t})}
        \sum_{\mathbf{e}}
        \left(\alpha_s \delta_{\mathbf{x}, \mathbf{e}} + \beta_s r(\mathbf{e}) \right)
        \log
        \frac{f_s(\mathbf{x}, \mathbf{e})}
        {f_s(\tilde{\mathbf{x}}_0, \mathbf{e})}
    \\ &= 
        \frac{\alpha_{t|s} f_s(\mathbf{x}, \mathbf{z_t})}
        {f_t(\mathbf{x},\mathbf{z_t})}
        \log
        \frac{f_s(\mathbf{x}, \mathbf{z_t}) f_t(\tilde{\mathbf{x}}_0,\mathbf{z_t})}
        {f_t(\mathbf{x},\mathbf{z_t}) f_s(\tilde{\mathbf{x}}_0, \mathbf{z_t})}
        -
        \frac{\beta_{t|s} r(\mathbf{z_t})}
        {f_t(\mathbf{x},\mathbf{z_t})}
        \log
        \frac{ f_t(\mathbf{x},\mathbf{z_t})}
        {f_t(\tilde{\mathbf{x}}_0,\mathbf{z_t})}
        \\ &+
        \frac{\beta_{t|s} \alpha_s r(\mathbf{z_t})}
        {f_t(\mathbf{x},\mathbf{z_t})}
        \log
        \frac{f_s(\mathbf{x}, \mathbf{x})}
        {f_s(\tilde{\mathbf{x}}_0, \mathbf{x})}
        \\ &+
        \frac{k  \beta_{t|s} \beta_s r(\mathbf{z_t}) }
        {N f_t(\mathbf{x},\mathbf{z_t})}
        \sum_{\mathbf{e}}
        \log
        \frac{f_s(\mathbf{x}, \mathbf{e})}
        {f_s(\tilde{\mathbf{x}}_0, \mathbf{e})}
        +
        \frac{\mu \beta_{t|s} \beta_s r(\mathbf{z_t}) }
        {f_t(\mathbf{x},\mathbf{z_t})}
        \log
        \frac{f_s(\mathbf{x}, \mathbf{m})}
        {f_s(\tilde{\mathbf{x}}_0, \mathbf{m})}
    \\ &= 
    \frac{\beta_{t|s} \alpha_{s} r(\mathbf{z_t})}
    {f_t(\mathbf{x},\mathbf{z_t})}
    h_t(\mathbf{x},\mathbf{z_t})
\end{align}

We omit the term $\log {f_s(\mathbf{x}, \mathbf{m})} / {f_s(\tilde{\mathbf{x}}_0, \mathbf{m})}$ as it equals zero for all $\mathbf{x}$ and $\tilde{\mathbf{x}}_0$.

\end{proof}
 
\end{applemma}

\begin{applemma}[Limiting Case]
    As $s \to t$, the function $h_t$ converges to:
    \begin{align}
        h_t(\mathbf{x}, \mathbf{z}_t, \tilde{\mathbf{x}}_0) &\approx
            \frac{
            p_{\mathbf{x}, \mathbf{z}_t} - p_{\tilde{\mathbf{x}}_0,\mathbf{z}_t}  
            }
            {
                f_t(\tilde{\mathbf{x}}_0,\mathbf{z}_t)
            }
            - 
            \frac{1}
            {\alpha_t}
            \log
            \frac{f_t(\mathbf{x},\mathbf{z}_t)}
            {f_t(\tilde{\mathbf{x}}_0,\mathbf{z}_t)}
            \nonumber+ \\
            \log
            \frac{f_t( \mathbf{x},\mathbf{x})}
            {f_t( \tilde{\mathbf{x}}_0,\mathbf{x})}
            &+
            \frac{ k  \beta_{t} }
            {N \alpha_{t}}
            \sum_{\mathbf{e} \in \mathcal{V}}
            \log
            \frac{f_t( \mathbf{x},\mathbf{e})}
            {f_t( \tilde{\mathbf{x}}_0,\mathbf{e})}.
    \end{align}

\begin{proof}
    Notice that when $s \to t$, $\beta_{t|s} \alpha_{s} \to 0$, which is a $0/0$ div. by applying L'Hôpital's rule. (the up and down both differentiate by $\alpha_{t|s}$),  we have
    \begin{align}           
        & \lim_{s \to t} \frac{1}{\beta_{t|s} \alpha_{s}}
        \log
        \frac{f_s(\mathbf{x},\mathbf{z_t}) f_t(\tilde{\mathbf{x}}_0,\mathbf{z_t})}
        {f_t(\mathbf{x},\mathbf{z_t})f_s(\tilde{\mathbf{x}}_0,\mathbf{z_t})}
        \\ &= 
        \frac{-1}{\alpha_s}
        \frac{\partial}{\partial \alpha_{t|s}}
        \log
        \frac{f_s(\mathbf{x},\mathbf{z_t}) f_t(\tilde{\mathbf{x}}_0,\mathbf{z_t})}
        {f_t(\mathbf{x},\mathbf{z_t})f_s(\tilde{\mathbf{x}}_0,\mathbf{z_t})}
        \\ &= 
        \frac{-1}{\alpha_s}
        \left(
        \frac{\partial \log f_s(\mathbf{x},\mathbf{z_t}) f_t(\tilde{\mathbf{x}}_0,\mathbf{z_t}) }{\partial \alpha_{t|s}}
        -
        \frac{\partial \log f_t(\mathbf{x},\mathbf{z_t}) f_s(\tilde{\mathbf{x}}_0,\mathbf{z_t}) }{\partial \alpha_{t|s}}
        \right)
        \\ &= 
        \frac{-1}{\alpha_s}
        \left(
        \frac{\partial \log f_t(\tilde{\mathbf{x}}_0,\mathbf{z_t}) }{\partial \alpha_{t|s}}
        -
        \frac{\partial \log f_t(\mathbf{x},\mathbf{z_t}) }{\partial \alpha_{t|s}}
        \right)
        \\ &= 
        \frac{-1}{\alpha_s}
        \left(
        \frac{1}{f_t(\tilde{\mathbf{x}}_0,\mathbf{z_t})}
        \frac{\partial 
        (\alpha_t p_{\tilde{\mathbf{x}}_0,\mathbf{z_t}} + (1 - \alpha_t) r(\mathbf{z_t}))}{\partial \alpha_{t|s}}
        -
        \frac{1}{f_t(\mathbf{x},\mathbf{z_t})}
        \frac{\partial f_t(\mathbf{x},\mathbf{z_t}) }{\partial \alpha_{t|s}}
        \right)
        \\ &= 
        \frac{-1}{\alpha_s}
        \left(
        \frac{1}{f_t(\tilde{\mathbf{x}}_0,\mathbf{z_t})}
        \left(
        \alpha_s p_{\tilde{\mathbf{x}}_0,\mathbf{z_t}} - \alpha_s r(\mathbf{z_t})
        \right)
        -
        \frac{1}{f_t(\mathbf{x},\mathbf{z_t})}
        \frac{\partial f_t(\mathbf{x},\mathbf{z_t}) }{\partial \alpha_{t|s}}
        \right)
        \\ &= 
        -
        \frac{p_{\tilde{\mathbf{x}}_0,\mathbf{z_t}} -  r(\mathbf{z_t})}{f_t(\tilde{\mathbf{x}}_0,\mathbf{z_t})}
        +
        \frac{p_{\mathbf{x},\mathbf{z_t}} -  r(\mathbf{z_t})}{f_t(\mathbf{x},\mathbf{z_t})}
        \\ &= 
        \frac{(\alpha_t p_{\tilde{\mathbf{x}}_0,\mathbf{z_t}} + \beta_t r(\mathbf{z_t}))
        (p_{\mathbf{x},\mathbf{z_t}} - r(\mathbf{z_t}))
        -
        (\alpha_t p_{\mathbf{x},\mathbf{z_t}} + \beta_t r(\mathbf{z_t}))
        (p_{\tilde{\mathbf{x}}_0,\mathbf{z_t}} - r(\mathbf{z_t}))}
        {f_t(\mathbf{x},\mathbf{z_t}) f_t(\tilde{\mathbf{x}}_0,\mathbf{z_t})}
        \\ &= 
        \frac{
        - \alpha_t p_{\tilde{\mathbf{x}}_0,\mathbf{z_t}} r(\mathbf{z_t})
        + \beta_t p_{\mathbf{x},\mathbf{z_t}} r(\mathbf{z_t}) 
        +
        \alpha_t p_{\mathbf{x},\mathbf{z_t}} r(\mathbf{z_t})
        -
        \beta_t p_{\tilde{\mathbf{x}}_0,\mathbf{z_t}} r(\mathbf{z_t}) 
        }
        {f_t(\mathbf{x},\mathbf{z_t}) f_t(\tilde{\mathbf{x}}_0,\mathbf{z_t})}
        \\ &= 
        \frac{
        r(\mathbf{z_t})  (p_{\mathbf{x},\mathbf{z_t}} - p_{\tilde{\mathbf{x}}_0,\mathbf{z_t}})
        }
        {f_t(\mathbf{x},\mathbf{z_t}) f_t(\tilde{\mathbf{x}}_0,\mathbf{z_t})}
    \end{align}
    so
    \begin{align}
        \lim_{s \to t} 
        \left(
            \frac{f_s( \mathbf{x}, \mathbf{z_t})}
            {r(\mathbf{z_t})}
            \frac{\alpha_{t|s}}{\beta_{t|s} \alpha_{s}}
            \log
            \frac{f_s(\mathbf{x},\mathbf{z_t}) f_t(\tilde{\mathbf{x}}_0,\mathbf{z_t})}
            {f_t(\mathbf{x},\mathbf{z_t})f_s(\tilde{\mathbf{x}}_0,\mathbf{z_t})}
        \right)
        =
        \frac{
         p_{\mathbf{x},\mathbf{z_t}} - p_{\tilde{\mathbf{x}}_0,\mathbf{z_t}}
        }
        {f_t(\tilde{\mathbf{x}}_0,\mathbf{z_t})}
    \end{align}

\end{proof}

\end{applemma}

\section{Relationship to MDLM and UDLM} 
\label{app:relationship}
\begin{applemma}
    XDLM can be reduced to MDLM by setting $k = 0$, with the posterior probability:

\begin{align}
    p_\theta(\mathbf{z_s} = \mathbf{e} | \mathbf{z_t})
    &= 
    \delta_{\mathbf{z_t}, \mathbf{m}}
    \delta_{\mathbf{e}, \mathbf{m}}
    \frac{\beta_s}{\beta_t}
    +
    \delta_{\mathbf{z_t}, \mathbf{m}}
    \delta_{\mathbf{e} \ne \mathbf{m}}
    \frac{\beta_{t|s}\alpha_s p_{\theta, \mathbf{e}}}
    {\beta_t}
    +
    \delta_{\mathbf{z_t} \ne \mathbf{m}}
    \delta_{\mathbf{e},\mathbf{z_t}}
    \\ &= 
    \begin{cases}
        \frac{\beta_s}{\beta_t} \delta_{\mathbf{m},\mathbf{z_t}}
        ,
        & \mathbf{e} = \mathbf{m}
        \\
        \frac{(\alpha_s - \alpha_t)}
        {1 - \alpha_t}
        p_{\theta, \mathbf{e}}
        ,
        & \mathbf{z_t} = \mathbf{m}, \mathbf{e} \ne \mathbf{m}
        \\
        \delta_{\mathbf{e},\mathbf{z_t}}
        ,
        & \mathbf{z_t} \ne \mathbf{m}, \mathbf{e} \ne \mathbf{m}
        \\
    \end{cases}
\end{align}

and kl divergence

\begin{align}
    KL 
    =
    -\frac{\beta_{t|s} \alpha_{s}}{\beta_{t}}
    \log p_{\theta, \mathbf{x}}
\end{align}

\begin{proof}
    since $k = 0$, $\mu = 1$, we have 
    \begin{align}
        r(\mathbf{e}) &= \delta_{\mathbf{e}, \mathbf{m}} \\
        p_{\tilde{\mathbf{x}}_0, \mathbf{e}} &= \delta_{\mathbf{e} \ne \mathbf{m}} p_{\tilde{\mathbf{x}}_0, \mathbf{e}}
    \end{align}
    hence
    \begin{align}
        q(\mathbf{z_s} = \mathbf{e} | \mathbf{z_t}, \tilde{\mathbf{x}}_0)
        &=
        \frac{
            f_s(\tilde{\mathbf{x}}_0, \mathbf{e})
            f_{t|s}(\mathbf{e}, \mathbf{z_t})
        }
        {f_t(\tilde{\mathbf{x}}_0,\mathbf{z_t})}
        \\ &=
        \frac{
            (\alpha_s p_{\tilde{\mathbf{x}}_0, \mathbf{e}} + \beta_s r(\mathbf{e}))
            (\alpha_{t|s} p_{\mathbf{e}, \mathbf{z_t}} + \beta_{t|s} r(\mathbf{z_t}))
        }
        {\alpha_t p_{\tilde{\mathbf{x}}_0, \mathbf{z_t}} + \beta_t r(\mathbf{z_t})}
        \\ &=
        \frac{
             \alpha_t p_{\tilde{\mathbf{x}}_0, \mathbf{e}} \delta_{\mathbf{e}, \mathbf{z_t}}
             +
             \alpha_s \beta_{t|s} p_{\tilde{\mathbf{x}}_0, \mathbf{e}} 
             \delta_{\mathbf{z_t}, \mathbf{m}}
             +
             \alpha_{t|s} \beta_s \delta_{\mathbf{e}, \mathbf{z_t}}
              \delta_{\mathbf{e}, \mathbf{m}}
             +
            \beta_s \beta_{t|s} \delta_{\mathbf{e}, \mathbf{m}}
             \delta_{\mathbf{z_t}, \mathbf{m}}
        }
        {\alpha_t p_{\tilde{\mathbf{x}}_0, \mathbf{z_t}} + \beta_t \delta_{\mathbf{z_t}, \mathbf{m}}}
        \\ &=
        \delta_{\mathbf{z_t}, \mathbf{m}}
        \frac{
             \alpha_s \beta_{t|s} p_{\tilde{\mathbf{x}}_0, \mathbf{e}}
             +
             \alpha_{t|s} \beta_s \delta_{\mathbf{e}, \mathbf{z_t}}
             +
            \beta_s \beta_{t|s} \delta_{\mathbf{e}, \mathbf{z_t}}
        }
        {\beta_t}
        \\ &+
        \delta_{\mathbf{z_t} \ne \mathbf{m}}
        \frac{
             \alpha_t p_{\tilde{\mathbf{x}}_0, \mathbf{e}} \delta_{\mathbf{e}, \mathbf{z_t}}
        }
        {\alpha_t p_{\tilde{\mathbf{x}}_0, \mathbf{z_t}}}
    \\ &=
        \delta_{\mathbf{z_t}, \mathbf{m}}
        \left(
        \frac{\alpha_s \beta_{t|s}}
        {\beta_t} p_{\tilde{\mathbf{x}}_0, \mathbf{e}}
        +
        \frac{\beta_s}{\beta_t}
        \delta_{\mathbf{e}, \mathbf{z_t}}
        \right)
        +
        \delta_{\mathbf{z_t} \ne \mathbf{m}}
        \delta_{\mathbf{e}, \mathbf{z_t}}
        \frac{p_{\tilde{\mathbf{x}}_0, \mathbf{e}}}
        {p_{\tilde{\mathbf{x}}_0, \mathbf{z_t}}}
    \\ &=
        \delta_{\mathbf{z_t}, \mathbf{m}}
        \frac{\alpha_s \beta_{t|s}} {\beta_t} p_{\tilde{\mathbf{x}}_0, \mathbf{e}}
        +
        \frac{\beta_s}{\beta_t}
        \delta_{\mathbf{z_t}, \mathbf{m}}
        \delta_{\mathbf{e}, \mathbf{z_t}}
        +
        \delta_{\mathbf{z_t} \ne \mathbf{m}}
        \delta_{\mathbf{e}, \mathbf{z_t}}
    \end{align}
    for kl, we have
    \begin{align}
        h_t(\mathbf{x},\mathbf{z_t}) 
        &=
            \frac{f_s( \mathbf{x}, \mathbf{z_t})}
            {r(\mathbf{z_t})}
            \frac{\alpha_{t|s}}{\beta_{t|s} \alpha_{s}}
            \log
            \frac{f_s(\mathbf{x},\mathbf{z_t}) f_t(\tilde{\mathbf{x}}_0,\mathbf{z_t})}
            {f_t(\mathbf{x},\mathbf{z_t})f_s(\tilde{\mathbf{x}}_0,\mathbf{z_t})}
            \\ &- 
            \frac{1}{\alpha_s}
            \log
            \frac{f_t(\mathbf{x},\mathbf{z_t})}
            {f_t(\tilde{\mathbf{x}}_0,\mathbf{z_t})}
            +
            \log
            \frac{f_s( \mathbf{x},\mathbf{x})}
            {f_s( \tilde{\mathbf{x}}_0,\mathbf{x})}
            +
            \frac{k \beta_{s}}{N \alpha_{s}}
            \sum_{\mathbf{e}}
            \log
            \frac{f_s( \mathbf{x},\mathbf{e})}
            {f_s( \tilde{\mathbf{x}}_0,\mathbf{e})}
        \\ &=
            \frac{f_s( \mathbf{x}, \mathbf{z_t})}
            {r(\mathbf{z_t})}
            \frac{\alpha_{t|s}}{\beta_{t|s} \alpha_{s}}
            \log
            \frac{
                (\alpha_s p_{\mathbf{x},\mathbf{z_t}} + \beta_s \delta_{\mathbf{z_t}, \mathbf{m}})
                (\alpha_t p_{\tilde{\mathbf{x}}_0,\mathbf{z_t}} + \beta_t \delta_{\mathbf{z_t}, \mathbf{m}})
            }
            {
                (\alpha_t p_{\mathbf{x},\mathbf{z_t}} + \beta_t \delta_{\mathbf{z_t}, \mathbf{m}})
                (\alpha_s p_{\tilde{\mathbf{x}}_0,\mathbf{z_t}} + \beta_s \delta_{\mathbf{z_t}, \mathbf{m}})
            }
            \\ &- 
            \frac{1}{\alpha_s}
            \log
            \frac{\alpha_t p_{\mathbf{x},\mathbf{z_t}} + \beta_t \delta_{\mathbf{z_t}, \mathbf{m}}}
            {\alpha_t p_{\tilde{\mathbf{x}}_0,\mathbf{z_t}} + \beta_t \delta_{\mathbf{z_t}, \mathbf{m}}}
            -
            \log
            p_{\tilde{\mathbf{x}}_0,\mathbf{x}}
        \\ &=
            \frac{f_s( \mathbf{x}, \mathbf{z_t})}
            {r(\mathbf{z_t})}
            \frac{\alpha_{t|s}}{\beta_{t|s} \alpha_{s}}
            \log
            \left(
                \delta_{\mathbf{z_t}, \mathbf{m}} 
                \frac{
                    \beta_s \delta_{\mathbf{z_t}, \mathbf{m}}
                    \beta_t \delta_{\mathbf{z_t}, \mathbf{m}}
                }
                {
                    \beta_t \delta_{\mathbf{z_t}, \mathbf{m}}
                    \beta_s \delta_{\mathbf{z_t}, \mathbf{m}}
                }
                +
                \delta_{\mathbf{z_t} \ne \mathbf{m}} 
                \frac{
                    \alpha_s p_{\mathbf{x},\mathbf{z_t}}
                    \alpha_t p_{\tilde{\mathbf{x}}_0,\mathbf{z_t}}
                }
                {
                    \alpha_t p_{\mathbf{x},\mathbf{z_t}}
                    \alpha_s p_{\tilde{\mathbf{x}}_0,\mathbf{z_t}}
                }
            \right)
            \\ &- 
            \frac{1}{\alpha_s}
            \log
            \left(
                \delta_{\mathbf{z_t}, \mathbf{m}} 
                \frac{\beta_t \delta_{\mathbf{z_t}, \mathbf{m}}}
                {\beta_t \delta_{\mathbf{z_t}, \mathbf{m}}}
                +
                \delta_{\mathbf{z_t} \ne \mathbf{m}} 
                \frac{\alpha_t p_{\mathbf{x},\mathbf{z_t}}}
                {\alpha_t p_{\tilde{\mathbf{x}}_0,\mathbf{z_t}}}
            \right)
            -
            \log
            p_{\tilde{\mathbf{x}}_0,\mathbf{x}}
        \\ &=
            \frac{f_s( \mathbf{x}, \mathbf{z_t})}
            {r(\mathbf{z_t})}
            \frac{\alpha_{t|s}}{\beta_{t|s} \alpha_{s}}
            \log
            \left(1\right)
            - 
            \frac{1}{\alpha_s}
            \delta_{\mathbf{z_t} \ne \mathbf{m}} 
            \log
            \frac{p_{\mathbf{x},\mathbf{z_t}}}
            {p_{\tilde{\mathbf{x}}_0,\mathbf{z_t}}}
            -
            \log
            p_{\tilde{\mathbf{x}}_0,\mathbf{x}}
    \end{align}
    so
    \begin{align}
        D_{KL} 
        &= 
        \frac{\beta_{t|s} \alpha_{s} r(\mathbf{z_t})}
        {f_t(\mathbf{x},\mathbf{z_t})}
        h_t(\mathbf{x},\mathbf{z_t})
        \\ &=
        -\frac{\beta_{t|s} \delta_{\mathbf{z_t}, \mathbf{m}} }
        {f_t(\mathbf{x},\mathbf{z_t})}
        \delta_{\mathbf{z_t} \ne \mathbf{m}} 
        \log
        \frac{p_{\mathbf{x},\mathbf{z_t}}}
        {p_{\tilde{\mathbf{x}}_0,\mathbf{z_t}}}
        -
        \frac{\beta_{t|s} \alpha_{s} \delta_{\mathbf{z_t}, \mathbf{m}}}
        {\alpha_t \delta_{\mathbf{x}, \mathbf{z_t}} + \beta_t \delta_{\mathbf{z_t}, \mathbf{m}}}
        \log p_{\tilde{\mathbf{x}}_0,\mathbf{x}}
        \\ &=
        -\delta_{\mathbf{z_t}, \mathbf{m}}
        \frac{\beta_{t|s} \alpha_{s} }
        {\beta_t}
        \log p_{\tilde{\mathbf{x}}_0,\mathbf{x}}
    \end{align}
    
\end{proof}

\end{applemma}

\begin{applemma}
    XDLM can be reduced to UDLM by setting $k = 1$, with the posterior probability:

\begin{align}
    p_\theta(\mathbf{z_s} = \mathbf{e} | \mathbf{z_t})
    &=
    \frac{
        \delta_{\mathbf{e},\mathbf{z_t}}(
        N \alpha_t p_{\theta, \mathbf{e}} + \beta_s \alpha_{t|s})
        +
        (\alpha_{s} - \alpha_{t}) p_{\theta, \mathbf{e}}
        +
        \beta_s \beta_{t|s} / N
    }
    {N \alpha_t p_{\theta, \mathbf{z_t}} + \beta_t}
\end{align}

and kl divergance
% kl
\begin{align}
    KL 
    &=
    \frac{-\beta_{t|s} \alpha_{s}}{N \alpha_t}
    \left(
        \frac{1}{f_t(\tilde{\mathbf{x}}_0,\mathbf{z_t})}
        -
        \frac{1}{f_t(\mathbf{x},\mathbf{z_t})}
        -
        \sum_{\mathbf{e} \ne \mathbf{z_t}}
        \frac{f_t( \mathbf{x},\mathbf{e})}
        {f_t( \mathbf{x},\mathbf{z_t})}
        \log
        \frac{ f_t(\tilde{\mathbf{x}}_0,\mathbf{z_t}) f_t( \mathbf{x},\mathbf{e})}
        {f_t( \tilde{\mathbf{x}}_0,\mathbf{e}) f_t(\mathbf{x},\mathbf{z_t})}
    \right)
\end{align}

\begin{proof}
    since $k = 1$, $\mu = 0$, we have 
    \begin{align}
        % \forall x \in S, \, x > 0
        \forall \mathbf{e}, \, r(\mathbf{e}) = r = \frac{1}{N}
    \end{align}
    hence
    \begin{align}
        q(\mathbf{z_s} = \mathbf{e} | \mathbf{z_t}, \tilde{\mathbf{x}}_0)
        &=
        \frac{
            f_s(\tilde{\mathbf{x}}_0, \mathbf{e})
            f_{t|s}(\mathbf{e}, \mathbf{z_t})
        }
        {f_t(\tilde{\mathbf{x}}_0,\mathbf{z_t})}
        \\ &=
        \frac{
            (\alpha_s p_{\tilde{\mathbf{x}}_0, \mathbf{e}} + \beta_s / N)
            (\alpha_{t|s} p_{\mathbf{e}, \mathbf{z_t}} + \beta_{t|s} / N)
        }
        {\alpha_t p_{\tilde{\mathbf{x}}_0, \mathbf{z_t}} + \beta_t / N}
        \\ &=
        \frac{
             N \alpha_t p_{\tilde{\mathbf{x}}_0, \mathbf{e}} \delta_{\mathbf{e}, \mathbf{z_t}}
             +
             \alpha_s \beta_{t|s} p_{\tilde{\mathbf{x}}_0, \mathbf{e}} 
             +
             \alpha_{t|s} \beta_s \delta_{\mathbf{e}, \mathbf{z_t}}
             +
            \beta_s \beta_{t|s} / N
        }
        {N \alpha_t p_{\tilde{\mathbf{x}}_0, \mathbf{z_t}} + \beta_t}
        \\ &=
        \frac{
            \delta_{\mathbf{e}, \mathbf{z_t}}
            (
             N \alpha_t p_{\tilde{\mathbf{x}}_0, \mathbf{e}} 
             + 
             \alpha_{t|s} \beta_s
            )
             +
             (\alpha_s - \alpha_t) p_{\tilde{\mathbf{x}}_0, \mathbf{e}} 
             +
            \beta_s \beta_{t|s} / N
        }
        {N \alpha_t p_{\tilde{\mathbf{x}}_0, \mathbf{z_t}} + \beta_t}
    \end{align}
    given $s \to t$, for kl, we have
    \begin{align}
        & h_t(\mathbf{x},\mathbf{z_t}) 
        \\ &=
            \frac{
            p_{\mathbf{x}, \mathbf{z_t}} - p_{\tilde{\mathbf{x}}_0,\mathbf{z_t}}  
            }
            {
                f_t(\tilde{\mathbf{x}}_0,\mathbf{z_t})
            }
            - 
            \frac{1}
            {\alpha_t}
            \log
            \frac{f_t(\mathbf{x},\mathbf{z_t})}
            {f_t(\tilde{\mathbf{x}}_0,\mathbf{z_t})}
            +
            \log
            \frac{f_t( \mathbf{x},\mathbf{x})}
            {f_t( \tilde{\mathbf{x}}_0,\mathbf{x})}
            +
            \frac{ k  \beta_{t} }
            {N \alpha_{t}}
            \sum_{\mathbf{e}}
            \log
            \frac{f_t( \mathbf{x},\mathbf{e})}
            {f_t( \tilde{\mathbf{x}}_0,\mathbf{e})}
        \\ &=
            f_t(\mathbf{x},\mathbf{z_t})
            \frac{
            p_{\mathbf{x}, \mathbf{z_t}} - p_{\tilde{\mathbf{x}}_0,\mathbf{z_t}}  
            }
            {
                f_t(\mathbf{x},\mathbf{z_t})
                f_t(\tilde{\mathbf{x}}_0,\mathbf{z_t})
            }
            - 
            \frac{1}
            {\alpha_t}
            \log
            \frac{f_t(\mathbf{x},\mathbf{z_t})}
            {f_t(\tilde{\mathbf{x}}_0,\mathbf{z_t})}
            +
            \log
            \frac{f_t( \mathbf{x},\mathbf{x})}
            {f_t( \tilde{\mathbf{x}}_0,\mathbf{x})}
            \\ &+
            \frac{\beta_{t} }
            {N \alpha_{t}}
            \sum_{\mathbf{e}}
            \log
            \frac{ f_t(\tilde{\mathbf{x}}_0,\mathbf{z_t}) f_t( \mathbf{x},\mathbf{e})}
            {f_t( \tilde{\mathbf{x}}_0,\mathbf{e}) f_t(\mathbf{x},\mathbf{z_t})}
            -
            \frac{\beta_{t} }
            {N \alpha_{t}}
            \sum_{\mathbf{e}}
            \log
            \frac{f_t(\tilde{\mathbf{x}}_0,\mathbf{z_t})}
            {f_t(\mathbf{x},\mathbf{z_t})}
        \\ &=
            \frac{f_t(\mathbf{x},\mathbf{z_t})}{\alpha_t}
            \left(
                \frac{1}{f_t(\tilde{\mathbf{x}}_0,\mathbf{z_t})}
                -
                \frac{1}{f_t(\mathbf{x},\mathbf{z_t})}
            \right)
            -
            \log
            \frac{f_t(\mathbf{x},\mathbf{z_t})}
            {f_t(\tilde{\mathbf{x}}_0,\mathbf{z_t})}
            +
            \log
            \frac{f_t( \mathbf{x},\mathbf{x})}
            {f_t( \tilde{\mathbf{x}}_0,\mathbf{x})}
            \\ &+
            \frac{\beta_{t} }
            {N \alpha_{t}}
            \sum_{\mathbf{e}}
            \log
            \frac{ f_t(\tilde{\mathbf{x}}_0,\mathbf{z_t}) f_t( \mathbf{x},\mathbf{e})}
            {f_t( \tilde{\mathbf{x}}_0,\mathbf{e}) f_t(\mathbf{x},\mathbf{z_t})}
        \\ &=
            \frac{f_t(\mathbf{x},\mathbf{z_t})}{\alpha_t}
            \left(
                \frac{1}{f_t(\tilde{\mathbf{x}}_0,\mathbf{z_t})}
                -
                \frac{1}{f_t(\mathbf{x},\mathbf{z_t})}
            \right)
            -
            \log
            \frac{f_t(\mathbf{x},\mathbf{z_t})}
            {f_t(\tilde{\mathbf{x}}_0,\mathbf{z_t})}
            +
            \log
            \frac{f_t( \mathbf{x},\mathbf{x})}
            {f_t( \tilde{\mathbf{x}}_0,\mathbf{x})}
            \\ &+
            \frac{1}
            {\alpha_{t}}
            \sum_{\mathbf{e}}
            f_t( \mathbf{x},\mathbf{e})
            \log
            \frac{ f_t(\tilde{\mathbf{x}}_0,\mathbf{z_t}) f_t( \mathbf{x},\mathbf{e})}
            {f_t( \tilde{\mathbf{x}}_0,\mathbf{e}) f_t(\mathbf{x},\mathbf{z_t})}
            -
            \log
            \frac{ f_t(\tilde{\mathbf{x}}_0,\mathbf{z_t}) f_t( \mathbf{x},\mathbf{x})}
            {f_t( \tilde{\mathbf{x}}_0,\mathbf{x}) f_t(\mathbf{x},\mathbf{z_t})}
        \\ &=
            \frac{f_t(\mathbf{x},\mathbf{z_t})}{\alpha_t}
            \left(
                \frac{1}{f_t(\tilde{\mathbf{x}}_0,\mathbf{z_t})}
                -
                \frac{1}{f_t(\mathbf{x},\mathbf{z_t})}
            \right)
            +
            \frac{1}
            {\alpha_{t}}
            \sum_{\mathbf{e}}
            f_t( \mathbf{x},\mathbf{e})
            \log
            \frac{ f_t(\tilde{\mathbf{x}}_0,\mathbf{z_t}) f_t( \mathbf{x},\mathbf{e})}
            {f_t( \tilde{\mathbf{x}}_0,\mathbf{e}) f_t(\mathbf{x},\mathbf{z_t})}
    \end{align}
    so
    \begin{align}
        D_{KL} 
        &= 
        \frac{\beta_{t|s} \alpha_{s} r(\mathbf{z_t})}
        {f_t(\mathbf{x},\mathbf{z_t})}
        h_t(\mathbf{x},\mathbf{z_t})
        \\ &=
            \frac{-\beta_{t|s} \alpha_{s}}{N \alpha_t}
            \left(
                \frac{1}{f_t(\mathbf{x},\mathbf{z_t})}
                -
                \frac{1}{f_t(\tilde{\mathbf{x}}_0,\mathbf{z_t})}
                -
                \sum_{\mathbf{e}}
                \frac{f_t( \mathbf{x},\mathbf{e})}
                {f_t( \mathbf{x},\mathbf{z_t})}
                \log
                \frac{ f_t(\tilde{\mathbf{x}}_0,\mathbf{z_t}) f_t( \mathbf{x},\mathbf{e})}
                {f_t( \tilde{\mathbf{x}}_0,\mathbf{e}) f_t(\mathbf{x},\mathbf{z_t})}
            \right)
    \end{align}    
\end{proof}
    
\end{applemma}

\section{LM1B Sampling Case}
\label{app:lm1b_sample_case}
We investigate the relationship between the inference computational budget (defined by the number of sampling steps $T$) and the perceptual quality of the generated text. 
Tab.~\ref{tab:case_lm1b} presents randomly selected, non-curated samples from the XDLM model trained on the LM1B. 
Each sample represents the final output of a distinct diffusion process constrained to a specific total step count $T \in \{4, 8, 16, 32, 64, 128\}$.

At the lower bound of the sampling budget ($T=4$), the model fails to converge to the data manifold. the output is characterized by disjointed lexical retrieval (e.g., ``\textit{the advertising -tag prong}'') where individual tokens are valid but lack syntactic binding. 
Increasing the budget to $T=8$ yields the emergence of superficial syntactic structures (e.g., ``\textit{curt schilling is a man of newness}''), yet the semantic content remains illogical, indicating that the diffusion process resolves local grammatical dependencies prior to higher-order semantic meaning.

As the sampling steps increase to the intermediate regime ($T=16$ and $T=32$), we observe a transition from mere grammatical correctness to narrative plausibility. 
At $T=16$, the text exhibits valid sentence structures but suffers from tautological repetition and thematic drift (e.g., ``\textit{middle east in the middle east}''). 
By $T=32$, the model generates cohesive clauses with clear subject-verb-object relationships, although the specific entities often remain synthetic or hallucinatory (e.g., ``\textit{mr poundhead}''). 
This suggests that while 32 steps are sufficient for the model to learn the structural rules of the language, the denoising process has not yet fully aligned the output with the specific factual distributions of the training corpus.

A distinct phase transition in sample quality is observed at $T=64$, which appears to represent the saturation point for high-fidelity generation. 
At this stage, the model produces semantically robust text indistinguishable from natural news data, successfully handling complex entity lists (e.g., ``\textit{Congo, Benin, Ivory Coast}'') and domain-specific terminology (e.g., ``\textit{National Transportation Safety Board}''). 
Extending the process to $T=128$ yields marginal improvements in long-range consistency and the precision of quantitative figures, but the perceptual gain over $T=64$ is significantly smaller than the leap observed from $T=32$. 
This trajectory indicates that while XDLM requires a minimum threshold of approximately 64 steps to resolve fine-grained semantic details, further increases in computational cost yield diminishing returns in generation quality.

\begin{longtable}{p{0.12\textwidth} p{0.83\textwidth}}
\caption{\textbf{Uncurated samples generated by XDLM trained on LM1B.} Each sample represents the output of a distinct diffusion process constrained to a specific total step count $T$. The sequence length is fixed at 128 tokens. } \label{tab:case_lm1b} \\

\toprule
\textbf{Steps ($T$)} & \textbf{Generated Text Sample} \\
\midrule
\endfirsthead

\toprule
\textbf{Steps ($T$)} & \textbf{Generated Text Sample (Continued)} \\
\midrule
\endhead

\bottomrule
\endlastfoot

\textbf{$T=4$} & \footnotesize\ttfamily
[CLS]n. [CLS] the advertising -tag prong those former candidates is sept. [CLS] new and aspiring contenders boasted a present, confusing not candidates not only for 15 - u running bonds. [CLS] passengers on board the puerto rico jet jet out to the ash oftwa, hawaii, wednesday that afternoon. [CLS] as a result of the fusion going from in horsepower and to okter and the ram and to gm, s rentalearing / thirds was the infall. [CLS] times square may undergo restorations [CLS] and " old allah " is an homage to " the a, the " god " in snowy islamic faith. [CLS] unfortunately christina sees, [CLS]
\\ \midrule

\textbf{$T=8$} & \footnotesize\ttfamily
[CLS] be obese to go over. [CLS] curt schilling is a man of newness. [CLS] mr middle had, having not started the business for 15 years, wondered for one only anyone on board had once wanted from kraft. [CLS] the new oft - appointed owner proposed wednesday that the administration should take a " punitive tax supplement from treasury " and insisted on indiana and michigan adhering to gm's contract terms. [CLS] ms. cummings : one at times decides to undergo restoration. [CLS] and for now pakistani parliament is now resigned to a democratic, destabilization vote in parting zardari, which sees itself [CLS]
\\ \midrule

\textbf{$T=16$} & \footnotesize\ttfamily
[CLS] " fails to clear advertising restrictions. [CLS] the network, which is the only known new middle east in the middle east, is expected for about 10 to 15 years. [CLS] i believe the people on board had called very pleased, how i was able to explain it in me. [CLS] the studio 86 is nearly a lot of contemporary televised - auto show - marquees - cars. the pre - gm bio tasted like two - thirds of iconic americans : one - third of the time that ford used and around 4 years the original was back to a building it made years of. [CLS] in contrast, zardari, who sees washington [CLS]
\\ \midrule

\textbf{$T=32$} & \footnotesize\ttfamily
[CLS] me, to clear the books. [CLS] that sense of neutrality is likely to force some postponements of the dispute, which was resolved wednesday. [CLS] 15 told the telegraph that he always believed the board had convinced him it was time to take the train. [CLS] mr poundhead wants the city council to create a series of new visitors'areas. [CLS] marfor, according to le monde, " is the afghans to steal or take and foot out " - - money the taliban has often used to buy from pakistani taliban, not pakistan's. [CLS] though some of its friends in the west zardari said he sees k [CLS]
\\ \midrule

\textbf{$T=64$} & \footnotesize\ttfamily
[CLS] me, was the crew of the xv230, which is carrying congo, benin, ivory coast, cape verde, manchester russians and coastguards at luanda. [CLS] the national transportation safety board had called the electrical life support vests to a safe spot above hawaii. [CLS] the daily herald said analysts expect earnings of \$ 394 million, or 43 cents a share, due to better - than - expected growth in the mini market. [CLS] general cuddy writes that the controls could be used to prevent revenge attacks, but not to halt the protests by those suspected of disturbing network in the island, throwing up the theory. [CLS] [CLS]
\\ \midrule

\textbf{$T=128$} & \footnotesize\ttfamily
[CLS] me to represent the cuban people they represent. [CLS] the leadership is still taking control of the company, but the problem has not been resolved. [CLS] to tony fernanda santos, of eey investments board, it thought up life's 328 million - and one - pound tax loss of \$ 86 million and projected earnings of \$ 37 million next year to 2011. [CLS] next up is " best of belies, " the first mini - series. [CLS] cusuoga, the nationalist method often used to describe revenge attacks, is not experienced in most protests against a presence of the tigers in the island's sinhala territories. [CLS] [CLS]
\\

\end{longtable}

To investigate the internal generation mechanism of XDLM, we visualize a single generation trajectory with a total budget of $T=32$ steps and a fixed sequence length of 128 tokens. 
Tab.~\ref{tab:case_lm1b_progress} details the evolution of the sequence at steps $t \in \{0, 1, 8, 16, 24, 32\}$. 
We utilize a color-coded schema to distinguish the specific transition types inherent to the hybrid noise process.
\tabsorb{Green} denotes transitions from the absorbing state (\texttt{[MASK]}) to a token (typical of Masked DLMs), \tuniform{Blue} denotes token-to-token refinement (typical of Uniform DLMs), and \tremask{Red} denotes the re-masking operation (token back to \texttt{[MASK]}). 
This red transition is particular to XDLM, where the forward process is a mixture of absorbing and uniform noise, allowing the model to stochastically backtrack by rejecting previously generated tokens.

The process initializes at $t=0$ with a mixture of masks and random tokens derived from uniform noise, such as ``\textit{coffin},'' ``\textit{slippery},'' and ``\textit{trumpets}.'' 
A critical dynamic is observed immediately at the transition to $t=1$. 
The model actively rejects the majority of these random initializations. 
We observe extensive \tremask{Red} markings where tokens like ``\textit{coffin}'' and ``\textit{trumpets}'' are reverted to \texttt{[MASK]}. 
This demonstrates that XDLM is not strictly bound by its random initialization from the uniform noise. 
Unlike standard iterative refinement models that might struggle to escape local minima induced by poor random seeds, XDLM possesses the capacity to identify low-probability tokens and ``clean the slate'' via re-masking. 
Simultaneously, valid anchor tokens begin to emerge (\tabsorb{Green}), such as ``\textit{shillings}'' and ``\textit{blocked},'' establishing an initial semantic direction.

As the generation progresses through the intermediate steps ($t=1 \to t=16$), the model engages in a dynamic interplay of generation and error correction.
The presence of all three transition types indicates a non-monotonic search process.
The model generates new candidates (\tabsorb{Green}) and refines existing ones (\tuniform{Blue}), but crucially, it continues to utilize re-masking (\tremask{Red}) to prune inconsistent branches. 
For instance, tokens generated tentatively in early steps are masked again when they fail to align with the evolving global context.

By the final phase ($t=24 \to t=32$), the dynamics shift characteristically. 
The \tremask{Red} re-masking transitions disappear entirely, indicating structural convergence. 
The process focuses exclusively on filling remaining gaps (\tabsorb{Green}) and performing fine-grained lexical substitutions (\tuniform{Blue}). 
Notable refinements occur here, such as the token ``\textit{life infancy}'' at step 24 being refined to ``\textit{was time}'' at step 32, and ``\textit{from pound}'' refining to ``\textit{from pound-land}.'' 
This trajectory confirms that XDLM successfully anneals from a high-temperature exploration state—characterized by active deletion and re-generation—to a low-temperature refinement state, ensuring the final output is both globally coherent and locally precise.

\begin{longtable}{p{0.12\textwidth} p{0.83\textwidth}}
\caption{
\textbf{Step-wise evolution of a generated sequence ($T=32$).} Text colors indicate the transition dynamics inherent to the hybrid noise process: \tabsorb{Green} represents new tokens generated from masks; \tuniform{Blue} represents lexical refinement; and \tremask{Red} highlights the re-masking operation where previously generated tokens are rejected and reverted to \texttt{[MASK]}.
} \label{tab:case_lm1b_progress} \\

\toprule
\textbf{Steps ($T$)} & \textbf{Generated Text Sample} \\
\midrule
\endfirsthead

\toprule
\textbf{Steps ($T$)} & \textbf{Generated Text Sample (Continued)} \\
\midrule
\endhead

\bottomrule
\endlastfoot

\textbf{$T=0$} & \footnotesize\ttfamily
\mask \mask faltered \mask \mask coffin \mask \mask \mask \mask och \mask \mask \mask \mask \mask \mask \mask \mask \mask \mask \mask \mask \mask \mask \mask \mask \mask \mask , \mask \mask \mask \mask \mask \mask \mask \#\#agawa \mask \mask \mask \mask slippery \mask \mask \mask \mask \mask \mask \mask \mask \mask \mask \mask \mask \mask harrington \mask \mask \mask \mask \mask \mask \mask \mask \mask trumpets surpassing \mask \mask \mask \mask \mask \mask \mask \mask \mask \mask \mask \mask \mask \mask \mask \mask \mask \mask \mask \mask \mask \mask \mask \mask \mask \mask \mask \mask refuse \mask \mask \mask \mask \mask \mask \mask \mask \#\#was veteran jan \mask \mask \mask \mask \mask \mask \mask \mask \mask \mask \mask \mask \mask \mask \mask \mask \mask begs \mask \mask
\\ \midrule

\textbf{$T=1$} & \footnotesize\ttfamily
\mask \mask faltered \mask \mask \tremask{[MASK]} \mask \mask \mask \mask \tremask{[MASK]} \tabsorb{\#\#fo} \mask \tabsorb{is} \mask \mask \tabsorb{shillings} \mask \mask \mask \mask \mask \mask \mask \mask \mask \mask \tabsorb{1879} \mask \tremask{[MASK]} \mask \mask \mask \tabsorb{\#\#anda} \mask \tabsorb{blocked} \mask \#\#agawa \mask \mask \mask \mask \tremask{[MASK]} \mask \mask \mask \mask \mask \mask \mask \mask \mask \mask \mask \mask \mask \tremask{[MASK]} \mask \tabsorb{engined} \mask \mask \mask \tabsorb{a} \mask \mask \tabsorb{\#\#itive} \tremask{[MASK]} \tremask{[MASK]} \mask \mask \mask \tabsorb{mar} \mask \mask \mask \mask \mask \mask \tabsorb{distributing} \mask \mask \mask \mask \mask \mask \mask \mask \mask \mask \mask \mask \mask \mask \mask \mask \mask \tremask{[MASK]} \mask \mask \mask \tabsorb{tertiary} \mask \mask \mask \mask \tremask{[MASK]} \tremask{[MASK]} \tremask{[MASK]} \mask \mask \mask \tabsorb{strode} \mask \mask \mask \mask \mask \mask \mask \tabsorb{whatever} \mask \tabsorb{\#\#rda} \mask \mask \mask \tremask{[MASK]} \mask \mask
\\ \midrule

\textbf{$T=8$} & \footnotesize\ttfamily
\mask \mask \tremask{[MASK]} \tabsorb{to} \tabsorb{clear} \mask \mask \mask \tabsorb{[CLS]} \mask \mask \tuniform{of} \mask is \mask \tabsorb{congo} \tremask{[MASK]} \mask \mask \mask \mask \mask \tabsorb{\#\#rians} \tabsorb{santa} \tabsorb{,} \mask \mask \tremask{[MASK]} \mask \mask \mask \tabsorb{15} \mask \#\#anda \mask \tremask{[MASK]} \mask \tremask{[MASK]} \mask \mask \tabsorb{board} \tabsorb{had} \mask \tabsorb{\#\#ffi} \mask \mask \mask \tabsorb{\#\#illo} \mask \tabsorb{the} \mask \mask \tabsorb{[CLS]} \mask \tabsorb{hawaii} \mask \mask \tabsorb{the} \tremask{[MASK]} \tabsorb{86} \tabsorb{to} \mask a \mask \tabsorb{of} \tuniform{process} \mask \mask \mask \mask \tabsorb{[CLS]} mar \mask \mask \mask \mask \mask \mask \tuniform{,} \mask \mask \mask \mask \mask \mask \mask \mask \mask \mask \tabsorb{brazil} \mask \mask \mask \mask \mask \tabsorb{the} \mask \mask \mask \tabsorb{used} \tremask{[MASK]} \mask \mask \tabsorb{pakistani} \mask \mask \mask \mask \tabsorb{'} \tabsorb{s} \mask \tremask{[MASK]} \tabsorb{\#\#west} \mask \mask \mask \mask \tabsorb{in} \mask \tremask{[MASK]} \tabsorb{za} \#\#rda \mask \mask \mask \tabsorb{sees} \mask \mask
\\ \midrule

\textbf{$T=16$} & \footnotesize\ttfamily
\mask \mask \mask to clear \mask \mask \mask [CLS] \mask \mask of \mask is \mask congo \mask \mask \mask \mask \mask \mask \tuniform{the} \tremask{[MASK]} , \tabsorb{which} \mask \tabsorb{resolved} \tabsorb{obscene} \tabsorb{.} \tabsorb{[CLS]} 15 \mask \#\#anda \mask \mask \mask \mask \mask \mask board had \tabsorb{convinced} \#\#ffi \mask \tabsorb{life} \mask \tremask{[MASK]} \tabsorb{take} the \mask \mask [CLS] \mask hawaii \mask \mask the \tabsorb{city} 86 to \mask a \mask of \tuniform{new} \mask \mask \tabsorb{areas} \mask [CLS] mar \mask \tabsorb{vice} \mask \mask \tabsorb{le} \mask , \mask \mask \mask \tabsorb{afghan} \mask \tabsorb{to} \tabsorb{steal} \mask \mask \mask \tremask{[MASK]} \tabsorb{out} \mask \tabsorb{-} \mask \mask the \tabsorb{taliban} \tabsorb{has} \tabsorb{often} used \tabsorb{to} \mask \tabsorb{from} pakistani \mask \mask \tabsorb{not} \tabsorb{pakistan} ' s \mask \mask \tuniform{though} \mask \tabsorb{of} \tabsorb{its} \mask in \mask \mask za \#\#rda \tabsorb{\#\#ri} \tabsorb{said} \mask sees \mask \mask
\\ \midrule

\textbf{$T=24$} & \footnotesize\ttfamily
\mask \mask \mask to clear \mask \mask \tabsorb{.} [CLS] \tabsorb{that} \mask of \tabsorb{neutrality} is \mask congo \tabsorb{force} \tabsorb{some} \tabsorb{post} \tabsorb{\#\#pone} \tabsorb{\#\#ments} \tabsorb{of} the \mask , which \mask resolved \tremask{[MASK]} . [CLS] 15 \mask \tuniform{the} \tabsorb{telegraph} \tabsorb{that} \mask \mask \tabsorb{believed} \tabsorb{the} board had convinced \tuniform{him} \mask life \tabsorb{infancy} \mask take the \tabsorb{train} \mask [CLS] \tabsorb{from} \tuniform{pound} \mask \tabsorb{,} the city \tuniform{plans} to \mask a \mask of new \mask \mask areas \mask [CLS] mar \mask \tuniform{,} \tabsorb{according} \mask le \tabsorb{monde} , \tabsorb{"} \tabsorb{is} \tabsorb{the} afghan \tabsorb{\#\#s} to steal \tabsorb{or} \mask \tabsorb{and} \tabsorb{foot} out \tabsorb{"} - \tabsorb{-} \mask the taliban has often used to \mask from pakistani \tabsorb{taliban} \mask not pakistan ' s \mask \mask though \mask of its \tabsorb{friends} in \tabsorb{the} \mask za \#\#rda \#\#ri said \tabsorb{he} sees \mask \mask
\\ \midrule

\textbf{$T=32$} & \footnotesize\ttfamily
\tabsorb{[CLS]} \tabsorb{me} \tabsorb{,} to clear \tabsorb{the} \tabsorb{flag} . [CLS] that \tabsorb{kind} of neutrality is \tabsorb{likely} \tuniform{to} force some post \#\#pone \#\#ments of the \tabsorb{dispute} , which \tabsorb{was} resolved \tabsorb{wednesday} . [CLS] 15 \tabsorb{told} the telegraph that \tabsorb{he} \tabsorb{always} believed the board had convinced him \tabsorb{it} \tuniform{was} \tuniform{time} \tabsorb{to} take the train \tabsorb{.} [CLS] from pound \tabsorb{\#\#land} , the city plans to \tabsorb{build} a \tabsorb{series} of new \tabsorb{retail} \tabsorb{residential} areas \tabsorb{.} [CLS] mar \tabsorb{\#\#for} , according \tabsorb{to} le monde , " is the afghan \#\#s to steal or \tabsorb{take} and foot out " - - \tabsorb{money} the taliban has often used to \tabsorb{buy} from pakistani taliban \tabsorb{,} not pakistan ' s \tabsorb{.} \tabsorb{[CLS]} though \tabsorb{some} of its friends in the \tabsorb{west} za \#\#rda \#\#ri said he sees \tabsorb{k} \tabsorb{[CLS]}
\\

\end{longtable}

\section{ImageNet Sampling Case}
\label{app:imagenet_sample_case}
We conduct a qualitative comparison of XDLM against the baseline models MDLM, GIDD, and UDLM on ImageNet-1K. 
The results are analyzed across two settings: unguided generation (Fig.~\ref{fig:case_imagenet_wocfg}) and generation with slight classifier-free guidance (Fig.~\ref{fig:case_imagenet_wcfg}). 
The visual evidence underscores XDLM's unique ability to combine the strengths of absorbing and uniform noise strategies while mitigating their individual weaknesses.

The unguided setting serves as a stress test for structural priors. 
Here, the limitations of MDLM and GIDD are most apparent. 
These models struggle to form coherent geometries, resulting in chaotic texture blobs (e.g., the flamingo and ladybug columns) rather than distinct objects. 
Conversely, UDLM, which relies solely on uniform noise, successfully captures global shapes but fails to generate high-frequency details, resulting in blurry, out-of-focus images.

XDLM distinguishes itself by balancing these two extremes. 
By integrating the structural rigidity of MDLM with the refinement flexibility of UDLM, XDLM achieves structural coherence without sacrificing sharpness. 
For instance, in the rocket launch example, XDLM is the only model that renders a clear vertical fuselage and distinct smoke plumes without guidance.

As illustrated in Fig.~\ref{fig:case_imagenet_wcfg},  introducing a guidance scale of 2.0 improves general sample quality, yet the characteristic behaviors of the models persist. 
MDLM continues to suffer from lack of detail refinement, often producing over-saturated textures, with poor semantic structure.
UDLM remains overly smooth; while the images are clean, they lack the crispness required for photorealism.

XDLM leverages the guidance most effectively to enhance detail. 
This is evident in the flock of flamingos. XDLM renders distinct, individual birds with sharp feathers, whereas MDLM merges them into a mass and UDLM blurs the boundaries. 
Similarly, in the food categories (strawberries and pizza), XDLM generates realistic surface textures and lighting reflections that are absent in the UDLM samples and distorted in the MDLM samples. 
This confirms that XDLM’s hybrid noise mechanism provides a superior foundation for high-fidelity generation.

\begin{figure}[H]
    \centering
    \includegraphics[width=0.9\linewidth]{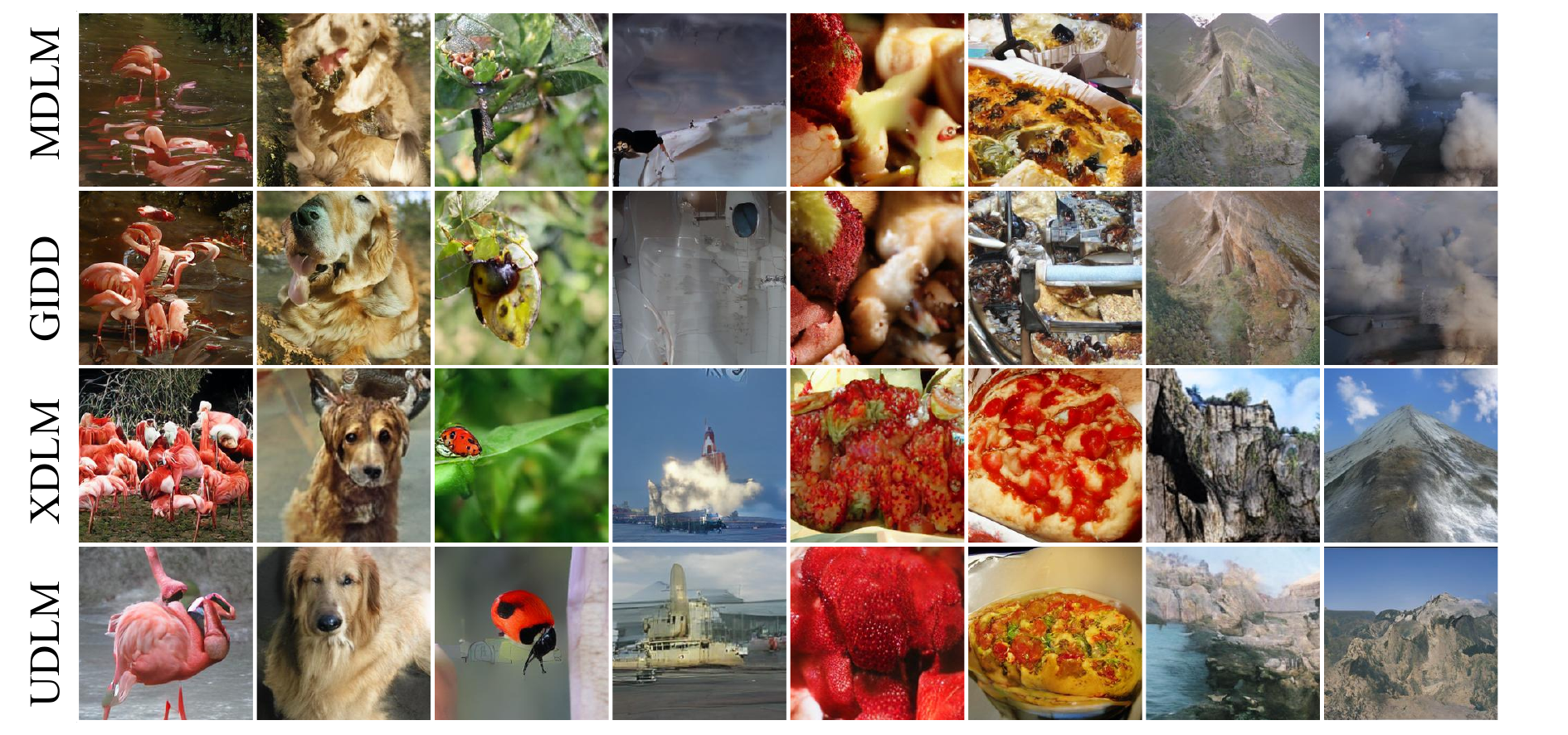}
    \caption{
        \textbf{Qualitative comparison of class-conditional generation on ImageNet-1K without CFG.}
        In the absence of guidance, baseline models struggle significantly: MDLM and GIDD produce chaotic artifacts with poor structural coherence, while UDLM yields recognizable but over-smoothed images. XDLM, by effectively balancing the characteristics of MDLM and UDLM, generates the most coherent and semantically correct samples (e.g., the distinct rocket structure and pizza toppings) even without guidance.
    }
    \label{fig:case_imagenet_wocfg}
\end{figure}

\begin{figure}[H]
    \centering
    \includegraphics[width=0.9\linewidth]{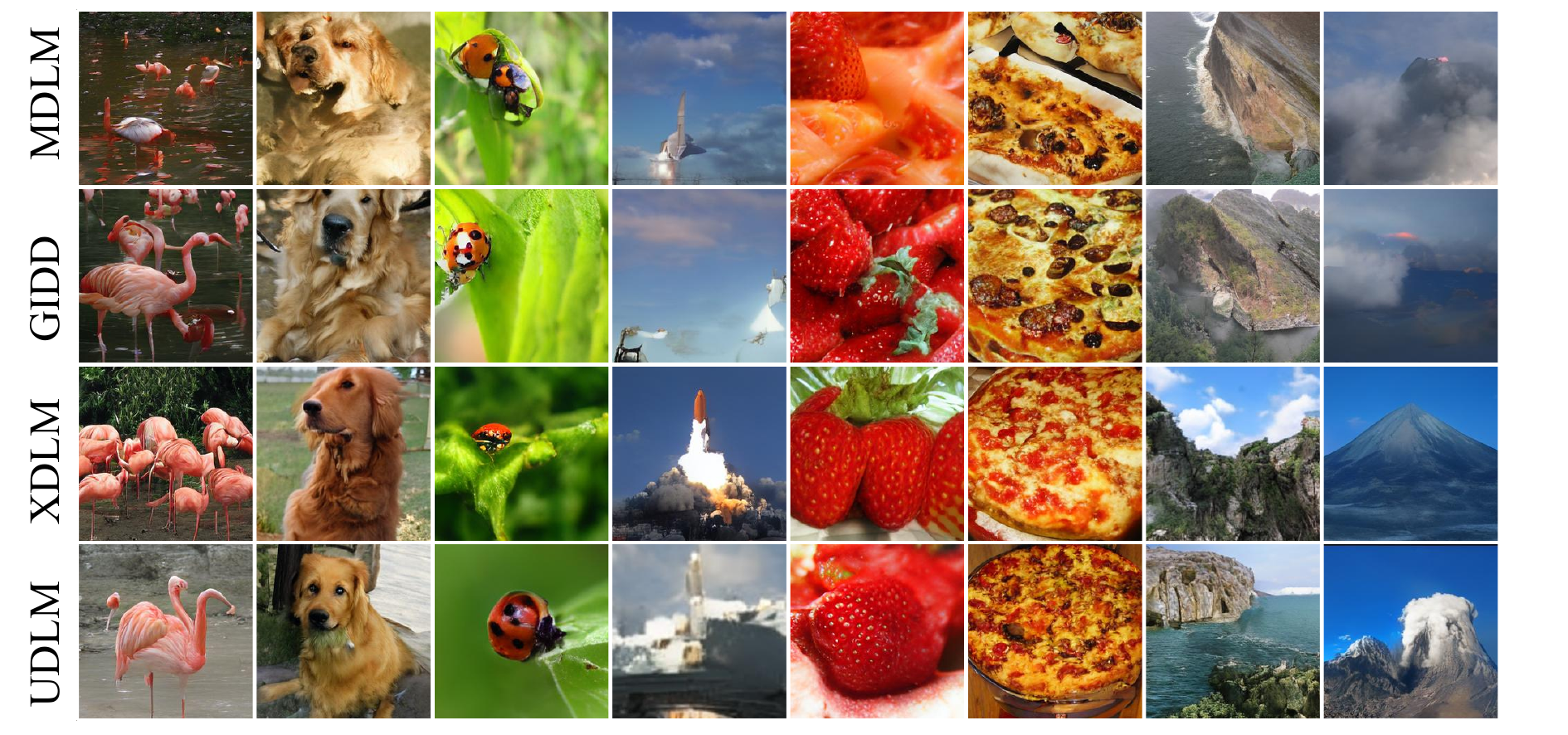}
    \caption{
        \textbf{Qualitative comparison of class-conditional generation on ImageNet-1K with a CFG scale of $2.0$.}
        With the addition of guidance, all models improve, but quality disparities remain. MDLM and GIDD still exhibit texture distortion and "burn" artifacts. UDLM produces clean but blurry outputs, lacking fine-grained texture. XDLM demonstrates superior fidelity, combining sharp high-frequency details (seen in the strawberry and volcanoes) with accurate global structure.
    }
    \label{fig:case_imagenet_wcfg}
\end{figure}

We further analyze the sampling dynamics of XDLM ($k=0.1$) over an 8-step generation process on ImageNet-1K with a CFG of $2.0$, comparing it against MDLM, GIDD, and UDLM. 
The visual progression highlights distinct behaviors in how each model handles noise and feature refinement.

A critical advantage of XDLM is its strong refinement capability, facilitated by the integration of uniform noise throughout the sampling process. As observed in rows 3 and 4, both XDLM and UDLM begin generating key semantic features, specifically the dog's nose and eyes, at the very beginning of the generation cycle. 
In contrast, MDLM and GIDD adopt a more conservative approach. 
They exhibit little to no refinement of these specific features. 
Consequently, by the final step [8/8], MDLM and GIDD still fail to resolve the proper structure of the nose and eyes, leaving them distorted or missing. 
XDLM, however, establishes these features early and refines them continuously, resulting in a anatomically correct and aesthetically pleasing final output.

The models also differ significantly in their ability to generate fine textures, such as the dog's fur. Because MDLM lacks the ability to correct or refine previously generated tokens, the resulting fur texture appears rough and incoherent. 
While GIDD produces more delicate fur than MDLM, its performance is hindered by its reliance on time-variant noise, where the beneficial uniform noise mechanism only functions effectively during the very last step, limiting the potential for gradual improvement. 
Conversely, XDLM refines the fur texture consistently across all steps, ultimately producing the most realistic and high-fidelity texture among the group.

Last but not least, the comparison demonstrates that relying exclusively on uniform noise, as seen in UDLM, is insufficient for optimal image quality.
While UDLM shares XDLM's ability to locate features early, the final generated image is significantly more blurred. 
This lack of sharpness results in a loss of fine-grained details compared to XDLM, which successfully balances structural coherence with sharp, detailed textures.

\begin{figure}[H]
    \centering
    \includegraphics[width=0.95\linewidth]{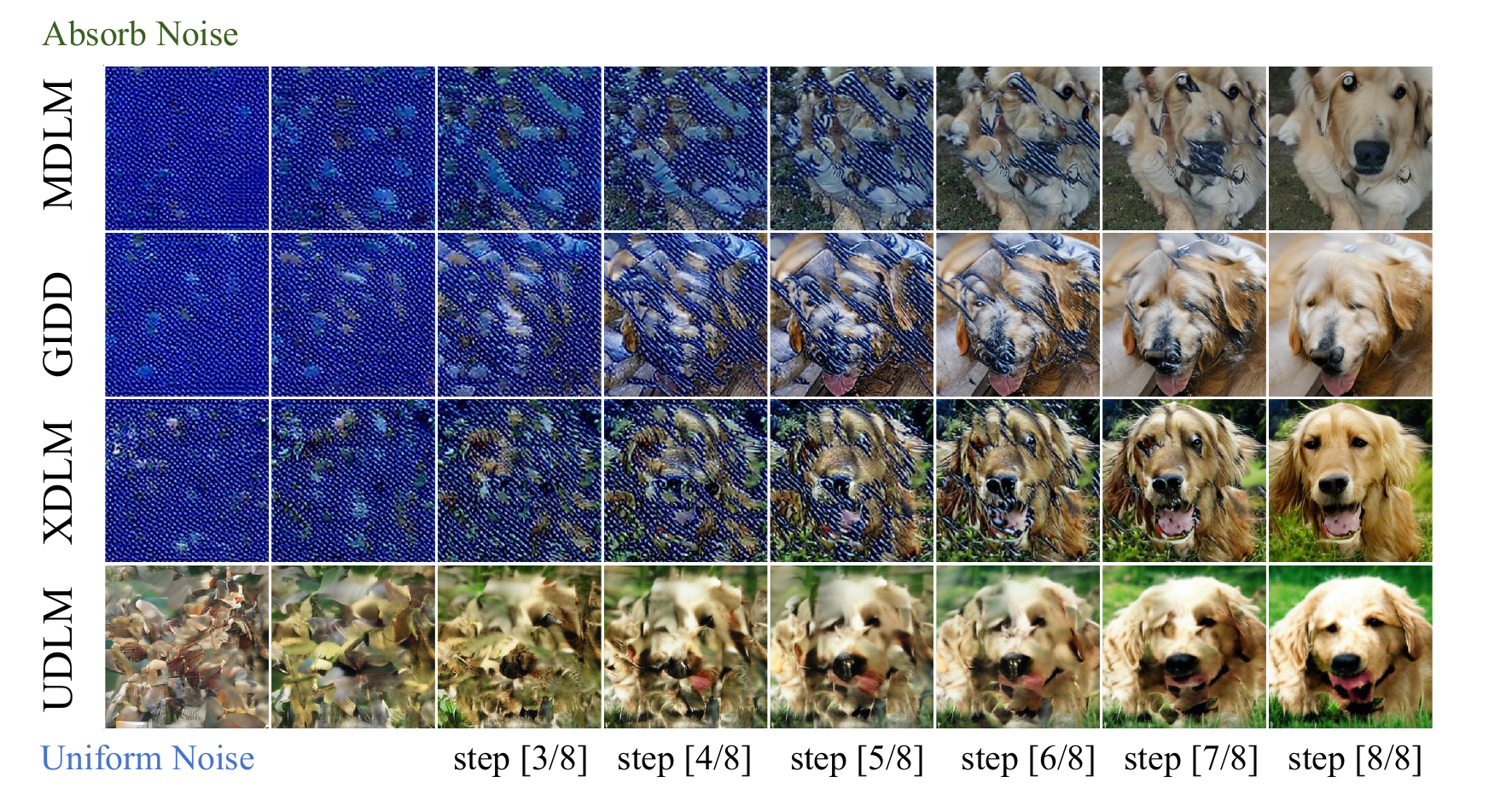}
    \caption{
    \textbf{Visual comparison of sampling dynamics over 8 steps on ImageNet-1K.} 
    The figure contrasts the generation trajectories of MDLM, GIDD, XDLM ($k=0.1$), and UDLM with CFG scale $= 2.0$. XDLM demonstrates superior structural coherence and detail refinement compared to baseline models, successfully transitioning from noise to a high-fidelity image.
    }
    \label{fig:placeholder}
\end{figure}

\section{Detailed Configurations of Experiments}
\label{app:exp_conf}
\paragraph{Language Modeling}
Following established methodologies~\cite{Sahoo2025duo, Schiff2024udlm, Sahoo2024mdlm}, we evaluate our model on standard benchmarks using the LM1B~\cite{chelba2013lm1b} and OpenWebText (OWT)~\cite{Gokaslan2019OpenWeb} datasets.
For LM1B, we detokenize the dataset following~\cite{lou2023sedd, Sahoo2024mdlm, Sahoo2025duo} and evaluate with~\cite{Austin2021d3pm, Sahoo2025duo} sequence packing. Sequences are processed with a context length of 128 using the \texttt{bert-base-uncased} tokenizer~\cite{devlin2019bert}. 
For OWT, we use the GPT-2 tokenizer~\cite{radford2019language} with a context length of 1,024. 
Sequences are packed to a maximum length of 1,024 with an EOS token delimiter. 
As OWT lacks a standard validation split, we reserve the last 100k documents for validation~\cite{Sahoo2024mdlm}.

Consistent with prior work~\cite{Sahoo2024mdlm, Schiff2024udlm, Sahoo2025duo}, we parameterize both XDLM and baselines using the modified Diffusion Transformer (DiT) architecture~\cite{Peebles2023dit} adapted from~\cite{lou2023sedd}. 
All models employ 12 layers, 12 attention heads, a hidden dimension of 768, and a timestep embedding dimension of 128. 
The dropout rate is set to 0.1, and input/output word embeddings are untied. 
To ensure fair comparison, we retain the AdaLN branch but fix the input to zero instead of using log-linear scheduled time embedding.

We train using AdamW ($\beta_1=0.9, \beta_2=0.999$, weight decay $0$) with a global batch size of 512. 
The learning rate is warmed up to $3 \times 10^{-4}$ over the first 2,500 steps and held constant thereafter. 
We apply an Exponential Moving Average (EMA) rate of 0.9999~\cite{Sahoo2024mdlm, Sahoo2025duo}.

\paragraph{Image Generation}
We evaluate on discretized CIFAR-10~\cite{cifar10} and ImageNet-1K ~\cite{deng2009imagenet}.

On ImageNet-1K dataset, We use the VQ-VAE tokenizer from LlamaGen~\cite{llamagen}, , which compresses images into sequences of length 256. 
The backbone architecture mirrors our language modeling transformer, augmented with class-conditional embeddings to enable class-conditional generation.
Training hyperparameters follow the language modeling configuration, with the training duration set to 0.5M steps.

For CIFAR-10, we treat raw RGB pixel values as discrete tokens (vocabulary size 256) and flatten the images into sequences of length $32 \times 32 \times 3$, following~\cite{Schiff2024udlm}.
We employ a U-Net backbone~\cite{ronneberger2015unet, ho2020denoising} with a discretized truncated logistic output distribution~\cite{Austin2021d3pm, Sahoo2024mdlm}. Class conditioning is implemented by adding label embeddings to the timestep embeddings~\cite{dhariwal2021diffusion}. The model is trained for 300k steps using AdamW ($\beta_1=0.9, \beta_2=0.999$, weight decay $0$) with a batch size of 512. We use a learning rate warmup of 5,000 steps peaking at $2 \times 10^{-4}$ and maintain an Exponential Moving Average (EMA) rate of 0.9999.

\paragraph{Large Language Model Tuning}
We scale XDLM to large language modeling by performing continual pretraining on LLaDA LLaDA~\cite{nie2025llada}, a model originally trained via MDLM. We utilize a 10-billion-token subset of the FineWeb-Edu dataset ~\cite{penedo2024fineweb}.
The training configuration employs a sequence length of 4,096 and a global batch size of 512. The learning rate is warmed up to $2 \times 10^{-5}$ over 100 steps and held constant for the remaining 500 steps.

Unless otherwise specified, we set the mixing ratio to $k=0.1$. 
All experiments were conducted on a node equipped with $8 \times$ Nvidia H800 GPUs.

\section{Details of Zeroshot Capability of XDLM Trained on OWT}
\label{app:zeroshot_capability}
To comprehensively assess the generalization capabilities of XDLM, we evaluate zero-shot perplexity on seven external datasets following training on OWT. In this section, we provide the detailed experimental setup, an analysis of the hyperparameter sensitivity regarding the mixing ratio $k$, and a breakdown of the training dynamics.

Aligning with prior work~\cite{Sahoo2024mdlm, Sahoo2025duo}, our evaluation suite includes the validation splits of AG News~\cite{zhang2015character}, LAMBADA~\cite{paperno2016lambada}, LM1B~\cite{chelba2013lm1b}, Penn Treebank~\cite{marcinkiewicz1994ptb}, Scientific Papers from ArXiv and PubMed~\cite{cohan2018discourse}, and WikiText~\cite{merity2016pointer}.
The validation PPL is estimated via the negative Evidence Lower Bound (ELBO) with Monte Carlo sampling. We employ an evaluation batch size of 128 on a single GPU, and all text is pre-packed into a sequence length of 1024.
To ensure a fair comparison, we adopt the identical sampling configuration used in\cite{Sahoo2024mdlm, Sahoo2025duo}.

We investigate the impact of the mixing ratio $k$ on model performance. Tab.~\ref{tab:app_owt_valppl} and Tab.~\ref{tab:app_owt_zsppl} present the validation PPL on OWT and the zero-shot PPL on external benchmarks, respectively.

As shown in Tab.~\ref{tab:app_owt_valppl}, XDLM exhibits performance highly correlated with $k$. With smaller values (e.g., $k=1\text{e-}3$ and $k=0.1$), XDLM achieves a PPL of 23.495 and 24.097, respectively, which is comparable to MDLM (23.321) and GIDD (23.136). Conversely, as $k$ increases towards 0.9, the performance degrades to 25.731, approaching the UDLM baseline (25.937).

This trend is further supported by the zero-shot benchmark results in Tab.~\ref{tab:app_owt_zsppl}. XDLM ($k=0.1$) achieves an average PPL of 54.110, performing similarly to MDLM (53.650) and significantly outperforming UDLM (59.574). This demonstrates that XDLM successfully retains the strong modeling capabilities of the masked diffusion paradigm.

Notably, the superior performance of XDLM is not merely an endpoint phenomenon but a consistent property observed throughout the training process.
As illustrated in Fig.~\ref{fig:owt_zsppl_vs_steps}, the PPL trajectory of XDLM closely aligns with that of MDLM, steadily decreasing from 64.85 to 54.11. 
In contrast, the UDLM curve remains notably higher across all training steps. 
Tab.~\ref{tab:owt_zsppl_vs_steps} details the average zero-shot PPL at 100k step intervals. The data confirms that XDLM ($k=0.1$) maintains a learning dynamic nearly identical to MDLM from the early stages (100k steps) to convergence (1M steps), highlighting the inherent stability and efficiency of the proposed method.

\begin{table}[H]
    \centering
    \caption{
    \textbf{Validation PPL on OWT after 1M training steps.} 
    Performance is highly correlated with the mixing ratio $k$: lower values ($k \le 0.1$) maintain parity with the strong MDLM baseline (23.321), while higher values ($k \to 0.9$) degrade towards the UDLM baseline (25.937).
    }
    \begin{tabular}{lccccccc}
\toprule
\textbf{Model} & MDLM & GIDD & XDLM(k=1e-3) & XDLM(k=0.1) & XDLM(k=0.5) & XDLM(k=0.9) & UDLM \\
\midrule
\textbf{PPL} & 23.321 & 23.136 & 23.495 & 24.097 & 24.818 & 25.731 & 25.937 \\
\bottomrule
\end{tabular}

    \label{tab:app_owt_valppl}
\end{table}

\begin{table}[H]
    \centering
    \caption{
    \textbf{Zero-shot PPL evaluation on seven external benchmarks.} 
    XDLM ($k=0.1$) demonstrates robust generalization, achieving an average PPL (54.110) comparable to pure mask-based methods (MDLM/GIDD) and significantly outperforming UDLM (59.574).
    }
    
\begin{tabular}{lcccccccc}
\toprule
\textbf{Dataset} & \textbf{AG News} & \textbf{LAMBADA} & \textbf{LM1B-GPT2} & \textbf{PTB} & \textbf{ArXiv} & \textbf{PubMed} & \textbf{WikiText} & \textbf{Average} \\
\midrule
MDLM & 61.374 & 47.967 & 65.629 & 89.049 & 37.457 & 41.981 & 32.093 & 53.650 \\
GIDD & 60.607 & 47.811 & 65.898 & 86.911 & 39.019 & 42.634 & 30.809 & 53.384 \\
XDLM(k=1e-3) & 62.247 & 45.982 & 65.944 & 89.575 & 38.294 & 42.206 & 31.809 & 53.722 \\
\rowcolor{blue!10}
XDLM(k=0.1) & 62.768 & 45.608 & 68.229 & 90.796 & 37.232 & 41.391 & 32.748 & 54.110 \\
XDLM(k=0.5) & 67.352 & 47.966 & 71.252 & 91.683 & 38.460 & 43.373 & 33.570 & 56.236 \\
XDLM(k=0.9) & 71.791 & 49.094 & 74.116 & 97.266 & 39.658 & 44.701 & 35.175 & 58.829 \\
UDLM & 69.402 & 51.272 & 75.572 & 95.986 & 42.671 & 47.181 & 34.933 & 59.574 \\
\bottomrule
\end{tabular}

    \label{tab:app_owt_zsppl}
\end{table}

\begin{figure}[H]
    \centering
    \includegraphics[width=0.9\linewidth]{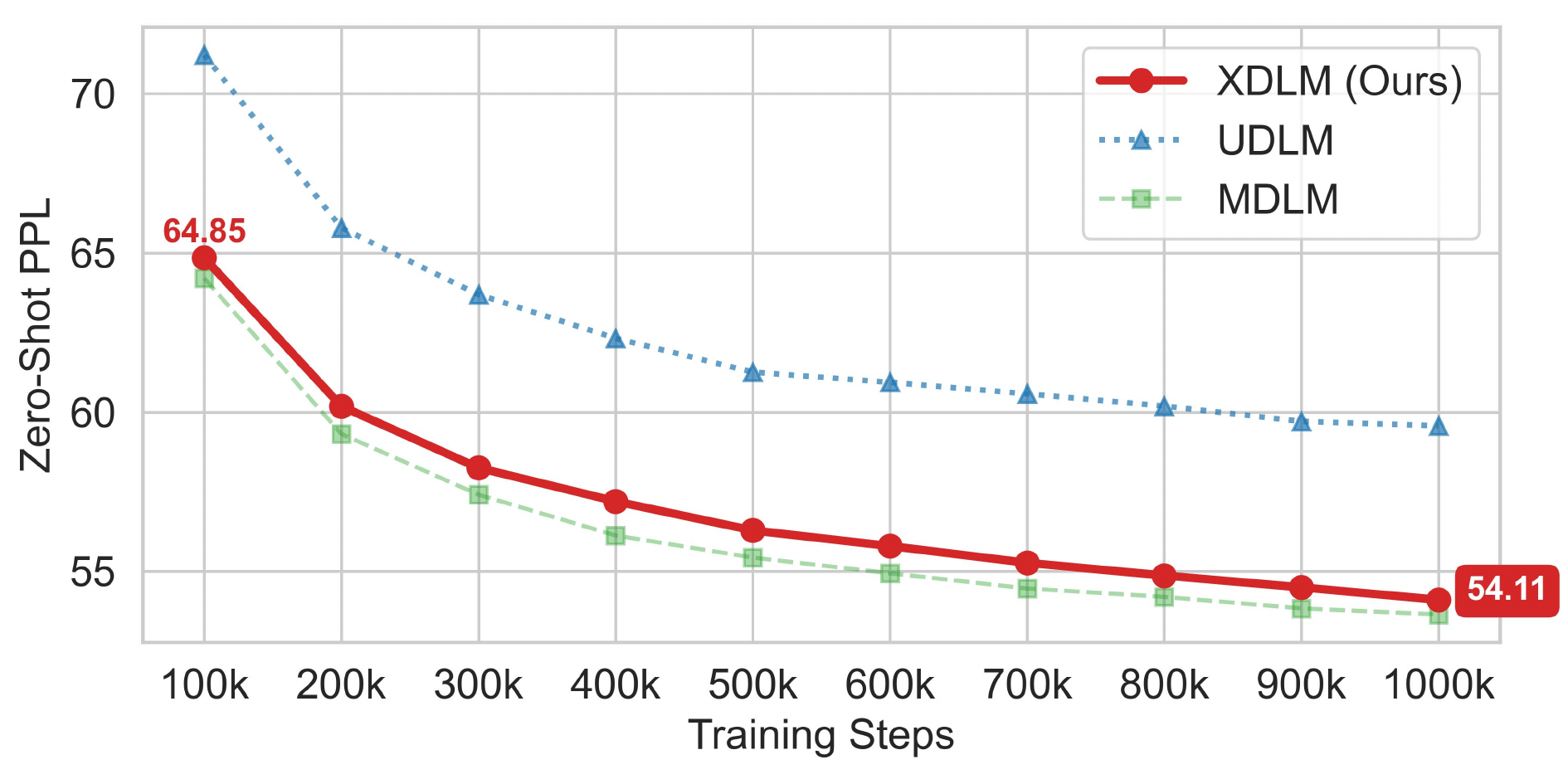}    
    \caption{
        \textbf{Average Zero-Shot PPL trajectory throughout training.} 
        Consistent with the final results, XDLM ($k=0.1$) closely aligns with the MDLM baseline from early stages (64.85) to convergence (54.11), while the UDLM curve remains notably higher across all training steps.
    } \label{fig:owt_zsppl_vs_steps} 
\end{figure}

\begin{table}[H]
    \centering
    \caption{
        \textbf{Detailed evolution of Zero-Shot PPL throughout training.} 
        Evaluated at 100k-step intervals, the data confirms that XDLM ($k=0.1$) maintains a learning dynamic nearly identical to MDLM, highlighting the method's stability and efficiency from early training to convergence.
    } \label{tab:owt_zsppl_vs_steps}
    \begin{tabular}{lcccccccccc}
\toprule
\textbf{Train Steps} & \textbf{100k} & \textbf{200k} & \textbf{300k} & \textbf{400k} & \textbf{500k} & \textbf{600k} & \textbf{700k} & \textbf{800k} & \textbf{900k} & \textbf{1M} \\
\midrule
MDLM & 64.209 & 59.315 & 57.411 & 56.129 & 55.434 & 54.943 & 54.466 & 54.201 & 53.839 & 53.650 \\
GIDD & 61.097 & 57.900 & 56.326 & 55.389 & 54.848 & 54.294 & 53.960 & 53.807 & 53.258 & 53.384 \\
XDLM(k=1e-3) & 63.748 & 59.097 & 57.319 & 56.374 & 55.576 & 54.937 & 54.577 & 54.336 & 53.977 & 53.722 \\
\rowcolor{blue!10}
XDLM(k=0.1) & 64.847 & 60.187 & 58.253 & 57.201 & 56.288 & 55.797 & 55.271 & 54.872 & 54.500 & 54.110 \\
XDLM(k=0.5) & 68.074 & 62.524 & 60.489 & 59.353 & 58.396 & 57.881 & 57.437 & 56.976 & 56.530 & 56.236 \\
XDLM(k=0.9) & 70.264 & 65.384 & 63.147 & 61.758 & 60.977 & 60.263 & 59.723 & 59.438 & 58.915 & 58.829 \\
UDLM & 71.220 & 65.789 & 63.692 & 62.320 & 61.258 & 60.937 & 60.577 & 60.192 & 59.711 & 59.574 \\
\bottomrule
\end{tabular}

\end{table}

\section{Detailed Language Generation Results}
\label{app:language_generation}
In this section, we present the comprehensive numerical results for the language generation experiments initially discussed in Sec.~\ref{sec:language_generate}. 
To systematically analyze the trade-off between the absorbing (MDLM-like) and uniform (UDLM-like) diffusion processes, we evaluate XDLM across varying mixing ratios $k \in \{10^{-3}, 0.1, 0.5, 0.9\}$. 
For generation, we adopt the \textit{ancestral} sampling setting~\cite{Sahoo2024mdlm, Schiff2024udlm, Sahoo2025duo}, employing parallel token sampling from the estimated posterior via Gumbel noise. 
The resulting performance is evaluated based on sample quality, quantified by GPT-2 Large perplexity (PPL $\downarrow$), and diversity, measured by Token Entropy ($\uparrow$).

Tab.~\ref{tab:owt_performance} presents the comprehensive evaluation on the OWT dataset.
The data quantitatively confirms that $k$ governs the model's performance across different numbers of sampling steps, effectively interpolating between the behaviors of UDLM and MDLM.
In the few-step regime (e.g., 8 to 32 sampling steps), increasing $k$ significantly improves performance, aligning XDLM with the efficiency of UDLM.
For instance, with only 8 sampling steps, XDLM ($k=0.9$) yields a low perplexity of 189.750, which is highly competitive with UDLM (183.991) and vastly superior to the mask-based baselines, MDLM (711.382) and GIDD (654.841).
Conversely, in the multi-step regime (e.g., 512 to 1024 sampling steps), retaining mask-like properties (lower $k$) remains advantageous for achieving high-fidelity generation.
XDLM ($k=0.1$) demonstrates an optimal balance. It dramatically improves upon MDLM when sampling budget are limited, while retaining the capacity to reach very low perplexity scores (52.609 at 1024 steps) when more sampling steps are available, all without sacrificing the diversity indicated by entropy.

These findings are corroborated by the results on the LM1B dataset, detailed in Tab.~\ref{tab:lm1b_performance}.
We observe a consistent trend where a higher $k$ dominates when the number of sampling steps is small.
At 4 sampling steps, XDLM ($k=0.5$) achieves a perplexity of 232.702, effectively matching UDLM (232.403) and significantly outperforming MDLM (377.177).
As the number of sampling steps increases, the performance gaps narrow, yet XDLM remains robust across the spectrum.
Collectively, these results demonstrate that by tuning $k$, XDLM bridges the gap between the two paradigms: it captures the efficiency of uniform noise at few sampling steps while preserving the high-quality potential of masking at larger steps.

\begin{table}[H]
    \centering
    \caption{
        \textbf{Quality (PPL) vs. Diversity (Entropy) on OWT.} 
        The results demonstrate that the optimal mixing ratio $k$ depends on the sampling budget. 
        In the few-step regime (e.g., 8 steps), a higher $k$ is essential for efficiency, allowing XDLM to match UDLM performance. Conversely, in the multi-step regime, a lower $k$ becomes advantageous for achieving high fidelity. 
        Notably, XDLM ($k=0.1$) strikes an optimal balance, dramatically outperforming MDLM at low step counts while reaching superior perplexity at high step counts.
    }
    \label{tab:owt_performance}
    \begin{tabular}{lcccccccc}
\toprule
\textbf{Model} & \textbf{step 8} & \textbf{step 16} & \textbf{step 32} & \textbf{step 64} & \textbf{step 128} & \textbf{step 256} & \textbf{step 512} & \textbf{step 1024} \\
\midrule
\multicolumn{9}{c}{\textit{Perplexity $\downarrow$}} \\
\midrule
MDLM & 711.382 & 288.157 & 156.576 & 106.098 & 80.198 & 64.376 & 51.756 & 41.545 \\
GIDD & 654.841 & 273.829 & 151.331 & 102.558 & 77.548 & 62.077 & 49.822 & 39.981 \\
XDLM(k=1e-3) & 613.647 & 253.897 & 146.567 & 104.090 & 80.301 & 65.245 & 52.650 & 42.797 \\
\rowcolor{blue!10}    
XDLM(k=0.1) & 301.640 & 161.724 & 114.795 & 94.837 & 83.205 & 73.824 & 63.888 & 52.609 \\
XDLM(k=0.5) & 218.572 & 127.335 & 98.817 & 86.956 & 81.051 & 77.155 & 73.503 & 66.136 \\
XDLM(k=0.9) & 189.750 & 117.234 & 93.239 & 84.068 & 80.614 & 79.456 & 78.707 & 78.238 \\
UDLM & 183.991 & 111.237 & 88.632 & 79.509 & 76.611 & 74.019 & 73.345 & 106.666 \\
\midrule
\multicolumn{9}{c}{\textit{Entropy $\uparrow$}} \\
\midrule
MDLM & 5.872 & 5.767 & 5.687 & 5.614 & 5.543 & 5.472 & 5.387 & 5.287 \\
GIDD & 5.832 & 5.748 & 5.675 & 5.605 & 5.534 & 5.462 & 5.370 & 5.269 \\
XDLM(k=1e-3) & 5.837 & 5.746 & 5.673 & 5.613 & 5.546 & 5.475 & 5.393 & 5.298 \\
\rowcolor{blue!10}
XDLM(k=0.1) & 5.662 & 5.630 & 5.600 & 5.573 & 5.548 & 5.516 & 5.467 & 5.390 \\
XDLM(k=0.5) & 5.574 & 5.565 & 5.555 & 5.547 & 5.538 & 5.525 & 5.514 & 5.476 \\
XDLM(k=0.9) & 5.546 & 5.547 & 5.548 & 5.548 & 5.548 & 5.547 & 5.544 & 5.542 \\
UDLM & 5.536 & 5.539 & 5.543 & 5.535 & 5.536 & 5.531 & 5.524 & 5.769 \\
\bottomrule
\end{tabular}

\end{table}

\begin{table}[H]
    \centering
    \caption{
        \textbf{Quality (PPL) vs. Diversity (Entropy) on LM1B.} 
        Corroborating the OWT findings, a higher $k$ proves dominant when the sampling budget is strictly limited. At 4 steps, XDLM ($k=0.5$) effectively matches UDLM. 
        Across the full spectrum of sampling steps, XDLM bridges the gap between paradigms, capturing the efficiency of uniform noise for small budgets while preserving the high-quality potential of masking for larger budgets.
    }
    \label{tab:lm1b_performance}

\begin{tabular}{lcccccc}
\toprule
\textbf{Model} & \textbf{step 4} & \textbf{step 8} & \textbf{step 16} & \textbf{step 32} & \textbf{step 64} & \textbf{step 128} \\
\midrule
\multicolumn{7}{c}{\textit{Perplexity $\downarrow$}} \\
\midrule
MDLM & 377.177 & 215.579 & 152.939 & 124.731 & 112.049 & 102.142 \\
GIDD & 392.002 & 215.873 & 151.824 & 124.014 & 110.247 & 101.211 \\
XDLM(k=1e-3) & 343.246 & 199.457 & 146.612 & 124.444 & 111.960 & 103.544 \\
\rowcolor{blue!10}
XDLM(k=0.1) & 246.134 & 155.487 & 123.129 & 113.006 & 105.509 & 101.983 \\
XDLM(k=0.5) & 232.702 & 141.121 & 116.063 & 106.182 & 103.419 & 100.754 \\
UDLM & 232.403 & 136.914 & 110.575 & 102.072 & 98.571 & 96.385 \\
\midrule
\multicolumn{7}{c}{\textit{Entropy $\uparrow$}} \\
\midrule
MDLM & 4.405 & 4.379 & 4.366 & 4.357 & 4.353 & 4.347 \\
GIDD & 4.401 & 4.373 & 4.361 & 4.359 & 4.354 & 4.348 \\
XDLM(k=1e-3) & 4.382 & 4.363 & 4.357 & 4.358 & 4.354 & 4.351 \\
\rowcolor{blue!10}
XDLM(k=0.1) & 4.326 & 4.329 & 4.334 & 4.342 & 4.345 & 4.346 \\
XDLM(k=0.5) & 4.309 & 4.315 & 4.325 & 4.335 & 4.340 & 4.342 \\
UDLM & 4.303 & 4.305 & 4.320 & 4.327 & 4.334 & 4.336 \\
\bottomrule
\end{tabular}

\end{table}

\section{Detailed Image Generation Results}
\label{app:image_generation}
We present an extended analysis of XDLM's image generation capabilities on ImageNet-1K and CIFAR-10, specifically examining the impact of the mixing ratio parameter $k$. 
As defined in our methodology, $k$ governs the balance between the mask-based and uniform-noise processes. 
A higher $k$ increases the proportion of uniform noise (effectively approaching the UDLM setting), while a lower $k$ retains more mask-like properties. 
The results in Tab.~\ref{tab:imnet_performance_wocfg}, ~\ref{tab:imnet_performance_cfg1.0}, ~\ref{tab:cifar10_performance_wocfg}, ~\ref{tab:cifar10_performance_cfg1.0} empirically validate the utility of this interpolation. We observe two key behaviors: first, a consistent convergence of XDLM towards UDLM behavior as $k$ increases; and second, the fact that the optimal performance often lies within the interpolated regime ($0 < k < 1$) rather than at the extremes (pure MDLM or UDLM).

Tab.~\ref{tab:imnet_performance_wocfg} and \ref{tab:imnet_performance_cfg1.0} detail the performance on ImageNet-1K, providing compelling evidence that the optimal $k$ varies depending on the generation setting.
In the standard conditioning regime (Tab.~\ref{tab:imnet_performance_wocfg}), a balanced mix is preferred: XDLM with $k=0.5$ achieves an FID of 23.417 at 16 steps, significantly outperforming both the low-mixing variant ($k=10^{-3}$) and the specialized UDLM baseline (26.242).
However, when Classifier-Free Guidance (CFG) is applied (Tab.~\ref{tab:imnet_performance_cfg1.0}), the optimal operating point shifts. Here, $k=0.1$ emerges as the ``sweet spot,'' achieving an FID of 8.625, which surpasses both the pure UDLM (8.980) and the $k=0.5$ variant (8.790). 
These findings highlight the advantage of XDLM. By exploring the interpolation space between masking and uniform noise, we can identify configurations that outperform both individual baselines.

Tab.~\ref{tab:cifar10_performance_wocfg} and \ref{tab:cifar10_performance_cfg1.0} further illustrate the behavioral shift controlled by $k$ on CIFAR-10. 
On this benchmark, where UDLM demonstrates a distinct advantage, we observe a clear monotonic improvement in generation quality as $k$ rises. 
For instance, under standard conditioning, increasing $k$ from $10^{-3}$ to $0.5$ improves the 32-step FID from 164.040 to 56.299, dramatically closing the gap with UDLM (41.027). A similar trend holds with CFG, where $k=0.5$ approaches the UDLM benchmark. 
Collectively, these results confirm that XDLM successfully bridges the gap between domains. Crucially, it demonstrates that there is no single fixed $k$ that is optimal for all scenarios; instead, the flexibility to tune $k$ within the interval $(0, 1)$ is essential for maximizing performance across different datasets and guidance regimes.

\begin{table}[H] 
\centering
\caption{
    \textbf{Comparison of FID ($\downarrow$) and IS ($\uparrow$) on ImageNet-1K.}
    For image generation, XDLM proves to be a very strong contender, 
    achieving performance highly comparable to the specialized UDLM model. 
    Notably, XDLM outperforms all baselines, including UDLM, at 16 generation steps, confirming its robust performance beyond the text domain.
} 
\label{tab:imnet_performance_wocfg} 
\begin{tabular}{lccccccccc}
\toprule
\multirow{2}{*}{\textbf{Model}} & \multicolumn{4}{c}{\textit{FID$\downarrow$}} && \multicolumn{4}{c}{\textit{IS$\uparrow$}}  \\
 & \textbf{step 4} & \textbf{step 8}  & \textbf{step 16}  & \textbf{step 32} && \textbf{step 4} & \textbf{step 8}  & \textbf{step 16}  & \textbf{step 32} \\
\midrule
MDLM & 80.752 & 47.732 & 28.785 & 18.928 && 16.287 & 29.178 & 44.656 & 57.803 \\
GIDD & 86.842 & 54.933 & 35.403 & 24.588 && 14.559 & 24.297 & 35.698 & 46.376 \\
XDLM(k=1e-3) & 81.992 & 52.590 & 35.046 & 25.637 && 15.597 & 24.813 & 35.097 & 44.032 \\
\rowcolor{blue!10}
XDLM(k=0.1) & 54.085 & 34.109 & 25.774 & 22.187 && 24.829 & 36.964 & 43.903 & 48.118 \\
XDLM(k=0.5) & 45.808 & 29.362 & 23.417 & 21.486 && 29.310 & 40.482 & 46.620 & 48.911 \\
UDLM & 49.861 & 30.144 & 26.242 & 25.661 && 27.049 & 38.832 & 41.801 & 41.850 \\
\bottomrule
\end{tabular}

\end{table}

\begin{table}[H] 
\centering
\caption{
\textbf{Performance on ImageNet-1K with CFG (scale $= 2.0$).} The application of guidance shifts the optimal operating point. 
Here, $k=0.1$ emerges as the ``sweet spot,'' achieving an FID of 8.625 at 16 steps. 
This configuration surpasses both the pure UDLM baseline (8.980) and the higher-mixing variant ($k=0.5$), demonstrating the importance of tuning $k$ for specific guidance regimes.
} 
\label{tab:imnet_performance_cfg1.0} 
\begin{tabular}{lccccccccc}
\toprule
\multirow{2}{*}{\textbf{Model}} & \multicolumn{4}{c}{\textit{FID$\downarrow$}} && \multicolumn{4}{c}{\textit{IS$\uparrow$}}  \\
 & \textbf{step 4} & \textbf{step 8}  & \textbf{step 16}  & \textbf{step 32} && \textbf{step 4} & \textbf{step 8}  & \textbf{step 16}  & \textbf{step 32} \\
\midrule
MDLM & 33.468 & 11.144 & 6.725 & 7.485 && 54.740  & 119.150  & 172.664  & 203.722 \\
GIDD & 41.000 & 15.151 & 7.076 & 6.314 && 44.084  & 95.789  & 148.148  & 181.017 \\
XDLM(k=1e-3) & 32.876 & 12.872 & 7.612 & 7.588 && 53.351  & 102.601  & 144.019  & 167.660 \\
\rowcolor{blue!10}
XDLM(k=0.1) & 13.550 & 8.956 & 8.625 & 8.914 && 107.403  & 148.723  & 165.916  & 171.046 \\
XDLM(k=0.5) & 11.945 & 9.038 & 8.790 & 8.939 && 114.135  & 145.718  & 156.339  & 159.829 \\
UDLM & 14.055 & 9.718 & 8.980 & 8.650 && 97.859  & 123.582  & 132.099  & 132.108 \\
\bottomrule
\end{tabular}

\end{table}

\begin{table}[H] 
\centering
\caption{
\textbf{Class-conditional generation on CIFAR-10.} 
On this benchmark, where UDLM holds a distinct advantage, we observe a clear monotonic improvement in XDLM quality as $k$ increases. 
Increasing $k$ from $10^{-3}$ to $0.5$ dramatically closes the performance gap with UDLM (e.g., improving 32-step FID from 164.040 to 56.299), confirming that higher $k$ effectively aligns XDLM with uniform noise dynamics.
} 
\label{tab:cifar10_performance_wocfg}
\begin{tabular}{lccccccc}
\toprule
\multirow{2}{*}{\textbf{Model}} & \multicolumn{3}{c}{\textit{FID$\downarrow$}} && \multicolumn{3}{c}{\textit{IS$\uparrow$}}  \\
 & \textbf{step 32} & \textbf{step 128} & \textbf{step 512} && \textbf{step 32} & \textbf{step 128} & \textbf{step 512} \\
\midrule
MDLM & 211.983 & 69.151 & 33.961 && 2.685  & 5.615  & 6.801 \\
GIDD & 193.487 & 72.579 & 41.809 && 2.737  & 5.176  & 6.209  \\
XDLM(k=1e-3) & 164.040 & 50.638 & 29.079 && 3.427  & 6.223  & 7.079  \\
\rowcolor{blue!10}
XDLM(k=0.1) & 77.985 & 48.702 & 43.210 && 5.316  & 6.284  & 6.513  \\
XDLM(k=0.5) & 56.299 & 37.307 & 33.030 && 6.077  & 6.801  & 6.956  \\
UDLM & 41.027 & 27.822 & 25.144 && 6.892  & 7.345  & 7.372 \\
\bottomrule
\end{tabular}

\end{table}

\begin{table}[H] 
\centering
\caption{
\textbf{Performance of XDLM and baselines on image generation with CFG $scale=2.0$ on CIFAR-10.}
All mask-based methods (MDLM, GIDD, XDLM) show significant improvement when CFG is applied. The basic trend follows the results with no cfg.
} 
\label{tab:cifar10_performance_cfg1.0} 
\begin{tabular}{lccccccc}
\toprule
\multirow{2}{*}{\textbf{Model}} & \multicolumn{3}{c}{\textit{FID$\downarrow$}} && \multicolumn{3}{c}{\textit{IS$\uparrow$}}  \\
 & \textbf{step 32} & \textbf{step 128} & \textbf{step 512} && \textbf{step 32} & \textbf{step 128} & \textbf{step 512} \\
\midrule
MDLM & 181.453 & 51.113 & 22.762 && 3.329  & 6.969  & 8.078 \\
GIDD & 179.916 & 60.443 & 30.812 && 3.054  & 6.124  & 7.216 \\
XDLM(k=1e-3) & 130.106 & 35.980 & 19.877 && 4.559  & 7.575  & 8.186  \\
\rowcolor{blue!10}
XDLM(k=0.1) & 59.600 & 36.889 & 32.444 && 6.699  & 7.596  & 7.670 \\
XDLM(k=0.5) & 40.850 & 27.445 & 24.445 && 7.560  & 7.881  & 7.893 \\
UDLM & 30.328 & 20.720 & 18.592 && 8.098  & 8.392  & 8.433 \\
\bottomrule
\end{tabular}

\end{table}

\section{Detailed LLaDA Continual Pretraining Results}
\label{app:llada_continue_pretrain}
To empirically validate the efficacy of the XDLM objective within the LLaDA framework, 
we conducted a continual pre-training experiment utilizing the LLaDA 8B Base checkpoints. 
We integrated the XDLM modeling definitions into the transformers codebase and trained on a 10 billion token sample of the FineWeb-Edu dataset~\cite{penedo2024fineweb}. 
The training process spanned 600 steps with a global batch size of 512 and a sequence length of 4,096. 
The learning rate was warmed up to $2 \times 10^{-5}$ over the first 100 steps and held constant for the subsequent 500 steps, with the XDLM mixing ratio set to $0.1$. 
All experiments were conducted on a compute node equipped with $8 \times$ Nvidia H800 GPUs. 
To rigorously attribute performance gains to the architectural paradigm rather than confounding variables, we established two robust baselines:
LLaDA-MDLM, a control condition where the model is continual pre-trained for the same duration using the standard Masked Diffusion Language Model objective; 
and LLaDA-XDLM-infer, an ablation where the XDLM sampling strategy is applied to the base model without further training.

To deploy XDLM sampling within this architecture, we extended the standard low-confidence remasking strategy used in LLaDA. 
Algorithm \ref{alg:low-confidence-remask} details the generation logic, highlighting the novel contributions in the conditional block controlled by the mixing ratio $k$. 
While standard LLaDA sampling operates via a monotonic reduction of uncertainty, prioritizing only the filling of existing \texttt{[MASK]} tokens, our implementation introduces a \textit{refinement branch}. 
When $k > 0.0$, the algorithm calculates a `refine\_priority` by identifying tokens that are currently unmasked (`~is\_masked`) but for which the model predicts a conflicting token ID (`pred\_tokens != token\_ids`) with a confidence score exceeding the current token's confidence. 
These tokens are flagged in `update\_mask` alongside the standard empty masks. 
This mechanism fundamentally transforms the generation from a purely additive process into a dynamic error-correction loop, allowing the model to stochastically ``change its mind'' and revise previous outputs during the denoising trajectory.

The empirical results, summarized in Tab.~\ref{tab:llada_performance}, demonstrate the superiority of this formulation across reasoning (GSM8K, MATH, BBH) and code generation (HumanEval, MBPP) benchmarks, particularly in low-compute regimes. 
At 32 sampling steps, LLaDA-XDLM significantly outperforms all baselines. 
The most distinct advantage is observed in the MBPP code generation task, where LLaDA-XDLM achieves a score of 15.00, effectively doubling the performance of the base LLaDA (6.80) and the continued MDLM control (4.40). 
A granular analysis of the MBPP failure modes reveals that this improvement is driven by a fundamental enhancement in generative fidelity; LLaDA-XDLM drastically reduces the incidence of ``Failed'' cases, defined as non-compilable or syntactically invalid code, from 429 (LLaDA) to 304. 
By successfully converting these previously unproductive attempts into ``Passed'' outputs (75 vs. 34), the model confirms that the XDLM objective, combined with the refinement sampling strategy, provides the necessary structural coherence to self-correct syntactic errors that would otherwise lead to compilation failures. 
As sampling steps increase to 64, 128, and 256 (the total generation length), LLaDA-XDLM maintains a competitive edge, validating that the performance gains stem directly from the internal representation learned during XDLM training rather than being an artifact of the sampling algorithm alone.

Notably, we observe that the continual pretraining baseline (LLaDA-MDLM) exhibits a performance decline compared to the original LLaDA, likely due to the distribution shift between the tuning data and the original pretraining corpus. Consequently, the superior performance of LLaDA-XDLM over LLaDA-MDLM confirms that our gains stem from the effectiveness of the XDLM formulation rather than merely the additional 600 training steps.

Furthermore, LLaDA-XDLM significantly outperforms the ablation baseline LLaDA-XDLM-infer, demonstrating that the improvement is intrinsic to the learned model rather than being an artifact of sampling tricks.

\begin{algorithm}[H]
\scriptsize
\caption{Generation Strategy of LLaDA-XDLM}
\label{alg:low-confidence-remask}

\begin{lstlisting}
def generate(steps, seq_length, mask_id, topk_absorb, topk_uniform, k=0.1):
  """
  Args:
    steps (int): Total number of sampling steps.
    seq_length (int): Target sequence length to generate.
    mask_id (int): Vocabulary index of the [MASK] token.
    topk_absorb (list[int]): Schedule for the number of tokens to decode at each step.
    topk_uniform (list[int]): Schedule for the number of tokens to refine at each step.
    k (float): The mixing ratio.
  Returns:
    token_ids (torch.Tensor): The generated sequence of token IDs with shape (seq_length,).
  """
    token_ids = torch.full((seq_length,), fill_value=mask_id, dtype=torch.long)
    for i in range(steps):
        is_masked = (token_ids == mask_id)
        logits = model(token_ids).logits
        logits_with_noise = add_gumbel_noise(logits)
        pred_tokens = torch.argmax(logits_with_noise, dim=-1)
        probs = torch.softmax(logits, dim=-1)
        pred_confidence = torch.squeeze(torch.gather(probs, dim=-1, index=torch.unsqueeze(pred_tokens, -1)), -1)
        mask_priority = torch.where(is_masked, pred_confidence, -np.inf)
@@
        if k > 0.0:
            current_confidence = torch.squeeze(torch.gather(probs, dim=-1, index=torch.unsqueeze(token_ids, -1)), -1)
            can_be_refined = (~is_masked) & (pred_tokens != token_ids) & (pred_confidence >= current_confidence)
            refine_priority = torch.where(can_be_refined, pred_confidence, -np.inf)
@@
        update_mask = torch.zeros_like(pred_tokens, dtype=torch.bool)
        _, top_mask_indices = torch.topk(mask_priority, k=topk_absorb[i])
        update_mask[top_mask_indices] = True
@@
        if k > 0.0:
            _, top_refine_indices = torch.topk(refine_priority, k=topk_uniform[i])
            update_mask[top_refine_indices] = True
@@
        token_ids[update_mask] = pred_tokens[update_mask]

    return token_ids
\end{lstlisting}

\end{algorithm}

\begin{table}[H]
\centering
\caption{
    \textbf{Evaluation of LLaDA-XDLM on reasoning and code generation benchmarks.} 
    We compare the proposed method with the base LLaDA, an inference-only ablation (LLaDA-XDLM-infer), and an MDLM control (LLaDA-MDLM) across varying sampling steps. MBPP scores are decomposed into failure modes to illustrate structural correctness.
}
\label{tab:llada_performance}
\begin{tabular}{lcccccccc}
\toprule
\multirow{2}{*}{\textbf{Dataset}} & \multirow{2}{*}{\textbf{BBH}} & \multirow{2}{*}{\textbf{GSM8K}} & \multirow{2}{*}{\textbf{MATH}} & \multirow{2}{*}{\textbf{HumanEval}} & \multicolumn{4}{c}{\textbf{MBPP}} \\
&&&&& Score & Failed & Wrong & Pass \\
\midrule
\multicolumn{9}{c}{\textbf{32 steps / 256 tokens}} \\
\midrule
LLaDA & 42.68 & 24.49 & 4.80 & 5.49 & 6.80 & 429 & 37 & 34 \\
LLaDA-XDLM-infer & 41.74 & 24.26 & 4.86 & 1.22 & 5.40 & 438 & 35 & 27 \\
LLaDA-MDLM & 43.21 & 24.41 & 2.70 & 6.71 & 4.40 & 447 & 31 & 22 \\
\rowcolor{blue!10}
LLaDA-XDLM & 42.76 & 29.26 & 4.72 & 10.98 & 15.00 & 304 & 121 & 75 \\
\midrule
\multicolumn{9}{c}{\textbf{64 steps / 256 tokens}} \\
\midrule
LLaDA & 47.16 & 57.16 & 16.10 & 12.80 & 17.80 & 328 & 83 & 89 \\
LLaDA-XDLM-infer & 46.76 & 53.75 & 16.12 & 4.88 & 17.60 & 328 & 84 & 88 \\
LLaDA-MDLM & 48.32 & 54.97 & 13.42 & 14.63 & 13.40 & 348 & 84 & 67 \\
\rowcolor{blue!10}
LLaDA-XDLM & 45.61 & 57.85 & 16.48 & 15.85 & 23.60 & 197 & 182 & 118 \\
\midrule
\multicolumn{9}{c}{\textbf{128 steps / 256 tokens}} \\
\midrule
LLaDA & 47.49 & 67.93 & 26.12 & 24.39 & 27.60 & 161 & 199 & 138 \\
LLaDA-XDLM-infer & 47.89 & 66.41 & 26.70 & 8.54 & 27.40 & 187 & 174 & 137 \\
LLaDA-MDLM & 48.95 & 68.61 & 24.34 & 25.00 & 30.00 & 169 & 178 & 150 \\
\rowcolor{blue!10}
LLaDA-XDLM & 46.41 & 68.01 & 25.06 & 25.00 & 30.80 & 130 & 215 & 154 \\
\midrule
\multicolumn{9}{c}{\textbf{256 steps / 256 tokens}} \\
\midrule
LLaDA & 47.77 & 72.25 & 29.98 & 33.54 & 40.40 & 48 & 248 & 202 \\
LLaDA-XDLM-infer & 47.93 & 71.34 & 29.34 & 6.71 & 37.00 & 74 & 238 & 185 \\
LLaDA-MDLM & 49.84 & 72.93 & 29.70 & 32.32 & 37.60 & 51 & 260 & 188 \\
\rowcolor{blue!10}
LLaDA-XDLM & 46.07 & 73.16 & 29.22 & 31.71 & 34.60 & 90 & 236 & 173 \\
\bottomrule
\end{tabular}

\end{table}

\section{Detailed Computational Efficiency Analysis}
\label{app:speed_test}
We evaluate the computational efficiency of the proposed XDLM against baseline models by measuring speed and memory consumption across three distinct scenarios: the \textit{forward-only} case (representing NELBO calculation for perplexity estimation), the \textit{forward-backward} case (used in training), and the \textit{sampling} case (used for generation). 
For the forward and forward-backward evaluations, we utilize randomly generated sequences with a batch size of 32 and a sequence length of 1024 to simulate real data processing. 
To ensure robust measurements, models are warmed up for 10 iterations, and we report the mean value calculated over the subsequent 100 runs. 
For the sampling scenario, we employ standard ancestral sampling to generate sequences of length 1024 over 32 steps. 
The evaluation batch size is set to 32. 
Models are warmed up for 1 iteration, and we report the mean value calculated over the subsequent 10 runs.

As detailed in Tab.~\ref{tab:efficiency}, MDLM exhibits the highest throughput and lowest memory cost across all settings, a result expected due to the simplicity of its absorbing noise kernel. 
However, among models incorporating uniform noise distributions, XDLM achieves the highest throughput across the \textit{forward}, \textit{forward-backward}, and \textit{sampling} regimes. 
This performance is directly attributed to our scalar reformulated efficient sampling and training strategy, which circumvents expensive matrix operations. 
In contrast, while UDLM maintains the third-highest throughput in forward and training modes, its sampling throughput is the lowest among the comparison group. 
Conversely, GIDD achieves the third-highest sampling throughput but performs poorly in NELBO calculation, reaching only half the forward throughput of XDLM. 
This trend is mirrored in memory consumption. 
While MDLM is the most lightweight overall, XDLM ranks first among methods whose noise kernel includes a uniform noise component, effectively avoiding the high memory overhead observed in GIDD and UDLM.

\begin{table}[H]
    \centering
    \caption{
        \textbf{Computational Efficiency Comparison.} 
        We report throughput (tokens/s, $\uparrow$) and peak memory usage (GB, $\downarrow$) for MDLM, GIDD, XDLM, and UDLM. The metrics cover three scenarios: \textit{forward} (inference/perplexity estimation), \textit{forward-backward} (training), and \textit{sample} (generation). Leveraging the scalar reformulated strategy, XDLM achieves highly competitive throughput and memory efficiency, outperforming other baselines incorporating uniform noise kernels (GIDD, UDLM) in most metrics.
    }
    \label{tab:efficiency}
    \begin{tabular}{lccccccc}
\toprule
\multirow{2}{*}{\textbf{Model}} & \multicolumn{3}{c}{Throughput (token/s) $\uparrow$} && \multicolumn{3}{c}{Memory (GB) $\downarrow$} \\
& \textbf{forward} & \textbf{forward-backward} & \textbf{sample} && \textbf{forward} & \textbf{forward-backward} & \textbf{sample} \\
\midrule
MDLM & 424294 & 141990 & 8789 && 6.285 & 35.354 & 18.848 \\
GIDD & 199516 & 95395 & 6336 && 25.131 & 51.060 & 40.856 \\
\rowcolor{blue!10}
XDLM (k=0.1) & 396398 & 137372 & 7108 && 18.850 & 41.634 & 31.414 \\
UDLM & 370952 & 142276 & 2882 && 18.850 & 44.777 & 59.683 \\
\bottomrule
\label{table:comp_effi}
\end{tabular}

\end{table}

\section{Detailed Training Dynamics}
\label{app:training_dynamics}
In this section, we provide the comprehensive numerical data corresponding to the training dynamics analysis discussed in Sec.~\ref{sec:performance_crossover} and visualized in Fig.~\ref{fig:gen_cross_lm1b}. These tables supplement the visual analysis by providing exact metric values for Perplexity, Entropy, FID, and IS across the training trajectory. 

Tab.~\ref{tab:lm1b_train_dynamic} details the training dynamics on the LM1B benchmark over the course of 1 million training steps. 
The models were evaluated using a fixed budget of 128 sampling steps. 
As indicated in the main text, while MDLM exhibits superior performance in the initial phase, UDLM demonstrates strong scaling in later training stages, achieving the lowest perplexity among baselines by 1M steps (96.385). 
Furthermore, the table details the impact of varying the mixing ratio $k$ within our proposed XDLM method. 

Complementing the text generation results, Tab.~\ref{tab:imnet_train_dynamic} presents the quantitative training dynamics for image generation on ImageNet-1K. Evaluations were conducted over 500k training steps using a fixed budget of 16 sampling steps. The numerical data corroborates the visual trends: XDLM establishes and maintains a decisive performance advantage. Notably, XDLM with $k=0.5$ achieves the lowest reported FID of 23.417 and the highest IS of 46.620 at the final checkpoint, consistently outperforming the MDLM and GIDD baselines. While UDLM starts with a higher FID, it converges rapidly to a competitive score (26.242), yet XDLM remains the leading model throughout the majority of the training phases.

\begin{table}[H] 
\small
\centering
\caption{
\textbf{Quantitative training dynamics on the LM1B benchmark, supplementing the visual analysis in Fig.~\ref{fig:gen_cross_lm1b}.} 
We report Perplexity ($\downarrow$) and Entropy ($\uparrow$) over 1M training steps, utilizing a fixed budget of 128 sampling steps. 
The table details the impact of varying the mixing ratio $k$ in XDLM compared to MDLM, GIDD, and UDLM baselines.
} 
\label{tab:lm1b_train_dynamic} 
\begin{tabular}{lcccccccccc}
\toprule
\textbf{Train Steps} & \textbf{100k} & \textbf{200k} & \textbf{300k} & \textbf{400k} & \textbf{500k} & \textbf{600k} & \textbf{700k} & \textbf{800k} & \textbf{900k} & \textbf{1M} \\
\midrule
\multicolumn{11}{c}{\textit{Perplexity $\downarrow$}} \\
\midrule
MDLM & 113.218 & 107.864 & 106.735 & 105.839 & 104.594 & 104.106 & 103.559 & 103.174 & 102.769 & 102.142 \\
GIDD & 107.366 & 103.473 & 103.386 & 101.854 & 102.367 & 101.745 & 101.904 & 101.957 & 101.490 & 101.211 \\
XDLM(k=1e-3) & 112.628 & 108.381 & 106.406 & 104.913 & 105.156 & 103.817 & 103.624 & 102.731 & 103.638 & 103.544 \\
\rowcolor{blue!10}
XDLM(k=0.1) & 119.632 & 110.803 & 109.017 & 106.265 & 105.428 & 104.572 & 103.645 & 103.348 & 102.223 & 101.983 \\
XDLM(k=0.5) & 117.846 & 111.024 & 108.130 & 105.955 & 104.953 & 103.048 & 102.464 & 101.040 & 102.015 & 100.754 \\
UDLM & 116.021 & 107.800 & 104.508 & 102.466 & 101.480 & 99.958 & 99.660 & 98.147 & 97.503 & 96.385 \\
\midrule
\multicolumn{11}{c}{\textit{Entropy $\uparrow$}} \\
\midrule
MDLM & 4.350 & 4.348 & 4.350 & 4.347 & 4.349 & 4.349 & 4.348 & 4.349 & 4.349 & 4.347 \\
GIDD & 4.352 & 4.350 & 4.351 & 4.349 & 4.351 & 4.349 & 4.350 & 4.350 & 4.351 & 4.348 \\
XDLM(k=1e-3) & 4.351 & 4.347 & 4.349 & 4.347 & 4.350 & 4.347 & 4.347 & 4.349 & 4.351 & 4.351 \\
\rowcolor{blue!10}
XDLM(k=0.1) & 4.353 & 4.347 & 4.347 & 4.344 & 4.344 & 4.344 & 4.346 & 4.344 & 4.344 & 4.346 \\
XDLM(k=0.5) & 4.350 & 4.346 & 4.345 & 4.344 & 4.342 & 4.343 & 4.342 & 4.343 & 4.344 & 4.342 \\
UDLM & 4.349 & 4.343 & 4.341 & 4.340 & 4.338 & 4.336 & 4.338 & 4.336 & 4.335 & 4.336 \\
\bottomrule
\end{tabular}

% \midrule
% \multicolumn{9}{c}{\textit{Perplexity $\downarrow$}} \\
% \multicolumn{9}{c}{\textit{Entropy $\uparrow$}} \\
% \midrule
\end{table}

\begin{table}[H] 
\centering
\caption{
\textbf{Quantitative training dynamics on ImageNet-1K generation}, corresponding to the curves in Fig.~\ref{fig:gen_cross_lm1b}. 
We report FID ($\downarrow$) and IS ($\uparrow$) over 500k training steps, utilizing a fixed budget of 16 sampling steps.
The numerical data corroborates the visual trends: XDLM establishes and maintains a decisive performance advantage. 
% Notably, XDLM with $k=0.5$ achieves the lowest reported FID of 23.417 and the highest IS of 46.620 at the final checkpoint, consistently outperforming the MDLM and GIDD baselines.
} 
\label{tab:imnet_train_dynamic} 
\begin{tabular}{lccccccccccc}
\toprule
\multirow{2}{*}{\textbf{Train Steps}}
& \multicolumn{5}{c}{\textit{FID$\downarrow$}} && \multicolumn{5}{c}{\textit{IS$\uparrow$}}  \\

& \textbf{100k} & \textbf{200k} & \textbf{300k} & \textbf{400k} & \textbf{500k} && \textbf{100k} & \textbf{200k} & \textbf{300k} & \textbf{400k} & \textbf{500k} \\
\midrule
MDLM & 40.060 & 33.723 & 31.090 & 29.436 & 28.785 && 28.978  & 36.136  & 39.804  & 42.893  & 44.656 \\
GIDD & 42.152 & 36.878 & 35.200 & 35.677 & 35.403 && 27.337  & 33.042  & 35.544  & 35.173  & 35.698\\
XDLM(k=1e-3) & 45.381 & 38.967 & 36.061 & 36.287 & 35.046 && 24.967  & 30.512  & 33.948  & 34.031  & 35.097 \\
\rowcolor{blue!10}
XDLM(k=0.1) & 38.358 & 31.464 & 27.524 & 26.853 & 25.774 && 27.949  & 35.839  & 41.150  & 42.222  & 43.903 \\
XDLM(k=0.5) & 37.567 & 30.098 & 26.687 & 23.790 & 23.417 && 27.854  & 35.682  & 40.724  & 45.976  & 46.620 \\
UDLM & 43.316 & 34.525 & 30.381 & 28.086 & 26.242 && 23.619  & 31.188  & 35.963  & 38.588  & 41.801\\
\bottomrule
\end{tabular}

\end{table}

%%%%%%%%%%%%%%%%%%%%%%%%%%%%%%%%%%%%%%%%%%%%%%%%%%%%%%%%%%%%%%%%%%%%%%%%%%%%%%%
%%%%%%%%%%%%%%%%%%%%%%%%%%%%%%%%%%%%%%%%%%%%%%%%%%%%%%%%%%%%%%%%%%%%%%%%%%%%%%%

\end{document}

% This document was modified from the file originally made available by
% Pat Langley and Andrea Danyluk for ICML-2K. This version was created
% by Iain Murray in 2018, and modified by Alexandre Bouchard in
% 2019 and 2021 and by Csaba Szepesvari, Gang Niu and Sivan Sabato in 2022.
% Modified again in 2023 and 2024 by Sivan Sabato and Jonathan Scarlett.
% Previous contributors include Dan Roy, Lise Getoor and Tobias
% Scheffer, which was slightly modified from the 2010 version by
% Thorsten Joachims & Johannes Fuernkranz, slightly modified from the
% 2009 version by Kiri Wagstaff and Sam Roweis's 2008 version, which is
% slightly modified from Prasad Tadepalli's 2007 version which is a
% lightly changed version of the previous year's version by Andrew
% Moore, which was in turn edited from those of Kristian Kersting and
% Codrina Lauth. Alex Smola contributed to the algorithmic style files.